\documentclass[sigconf]{aamas} 

\usepackage{comment}
\usepackage{todonotes}
\usepackage{algorithm}
\usepackage{algorithmic}
\usepackage{graphicx}
\usepackage{amsthm} 
\usepackage{amsmath}
\usepackage{subcaption}
\usepackage{multirow}
\usepackage{enumitem} 
\usepackage{algorithm}
\usepackage{algorithmic}
\usepackage{amsmath}
\usepackage{graphicx}
\usepackage{xcolor}
\usepackage{cancel}
\usepackage{ifthen}
\usepackage{balance} 

\setlist[itemize]{leftmargin=3.5mm,labelindent=4mm}
\setlist[description]{leftmargin=0mm,labelindent=4mm}
\setlist[enumerate]{leftmargin=5mm,labelindent=-5mm}

\newcommand{\version}{extended}

\newcommand{\colored}{no} 

\newcommand{\black}[1]{\textcolor{black!100}{#1}}

\ifthenelse{\equal{\colored}{yes}}
{
\newcommand{\red}[1]{\textcolor{red!100}{#1}}   
\newcommand{\blue}[1]{\textcolor{blue!100}{#1}}
}
{
\newcommand{\red}[1]{\textcolor{black!100}{#1}} 
\newcommand{\blue}[1]{\textcolor{black!100}{#1}}
}

\doi{KRAP9309}

\makeatletter
\gdef\@copyrightpermission{
  \begin{minipage}{0.2\columnwidth}
   \href{https://creativecommons.org/licenses/by/4.0/}{\includegraphics[width=0.90\textwidth]{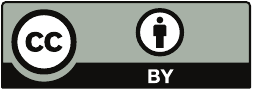}}
  \end{minipage}\hfill
  \begin{minipage}{0.8\columnwidth}
   \href{https://creativecommons.org/licenses/by/4.0/}{This work is licensed under a Creative Commons Attribution International 4.0 License.}
  \end{minipage}
  \vspace{5pt}
}
\makeatother

\setcopyright{ifaamas}
\acmConference[AAMAS '26]{Proc.\@ of the 25th International Conference
on Autonomous Agents and Multiagent Systems (AAMAS 2026)}{May 25 -- 29, 2026}
{Paphos, Cyprus}{C.~Amato, L.~Dennis, V.~Mascardi, J.~Thangarajah (eds.)}
\copyrightyear{2026}
\acmYear{2026}
\acmDOI{}
\acmPrice{}
\acmISBN{}

\acmSubmissionID{fp1772} 

\ifthenelse{\equal{\version}{extended}}
{
\title[Constrained ABA Frameworks]{Constrained Assumption-Based Argumentation Frameworks
(Extended Version with Proofs)}
}
{
\title[Constrained ABA Frameworks]{Constrained Assumption-Based Argumentation Frameworks}
}

\author{Emanuele De Angelis}
\affiliation{
  \institution{CNR-IASI}
  \city{Rome}
  \country{Italy}}
\email{emanuele.deangelis@iasi.cnr.it}

\author{Fabio Fioravanti}
\affiliation{
  \institution{University of Chieti - Pescara}
  \city{Pescara}
  \country{Italy}}
\email{fabio.fioravanti@unich.it}

\author{Maria Chiara Meo}
\affiliation{
  \institution{University of Chieti - Pescara}
  \city{Pescara}
  \country{Italy}}
\email{mariachiara.meo@unich.it}

\author{Alberto Pettorossi}
\affiliation{
  \institution{University of `Tor Vergata'}
  \city{Rome}
  \country{Italy}}
\email{pettorossi@info.uniroma2.it}

\author{Maurizio Proietti}
\affiliation{
  \institution{CNR-IASI}
  \city{Rome}
  \country{Italy}}
\email{maurizio.proietti@iasi.cnr.it}

\author{Francesca Toni}
\affiliation{
  \institution{Imperial}
  \city{London}
  \country{United Kingdom}}
\email{ft@imperial.ac.uk}

\begin{abstract}
Assumption-based Argumentation (ABA) is a well-established form of structured argumentation. 
ABA frameworks with an underlying atomic language are widely studied, but their applicability is limited by a representational  restriction to ground (variable-free)
arguments and attacks built from propositional atoms. 
 In this paper, we lift this restriction and propose a novel notion of \emph{constrained ABA} (\emph{CABA}), whose components, as well as arguments built from them, may include constrained variables, ranging over possibly infinite domains.
 We define non-ground semantics for CABA, in terms of various notions of non-ground attacks.
 We show that the new semantics conservatively generalise standard ABA semantics.
\end{abstract}

\keywords{Assumption-based Argumentation; Non-ground Argumentation; Constraints}

\def\FAtt{\mathit{FAtt}}
\def\Arg{\mathit{Arg}}
\def\Args{\mathit{Args}}
\def\MGCArg{\red{\mathit{MGCArg}}}
\def\MGCArgFA{\red{\mathit{MGCArg}(\mathit{FA})}}
\def\TCArg{\red{\mathit{TCArg}}}
\def\CArg{\red{\mathit{CArg}}}
\def\MGCArg{\red{\mathit{MGCArg}}}

\def\GrCInst{\red{\mathit{GrCInst}}}

\def\GroundCArg{\GrCInst(\CArg)}  

\def\IArgs{IArgs}
\def\Att{\mathit{Att}}

\def\contrary{\overline{ \vrule height 5pt depth 3.5pt width 0pt \hskip0.5em\kern0.4em}}
\newcommand{\cabaf}{\ensuremath{\langle \mathcal{L}_c, \, \mathcal{C},  \, \mathcal{R}, \, \ctheory, \,  \mathcal{A},\, \contrary}\rangle}

\newcommand{\abaf}{\ensuremath{\langle \lang, \, \rules, \, \asms,\, \contrary\rangle}}

\newcommand{\abafpone}{\ensuremath{\langle \lang, \, {\mathcal R}', \, {\mathcal A},\, \overline{ \vrule height 5pt depth 3.5pt width 0pt \hskip0.5em\kern0.4em}\rangle}}
\newcommand{\abafp}{\ensuremath{\langle {\mathcal L}', \, {\mathcal R}', \, {\mathcal A}',\, \overline{ \vrule height 5pt depth 3.5pt width 0pt \hskip0.5em\kern0.4em}'\rangle}}
\newcommand{\gabaf}{\ensuremath{\langle {\mathcal L}_{cg}, \, {\mathcal R}_g, \, {\mathcal A}_g,\, \overline{\vrule height 5pt depth 3.5pt width 0pt \hskip0.5em\kern0.4em}^g\rangle}}
\newcommand{\gabafb}{\ensuremath{\langle {\mathcal L}_{cg}, \, {\mathcal R}_g, \, {\mathcal A}_g,$ $ \overline{\vrule height 5pt depth 3.5pt width 0pt \hskip0.5em\kern0.4em}^g\rangle}}
\newcommand{\suB}{\raisebox{4pt}{$\scriptscriptstyle B$}}
\newcommand{\suF}{\raisebox{4pt}{$\scriptscriptstyle F$}}
\newcommand{\abaF}{\ensuremath{\langle {\mathcal L}_{F}, \, {\mathcal R}_F, \, {\mathcal A}_F,\, \overline{ \vrule height 5pt depth 3.5pt width 0pt \hskip0.5em\kern0.4em}{\suF}\rangle}}
\newcommand{\abaB}{\ensuremath{\langle {\mathcal L}_{B}, \, {\mathcal R}_B, \, {\mathcal A}_B,\, \overline{ \vrule height 5pt depth 3.5pt width 0pt \hskip0.5em\kern0.4em}{\suB}\rangle}}
\newcommand{\If}{\ensuremath{\leftarrow}}
\newcommand{\vars}{\ensuremath{
V}}
\newcommand{\termdom}{\ensuremath{
T}}
\newcommand{\preds}{\ensuremath{\mathcal{P}}}
\newcommand{\constr}{\ensuremath{\mathcal{C}}}
\newcommand{\constrfo}{\ensuremath{\mathcal{K}}}
\newcommand{\ctheory}{\ensuremath{\mathcal{C\hspace*{-.5pt}T}}}
\newcommand{\lang}{\ensuremath{\mathcal{L}}}
\newcommand{\langc}{\ensuremath{\mathcal{L}_c}}

\newcommand{\asms}{\ensuremath{\mathcal{A}}}
\newcommand{\rules}{\ensuremath{\mathcal{R}}}

\newcommand{\argur}[3]{#1 \vdash_{#3} #2} 
\newcommand{\argu}[2]{#1 \!\vdash\! #2} 
\newcommand{\ruleset}{\ensuremath{R}}
\newcommand{\tuple}[1]{\ensuremath{\mathsf{#1}}}
\newcommand{\sent}{\ensuremath{s}}

\newcommand{\asm}{\ensuremath{a}}
\newcommand{\asmset}{\ensuremath{S}}
\newcommand{\argset}{\ensuremath\mathit{Arg}}

\newcommand{\A}{\mathcal{A}}
\newcommand{\LL}{\mathcal{L}}
\newcommand{\RR}{\mathcal{R}}
\newcommand{\ABA}{\langle \LL,\RR,\A,\contrary\rangle}
\newcommand{\sABA}{\langle\RR,\A,\contrary\rangle}

\newcommand{\vdashacc}{\ensuremath{\models}}
\newcommand{\entails}{\ensuremath{\sqsupseteq}}
\newcommand{\mpnote}[1]{\noindent \textcolor{brown!100}{(Maurizio)} \textcolor{brown!100}{#1}}
\newcommand{\mpfootnote}[1]{\footnote{ \textcolor{brown!100}{(Maurizio)} \textcolor{brown!100}{#1}}}
\newcommand{\ftnote}[1]{\noindent \textcolor{orange!100}{(Francesca)} \textcolor{orange!100}{#1}}
\newcommand{\cmfootnote}[1]{\footnote{ \textcolor{green!100}{(Chiara)} \textcolor{green!90}{#1}}}
\newcommand{\cmnote}[1]{\noindent \textcolor{green}{(Chiara)} \textcolor{green!90}{#1}}

\newcommand{\adpnote}[1]{\noindent \textcolor{magenta!70}{\rm{(Alberto)}} \textcolor{magenta!70}{\rm{#1}}}

\newcommand{\ff}[1]{\noindent \textcolor{red!60}{\rm{(FF)}} \textcolor{red!60}{\rm{#1}}}

\begin{document}


\pagestyle{fancy}
\fancyhead{}


\maketitle 


\section{Introduction}\label{sec:Intro} 
Assumption-based Argumentation (ABA)~\cite{ABA,ABAbook,ABAtutorial,ABAhandbook} is 
a well-established form of structured argumentation, whereby arguments are deductions for claims, 
supported by rules and assumptions, and attacks between arguments are generated whenever the claim
of the attacking argument is contrary to an assumption supporting the attacked argument.
The semantics of ABA frameworks is 
defined in terms of various notions of 
\emph{acceptable extensions}, amounting to sets of assumptions/arguments that can be 
deemed to defend themselves against attacks, in some sense.

ABA has been advocated as a unifying framework for various forms of non-monotonic reasoning~\cite{ABA}, including logic programming with negation as failure under several semantics (where assumptions are negation as failure literals of the form $\mathit{not}\,p$, and the contrary of $\mathit{not} \, p$ is $p$). 
The logic programming instance of ABA 
finds
 application in several domains, including  legal reasoning~\cite{DBLP:conf/comma/DungT10},
 planning~\cite{DBLP:conf/prima/Fan18a}, healthcare~\cite{DBLP:conf/atal/ZengSCLWCM20,DBLP:journals/argcom/CyrasOKT21,DBLP:conf/ratio/SkibaTW24}, and causal discovery~\cite{russo2024argumentativecausaldiscovery}.
However, 
while other instances of ABA, such as the one to capture default logic, 
rely upon
first-order logic as the underlying deductive system~\cite{ABA}, in the logic programming instance
sentences 
making up rules, assumptions and their 
contraries need to be \emph{propositional} atoms. 

In this paper we propose 
\emph{Constrained ABA $($CABA$)$},
dropping this restriction, as
illustrated 
next in a legal setting.

\ifthenelse{\equal{\version}{extended}}
{
\vspace*{-1mm}
}
{
}

\begin{example}\label{ex:motive}
Let us consider the following CABA rules 
modelling an AI system designed to support a tax department employee to decide whether or not 
a person 
should pay tax.

\vspace*{.5mm}
\noindent R1. $\textit{must\_pay\_tax}(P) \If \textit{income}(P,I),\ I\!\geq\!0,\ \textit{nonexempt}(P)$

\noindent R2. $\textit{exempt}(P) \If \textit{income}(P,I),\ I\!\geq\!0,\ I\!\leq\! 16,000,\  \textit{salary\_income}(P)$

\noindent R3. $\textit{other\_incomes}(P) \If \textit{foreign\_income}(P,F),\ F\!\geq\! 10,000$
\vspace*{.5mm}

These rules serve as templates, with variables $P, I, F$ standing as placeholders for any values, 
in the spirit of argument schemes with critical questions~\cite{waltonbook,waltonhandbook}  and
logic programming (where variables are implicitly universally quantified 
at the front of each rule). 
Rule R1 states 
that anyone is required to pay taxes if he/she has an 
income and has no exemption (formalised 
with assumption $\textit{nonexempt}(P)$, 
with contrary $\textit{exempt}
(P)$).
The exemption 
applies to people having as 
income only their salary (formalised with assumption 
$\textit{salary\_income}(P)$, with contrary $\textit{other\_}\textit{in}\-\textit{comes}(P)$), provided that it is not 
above the 
threshold of 16,000 (in some currency).
Having other sources of income 
 is then specified by 
 Rule R3,
 which states that any income from a foreign country above 10,000 
should be considered 
additional income.
To represent 
facts 
about individuals
in ABA, the rules, assumptions, and contraries should be grounded
.  
Note that 
this may not be possible if the individuals 
to whom the rules apply
are not known up-front.
Without knowing those individuals, while 
grounding may be practicable (yet inefficient) for Rule R2,
it would require 
fixing an upper bound on both the income~$I$ in  Rule R1 and the foreign income~$F$ in the Rule R3, which may not be feasible or desirable in practice.
Constraints, such as $I \!\geq\! 0$, in 
CABA rules make it possible to avoid considering all ground instances altogether.
\end{example}

\vspace*{-1mm}
As illustrated by Example~\ref{ex:motive}, 
CABA frameworks 
may include constrained variables, ranging over possibly infinite domains.
From a practical perspective, CABA frameworks enable the use of specific constraint domains and corresponding solvers within the declarative paradigm of ABA. From a theoretical perspective, this approach offers a unified framework for capturing and analysing the use of constraints alongside ABA.



{\it{Contributions}}. 

\noindent (1)  We define the novel concept of CABA frameworks (Sect.~\ref{sec:CABAF}) and notions 
of non-ground constrained 
arguments (Sect.~\ref{sec:CArg}), attacks (Sect.~\ref{sec:Att}) and extension-based semantics for CABA (Sect.~\ref{sec:sem}), focusing on conflict-free, admissible, and stable extensions.

\noindent (2)
We show (Sect.~\ref{sec:ground}) that the new notions amount to the standard ABA notions after grounding (as well as conservatively generalising 
those in standard ABA).

\noindent (3)
We define (Sect.~\ref{sec:non-ground}) novel notions of extensions for CABA, without explicit reference to grounding 
of standard ABA, and we identify 
conditions on the 
theory of constraints for which they are equivalent to the ones obtained by grounding. These new notions allow us to effectively 
construct finite non-ground CABA extensions also in cases where their ground counterpart are indeed infinite.

\ifthenelse{\equal{\version}{extended}}
{
{This paper is an extended version of \cite{CABA-AAMAS26}
and includes an Appendix with the proofs of all results.}
}
{
\red{The proofs of all results not included in the paper are in the extended version available at \url{http://arxiv.org/abs/2602.13135}.}
\vspace*{2mm}
}

\vspace*{-1mm}

\section{Related work}
Various ABA instances have been considered, to capture existing non-monotonic formalisms~\cite{ABA}. 
Some of these instances, notably corresponding to default logic and circumscription, rely upon first-order logic as the underlying deductive system and can thus be deemed non-ground/non-propositional.
Rather than identifying a new instance of ABA which integrates reasoning with constrained variables, we define a novel framework, CABA, singling out reasoning with variable instances and constraints.
CABA restricts attention to 
atomic languages, in the spirit of logic programming instances of ABA~\cite{ABA}, and integrates a stand-alone constraint solver in the spirit of Constraint Logic Programming (CLP)~\cite{JaL87,JaM94}.

Our CABA frameworks share some representational characteristics with CLP with negation as failure~\cite{JaM94,Clark77}. However, our focus is on the definition of argumentative semantics rather than top-down procedural machinery (e.g. SLDNF-variants with constraints
) as in much CLP research~\cite{JaL87,Stuckey91,Stuckey95}.


Of particular relevance to our work is Answer Set Programming (ASP) with constraints, s(CASP)~\cite{scasp18}, given its use of stable models semantics related to stable extensions for CABA. But s(CASP) too has a procedural, top-down spin. 
Other proposals for integrating reasoning with constraints and ASP, notably~\cite{CabalarKOS16}, 
focus on a semantic angle via the use of the Here and There logic,  but are restricted to stable models whereas we can accommodate other semantics (admissible extensions in this paper).

Frameworks for Abductive Logic Programming with constraints \cite{ciff,sciff} are also related to our work, but they focus on the completion semantics for logic programming~\cite{Clark77,Kunen87}
rather than variants of the stable and admissible extension semantics for ABA~\cite{ABA} as we do.

Within argumentation, argument schemes~\cite{waltonbook,waltonhandbook} are templates for arguments and may thus be seen as a form of non-ground argumentation, but lack a semantics. 
%
Also,
non-ground variants of ABA are used in ABA Learning~\cite{ABAlearning-ecai24}, but as templates, in the spirit of argument schemes. Also, ABA Learning cannot accommodate constraints. 
Finally, 
DeLP~\cite{DeLP} is another form of 
argumentation relying upon logic programming notions, and equipped with a procedural semantics accommodating, like in CABA, rule schemata and constraints therein.
However, differently from DeLP, our CABA adopts
extension-based semantics in the spirit of \cite{Dung_95}.

\ifthenelse{\equal{\version}{extended}}
{
\vspace*{-2.mm}
}
{
\vspace*{-4mm}
}

\section{Background} 
\label{sec:background}

\paragraph{First-order languages.}

A first order language is a set of first order formulas
constructed, as usual
~\cite{Men97}, from given sets of variables, function symbols,
and predicate symbols.
\emph{Atomic languages} are first order languages
whose formulas are \emph{atomic formulas} only. Those atomic formulas, also called \emph{atoms}, are 
made out of predicate symbols applied to terms.

We use capital letters, e.g. $X
$, to denote variables, lower-case letters, e.g. $t
$, to denote terms, 
and sans-serif letters, e.g. $\tuple{X}
$ and 
$\tuple{t}
$, to denote tuples of 
variables and tuples of 
terms, respectively.

A \emph{substitution}
$\vartheta$ is a total function 
mapping $n\ (\geq\!0)$ distinct variables $X_1, \ldots, X_n$ to $n$ terms
$t_1, \ldots, t_n$, and it is denoted by
$\{X_{1}/t_{1},\ldots,$ $X_{n}/t_{n}\}$ or simply $\{\tuple{X}/\tuple{t}\}$. We assume that each $t_i$ is distinct from~$X_i$.
A \emph{variable renaming} (or \emph{renaming}, for 
short)
is a substitution $\{\tuple{X}/\tuple{Y}\}$,
where \tuple{X} and \tuple{Y} are tuples of variables 
and 
\tuple{Y} is a permutation of~\tuple{X}~\cite{Apt90}.
%
%
Given a term~$t$, we denote by $\mathit{vars}(t)$ the set of variables occurring in $t$.
Given a substitution~$\vartheta $ and a term~$t$, an 
\emph{instance of}~$t$ (\emph{via} $\vartheta$), denoted~$t\vartheta $, is the term obtained
by replacing each variable occurrence~$X$ in~$t$ 
by~$\vartheta(X)$.
If 
$\vartheta$ is a renaming, 
then $t\vartheta$
is a \emph{variable renaming} (or \emph{renaming}, for short)
\emph{of} $t$ (\emph{via}~$\vartheta$).
A term~$t$ is  \emph{ground} if $\mathit{vars}(t)\! =\! \emptyset$ and, 
if $\mathit{vars}(t\vartheta)\! =\! \emptyset$, then the 
substitution~$\vartheta$ is said to be
\emph{grounding}.
\ifthenelse{\equal{\version}{extended}}
{
}
{

}
We lift the function $\mathit{vars}$ and 
the notions of `being an instance of', `being a renaming of', 
and `being ground' to tuples, sets, and all the other 
syntactic constructs 
used in this paper.

%

Given a set $V\!=\!\{X_1,\ldots,X_n\}$ of variables and a formula $B$, $\exists _V B$ denotes the 
formula $\exists X_1\ldots\exists X_n\,B$. The \emph{existential closure of} $B$, denoted $\exists(B)$, is 
$\exists_{
V}B$, where $V$ is the set of variables occurring in $B$ outside the scope of a quantifier. The notation 
$\exists _{-V} B$ 
is a shorthand for $\exists _{\mathit{vars(B)} \setminus V} B$.
Similar terminology and  notations are used for $\forall$.

We say that an atomic language $\lang$ is \black{{\emph{predicate closed}}}
if, for every predicate $p$ of arity $n$ in $\lang$ and 
$n$-tuple
\tuple{t} of terms in $\lang$, 
$p(\tuple{t})$ is in $\lang$. 

\vspace{1mm}
\emph{Deductive systems.}
ABA (as well as our constrained version thereof) uses a \emph{deductive system} $(\lang,\rules)$ consisting of 
a language~$\lang$ (not necessarily first-order, in general)
and a 
set~$\rules$ of (inference) rules of the form
$\sent_0 \leftarrow \sent_1,\ldots, \sent_m $ (with $m\! \ge\! 0$ and, for all $i$, with $0\!\leq\! i\! \leq\! m$, 
$\sent_i\! \in\! \lang$);
$\sent_0$ and the sequence $\sent_1,\ldots, \sent_m$ are called 
the {\em head} and the \emph{body} of the rule, respectively.
If $m\!=\!0$ then the rule is represented as  $\sent_0 \If$
and it is called a \emph{fact}.
With an abuse of terminology, we state that every rule $\sent_0 \!\leftarrow\! \sent_1,\ldots, \sent_m$
in~$\rules$ belongs to $\LL$.

\vspace{1mm}
\emph{Theories of constraints.}
We consider 
a \emph{
theory of constraints}, denoted $\ctheory$, which 
is based on
a first order language $\constrfo$
whose  atomic formulas 
$\constr$
are called \black{\emph{atomic constraints} (or \emph{constraints}, for short).}
Constraints are over an algebraic structure with domain $D$.
$\ctheory$ is
equipped with 
a notion of validity: we write $\ctheory \models \varphi$ to denote that the formula 
$\varphi$ of $\constrfo$ is \emph{valid} 
in  $\ctheory$. 
A set $\{c_1,\dots,c_n\}\subseteq 
\constr$ of constraints is said to be \emph{consistent} 
if $\ctheory \models \exists 
(c_1\wedge \ldots \wedge c_n)$.
We assume that $\ctheory$ is a first order
theory with equality, which is interpreted as 
the identity on $D$
and includes the \emph{Clark 
Equality Theory} axiomatising
the identity between ground terms~\cite{Lloyd87}.
Examples of $\ctheory$'s are the 
linear integer arithmetic (LIA) and 
the linear rational arithmetic (LRA).

\emph{Assumption-based argumentation $($ABA$)$.}
An {\em ABA framework} \cite{ABA} (presented here following more recent papers \cite{ABAbook,ABAtutorial,ABAhandbook}) is a 4-tuple \abaf,
where:

\noindent 
$\bullet\ \langle \lang, \rules\rangle$ is a 
deductive system; 

\noindent 
$\bullet\ \asms$ $\subseteq$ $\lang$ is a non-empty
set
of {\em assumptions};\footnote{The non-emptiness requirement
 can always be satisfied by including in $\A$ a \emph{bogus assumption}, with its own contrary, neither occurring elsewhere in the ABA framework. 
For 
conciseness, we will not write this assumption and its contrary explicitly.} 

\noindent \hangindent=3mm
$\bullet\ \contrary$ is a\,total mapping from $\asms$ into
 $\lang$,\,where $\overline{\asm}$ is called the {\em contrary} of $\asm$, 
 for $\asm \in \asms$.

\noindent
An ABA framework is  {\em flat}
if assumptions are not heads of rules.

\vspace*{-1mm}

\begin{example}
\label{ex:simpleABA}
\!$\abaB$ is\,an ABA framework, call it $B$:

\noindent
$\bullet\ \LL_B=\{p(i),\ q(i),\ r(i),\ a(i),\ b(i)\ |\ i\!=\!1,2\}$;

\noindent
$\bullet\ \RR_B\!=\!\{p(\!1) \!\If\!a(\!1), \mathit{q}(\!1)\!\If\! b(\!1),  r(\!1) \!\If,\ 
p(2)\! \If\!a(2), q(2)\!\If \!b(2)\};$

\noindent
$\bullet\ \A_B=\{a(\!1),\ b(\!1),\ a(2),\ b(2)
    \}$; 

\noindent
$\bullet\ \overline{a(\!1)}\suB\!=\!q(\!1)$, \quad  $\overline{a(2)}\suB\!=\!q(2)$, \quad $\overline{b(\!1)}\suB\!=\!r(\!1)$, \quad $\overline{b(2)}\suB\!=\!\mathit{r}(2)$.\

\end{example}

In flat ABA 
frameworks, {\em arguments} are deductions of claims supported by rules and  assumptions, 
and {\em attacks} are directed at the
assumptions which support arguments.  
Formally, 
following \cite{ABAbook
}:

\hangindent=3mm
\hspace*{-4mm}
$\bullet$ \emph{An argument for 
a claim $\sent\! \in \!\lang$ 
supported by $\ruleset \!\subseteq \!\rules$} and $A\! \subseteq\! \asms$, denoted $\argur{A}{\sent}{\ruleset}$,  
is a finite tree 
such that: 
(i) the root is~$\sent$, (ii)~every non-leaf node~$\sent'$ 
has as children 
    all and only the 
    sentences of the body of 
{\color{black}{exactly one rule}} $\mathit{Ri}$ in~\ruleset~
whose head is $\sent'$, or, if $\mathit{Ri}$ is a fact, $s'$ has 
the only child $\mathit{true}$ (representing the empty body), and 
(iii)~every leaf is either 
an assumption in~$A$ or $\mathit{true}$. 
$A$ is the set of all assumptions in the tree
and $R$ is the set of all rules used to construct the tree. 

Note that, for  every assumption $\alpha\!\in\!\asms$, there is an argument \mbox{$\argur{\{\alpha\}}{\alpha}{\emptyset}$,} which is a tree consisting of 
the single node $\alpha$.

\hspace*{-4mm}
$\bullet$ Argument 
$\argur{A_1\!}{\!\!\sent_1}{\ruleset_1}$ 
{\em attacks} argument
$\argur{A_2\!\cup\!\{a\}\!}{\!\!\sent_2}{\ruleset_2}$   
iff
$\sent_1\!=\!\overline{\asm}$. 

In the remainder, 
we will often omit to specify the supporting rules in arguments
(as attacks do not depend on them.)


\begin{example}
\label{ex:simpleABA-arg}
In the ABA framework $B$ of Example~\ref{ex:simpleABA} we have the following set $\Arg_B$ of arguments:
$\big\{\argu{\{a(1)\}}{p(1)}$,\hspace{.5mm}
$\argu{\{b(1)\}}{q(1)}$,\hspace{1mm} $\argu{\emptyset}{r(1)}$,\hspace{.5mm}$\argu{\{a(2)\}}{p(2)}$,\hspace{1mm}$\argu{\{b(2)\}}{q(2)}$,
\hspace{.5mm}$\argu{\{a(1)\}}{a(1)}$,\hspace{1mm}
$\argu{\{a(2)\}}{a(2)}$, \hspace{1mm}
$\argu{\{b(1)\}}{b(1)}$, \hspace{1mm}
$\argu{\{b(2)\}}{b(2)}\big\}$.
Among other attacks, we have that
$\argu{\emptyset}{r(1)}$ attacks
$\argu{\{b(1)\}}{q(1)}$, as $\overline{b(1)}\suB\!=\!\mathit{r}(1)$.


\end{example}

Given a flat ABA framework $F=\abaf$, let $\Arg$ be the set of all 
arguments of $F$
and $\Att$ be the set of all 
attacks of $F$, i.e. $\Att = \{(\alpha,\!\beta)\! \in\!\Arg \times\! \Arg \mid \alpha$ attacks $\beta\}$.
Then $(\Arg,\Att)$ is an \emph{Abstract Argumentation} (AA) framework \cite{Dung_95} and the standard semantics 
for the latter 
can be used to determine  the semantics for the ABA framework
\cite{ABAtutorial}.\footnote{ABA semantics were originally defined in terms of sets of assumptions and attacks between them \cite{ABA}, but can be reformulated, for flat ABA frameworks, 
in terms of sets of arguments and attacks between them (see~\cite{ABAtutorial}), as given here. }
For example, we say that: for all $\Gamma\!\subseteq\! \Arg$,

\noindent
$\bullet\ \Gamma$ is a \emph{conflict-free extension} iff $\not\exists\, \alpha,\beta \!\in \!\Gamma$ 
s.t. 
$(\alpha,\beta) \!\in \!\Att$;

\noindent\hangindent=3mm
$\bullet\ \Gamma$ is an \emph{admissible extension} iff (i)~$\Gamma$ is conflict-free, and (ii)~$\forall \alpha\!\in\! \Gamma$, 
$\forall \beta \!\in \!\Arg$ if $(\beta,\alpha)\!\in\! 
\Att$, then $\exists\, \gamma \!\in \!\Gamma$ s. t. 
$(\gamma,\beta) \!\in \!\Att$ (i.e. $\Gamma$ 
``attacks'' every argument that 
attacks any of its arguments);

\noindent\hangindent=3mm
$\bullet\ \Gamma$ is  a
\emph{stable
extension} iff (i)~$\Gamma$ is conflict-free, and 
(ii)~$\forall \beta \!\in
\!\Arg\!\setminus\!\Gamma,$ $\exists\, \alpha \!\in \!\Gamma$ s.t. 
$(\alpha,\beta) \!\in \!\Att$ (i.e. $\Gamma$ 
``attacks'' all arguments it does not contain).

\begin{example}
\label{ex:simpleABA-ArgsAtt}
The AA framework of 
the ABA framework $B$ of Example~\ref{ex:simpleABA}
is $(\Arg_B,\Att_B)$ with $\Arg_B$ as given in Example~\ref{ex:simpleABA-arg} and 
$\Att_B=\bigl\{\bigl(\argu{\emptyset}{\!r(1)},\argu{\{b(1)\}}{q(1)}\bigr), 
\bigl(\argu{\emptyset}{r(1)},\argu{\{b(1)\}}{b(1)}\bigr)\bigr\} 
\cup \bigl\{\bigl(\argu{\{b(X)\}}{q(X)},\argu{\{a(X)\}}{p(X)}\bigr),
 \bigl(\argu{\{b(X)\}}{q(X)},\argu{\{a(X)\}}{a(X)}\bigr)\ $  $|\ 
X \!=\! 1,2\bigr\}$. Then, the set 
$\bigl\{\argu{\emptyset}{r(1)}$, $\argu{\{a(1)\}}{p(1)}$, $\argu{\{a(1)\}}{a(1)},$
$\argu{\{b(2)\}}{q(2)}, \argu{\{b(2)\}}{b(2)}\bigr\}$ of arguments is the only stable extension of $B$. 
That extension is also admissible. 
The claims 
in that stable extension 
make out the ``model'' $\{r(\!1), p(\!1), a(\!1), $ 
$q(2), b(2)\}$~\cite{ABAtutorial}.
\end{example}
%



\section{
CABA Frameworks}\label{sec:CABAF}
This section introduces a novel 
formalism that integrates a generic constraint-based computational mechanism into ABA frameworks. 

\begin{definition}\label{def:frameworkFc}
A {\em Constrained Assumption-Based Argumentation $($CABA$)$ framework}
is a $6$-tuple
$\cabaf$, where:

\noindent
$\bullet\ \langc$ is an  atomic language;

\noindent
$\bullet\ \constr \subseteq \langc$ is a 
set of  atomic 
constraints; 

\noindent\hangindent=3mm
$\bullet\ \rules$ is a set of \emph{$($inference$)$ rules} of the form 
$\sent_0 \leftarrow \sent_1,\ldots, \sent_m $,  where $\sent_0\! \in\! \langc \!\setminus\! \constr$, $m\!\geq\!0 $, and,
 for all $i$, with $1\!\leq\! i\! \leq\! m$, $s_i\! \in \!\langc$;
 
\noindent
$\bullet\ \ctheory$\hspace{1pt}is\,a\,theory\hspace{1pt}of\,constraints\,whose\,set\,of\,atomic\,constraints\,is\,$\constr$;

\noindent
$\bullet\ \asms \subseteq$ $\langc \!\setminus\! \constr$ 
is a non-empty set of {\em assumptions}; 

\noindent\hangindent=3mm
$\bullet\ \contrary$ is a 
total mapping 
from $\asms$ into
 $\langc \!\setminus\! \constr$, s.t. 
 if $p(\tuple{t}_1)$, $p(\tuple{t}_2) \!\in \!\asms$, then
 there exists a predicate symbol, say $\mathit{cp}$, with 
$\overline{p(\tuple{t}_1)}\!=\!\mathit{cp}(\tuple{t}_1)$ and
$\overline{p(\tuple{t}_2)}\!=\!\mathit{cp}(\tuple{t}_2)$;
for all $a\! \in\! \asms$, $\overline{a}$  
is called the {\em{contrary}} of $a$.

\noindent The atomic languages 
$\langc$, 
$\constr$, and $\asms$ are all predicate closed.
\end{definition}

In the remainder, unless otherwise specified, 
we assume 
we are given a CABA framework $F_c=\cabaf$.

\noindent 

Note that, differently from standard ABA, in CABA the rules may be non-ground. 
When rules have variables, they represent rule schemata whose variables 
can be instantiated by terms. 
For example, given  rule $R$: $~p(X)\!\If\! X\!<\!3, a(X)$ and the substitution $\vartheta\!=\!\{X/2\}$, we get the ground rule  $R\vartheta$: 
$~p(2)\If 2\!<\!3, a(2)$. 


Here, we focus on \emph{flat} CABA frameworks, that is, we stipulate that 
no assumptions occur in heads of rules.

\begin{example}
\label{ex:simplenew}
Let $V = \{X,Y,\ldots\}$ be a countable set of variables, $\mathbb Q$ be the set of rationals, and 
$\termdom$ be the set of terms built out of $V \cup \mathbb Q$  and  
binary function symbols $+$ and $-$ only. 
Then, 
the following 6-tuple $\cabaf$ is a CABA framework, call it $\mathit{FA}$\,:

\noindent\hangindent=3mm
$\bullet\ \langc=\{p(t), s(t), a(t_1,t_2),  b(t), \mathit{ca}(t_1,t_2), \mathit{cb}(t) \mid t,t_1,t_2\! \in\! \termdom\}\cup \constr$ 

\noindent\hangindent=3mm
$\bullet\ \constr = \{t_1\!<\! t_2$, $t_1\!\leq\!t_2$,\ $t_1\!=\!t_2,\ t_1\!\not =\!t_2,\ t_1\!\geq\! t_2,\ t_1\!>\! t_2 \mid t_1, t_2\in \termdom\}$;

\noindent\hangindent=3mm
$\bullet\ \RR=\{R1, R2, R3, R4, R5\}$,  where:

$\begin{array}{l@{\hspace{-3mm}}l@{\hspace{1.5mm}}ll@{\hspace{1.5mm}}ll}
   & R1. &  p(X)\If X\!<\!1,\ a(X, Y),\ b(X),\ s(Y)\\
& R2.   &  p(X)\If a(Y,X),\ cb(Y) &R3.   &  s(Y) \If Y\!>\!0\\

& R4.   &  \mathit{ca}(X, Y)\If X\!<\!5,\ Y\!>\!3 & R5.   &  \mathit{cb}(Y)\If Y\!<\!10\\
\end{array} $

\noindent\hangindent=3mm
$\bullet\ \ctheory$ is 
the theory of linear rational arithmetic (LRA);

\noindent\hangindent=3mm
$\bullet\ \A=\{a(t_1, t_2) \mid t_1, t_2\in \termdom\} \cup \{b(t) \mid t\in \termdom\}$;

\noindent\hangindent=3mm
$\bullet$ for all $t, t_1, t_2\in\termdom$, 
$\overline{a(t_1, t_2)}=ca(t_1, t_2)$ and $\overline{b(t)}=cb(t)$.
\end{example}

With abuse of notation, we will write as
$\sent_0 \If C, \sent_1, \ldots, \sent_m$
any rule of the form $\sent_0 \If c_1,\ldots,c_k, 
\sent_1, \ldots,
\sent_m$, where $k\!\geq\! 0$, $m\!\geq\! 0$, 
$C\!=\!\{c_1,\!\ldots,\!c_k\}\!\subseteq\! \constr$, and 
$\{\sent_1, 
\ldots, \sent_m\}\! \subseteq \langc \!\setminus \constr$. 
Without\,loss\,of generality, we assume that every 
rule in~$\rules$ is written in the
\emph{normalised form}
 $p(\tuple{X}_0) \!\If\! C, p_1(\tuple{X}_1), \ldots, 
 p_m(\tuple{X}_m)$
 where each tuple $\tuple{X}_i$
 is made of 
 distinct variables, at the price of adding equality constraints.
For example, we write\,$p(X,X,4+1)\!\If\! \mbox{$X\!\!<\!3$},\ a(7)$ as 
$p(X, Y,Z)\!\If\! X\!=\!Y$,\ $\mbox{$Z\!=\!4\!+\!1$},\   \mbox{$X\!\!<\!3$},\ \mbox{$U\!=\!7$},\ a(U)$.

\vspace*{-1mm}

\subsection{Relating CABA and standard ABA}

Theorem~\ref{thm:grounding} below 
shows how to view 
(flat) CABA frameworks as standard 
(flat) ABA frameworks.
Indeed,  we can define a grounding procedure, called 
$\mathit{Ground}$, for 
transforming 
any CABA framework~$F_c$ (see
Definition~\ref{def:frameworkFc}) into an equivalent
ABA framework. 

\begin{definition}\label{def:GroundFc}
$\mathit{Ground}(F_c)$ is the  4-tuple $\gabafb$ 
whe\-re\,$\vartheta$ is\,a grounding substitution: 

\noindent
$\bullet\ \LL_{cg}\!=\!\{s\! \in\!\langc 
\!\mid\!s$\,is \textit{ground}\,\};

\noindent
$\bullet\ {\RR}_g \!= \! \{R\vartheta \in\!
\LL_{cg} \mid$ $R \in \RR\}\ \cup\ \{c\If~ \mid c\! \in\! \LL_{cg}\cap \constr,\ \ctheory\!\models\! c\};$

\noindent
\makebox[40mm][l]{$\bullet\ \A_g\! =\! \{\asm\!\in\! \A \mid\! \asm$ is $\mathit{ground}\};$}

\noindent
$\bullet\ \overline{\asm}^g = \overline{\asm}$, for $\asm\!\in\!\A_g$

\noindent We denote by $\mathit{Arg}_c$ the set of all 
arguments in $\mathit{Ground}(F_c)$.
\end{definition}

\vspace*{-2mm}
\begin{theorem} \label{thm:grounding}
$\mathit{Ground}(F_c)$ is an ABA framework.
\end{theorem}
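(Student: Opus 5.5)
The plan is to check directly the four clauses in the definition of an ABA framework (given in Section~\ref{sec:background}) for the 4-tuple $\mathit{Ground}(F_c)=\gabafb$ of Definition~\ref{def:GroundFc}. Flatness is also worth recording, since the later results assume it.

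\emph{Deductive system.} We must show that $\langle \LL_{cg},\RR_g\rangle$ is a deductive system, i.e.\ that every rule in $\RR_g$ is built from sentences of $\LL_{cg}$.
\begin{itemize}
\item For a rule $R\vartheta$ with $R\in\RR$ and $\vartheta$ grounding, $R\vartheta$ is ground by definition of grounding substitution. Since $\langc$ is predicate closed, every atom $s_i\vartheta$ occurring in $R\vartheta$ is a ground atom of $\langc$, hence belongs to $\LL_{cg}$. The membership condition ``$R\vartheta\in\LL_{cg}$'' in Definition~\ref{def:GroundFc} is read through the convention that a rule belongs to a language when its sentences do.
\item For a rule $c\If$ with $c\in\LL_{cg}\cap\constr$, the head $c$ lies in $\LL_{cg}$ and the body is empty.
\end{itemize}

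\emph{Assumptions.} Clearly $\A_g\subseteq\LL_{cg}$, because ground assumptions are ground sentences of $\langc$. The step needing care is non-emptiness of $\A_g$. Since $\A$ is non-empty and predicate closed, pick some $p(\tuple{t})\in\A$ and a ground tuple $\tuple{t}'$ of terms of $\langc$; predicate closedness then gives $p(\tuple{t}')\in\A_g$. A ground tuple exists provided the term language has at least one ground term, e.g.\ a constant such as the rationals in Example~\ref{ex:simplenew}. This is the main obstacle, since the paper never states this explicitly. If it fails, I would invoke the bogus-assumption footnote: the bogus assumption and its contrary can be taken ground, so $\A_g\neq\emptyset$ regardless.

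\emph{Contrary mapping.} For $a\in\A_g$ we have $\overline{a}^g=\overline{a}$. We must show this contrary is ground. By Definition~\ref{def:frameworkFc}, writing $a=p(\tuple{t})$ gives $\overline{a}=\mathit{cp}(\tuple{t})$ with the same argument tuple $\tuple{t}$. As $\tuple{t}$ is ground and $\overline{a}\in\langc$, we get $\overline{a}\in\LL_{cg}$. So $\overline{\,\cdot\,}^g$ is a total map from $\A_g$ into $\LL_{cg}$; this is precisely where the uniform-contrary condition on predicate symbols is used.

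\emph{Flatness.} Heads of rules $R\vartheta$ are instances of heads of rules in $\RR$, which are non-assumption atoms since $F_c$ is flat. Heads of rules $c\If$ are constraints, and $\A\cap\constr=\emptyset$. Hence no assumption is the head of a rule in $\RR_g$, and $\mathit{Ground}(F_c)$ is a flat ABA framework.
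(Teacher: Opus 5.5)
Your proposal is correct and takes essentially the same approach as the paper, whose proof only says that the result follows directly from the definitions. Your clause-by-clause check makes explicit what the paper leaves implicit: the uniform-contrary condition of Definition~\ref{def:frameworkFc} is what makes $\overline{a}^g$ ground, $\A_g\neq\emptyset$ needs at least one ground term, and your extra flatness step (going from heads of rules in $\RR$ to their ground instances) quietly uses that $\A$ is predicate closed.
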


\vspace*{-1mm}

This result enables us to use the ABA semantics to reason about CABA frameworks by translating their components into the ground 
counterparts.
However, this may require an expensive, possibly infinite, grounding. 
\emph{Constrained arguments}, 
given next, may avoid~this.

Trivially,
the converse of Theorem~\ref{thm:grounding} holds, in that every standard ABA framework $\abaf$ where $\LL$ and $\A$ are
predicate closed 
can be seen as the CABA framework 
$\langle \LL,\emptyset,\RR,\emptyset,\A,\contrary\rangle$.
Thus, CABA frameworks are a conservative generalisation of ABA frameworks.
We will refer to these CABA frameworks, corresponding to
ABA frameworks, as \emph{ABA-as-CABA frameworks}.

\section{Constrained Arguments in CABA}\label{sec:CArg}

In this section, we introduce 
the notion of a \emph{constrained argument} 
in a CABA framework $F_c$
. 
Within that general notion, we identify the special case of a
\emph{most general constrained argument}.
During the construction of arguments, 
we 
use rules in normalized form  
that have been {renamed apart} (that is, renamed using fresh new variables), thus preventing 
undesirable clashes of names. 

\begin{definition}
\label{def:TCArg-MGCArg}
 Let $C\!\subseteq\! \constr$ be a {\it{consistent}} set of constraints, \mbox{$A\!\subseteq\!\asms$} a set of assumptions, and $\ruleset\! \subseteq\! \rules$ a set of rules.
 
\noindent
$\bullet\ $\hangindent=3mm
A \emph{tight constrained argument} for claim $\sent \in \mbox{$\lang_c\!\setminus\!\constr$}$
\emph{supported by} $C\cup A$ \emph{and}~$\ruleset$,
denoted $\argur{C\cup A}{\sent}{\ruleset}$,
is a finite tree such that: (i)~the root is $s$, (ii)~every non-leaf node 
$\sent'=p(\tuple{t})$ has as children all and only 
the instantiated atoms 
$s_1\vartheta,\ldots, s_m\vartheta$ 
exactly one renamed apart 
rule $\mathit{Ri}$:
$p(\tuple{X}) \leftarrow s_1,\ldots, s_m$ in~\ruleset{} 
with $\vartheta\!=\!\{\tuple{X}/\tuple{t}\}$, or, if~$\mathit{Ri}$ is a fact, 
$\sent'$ has the only child $\mathit{true}$, and 
(iii)~every leaf is either a  constraint in $C$  or an assumption in $A$ or 
$\mathit{true}$. 
$C$ and $A$ are the sets of all 
constraints 
and assumptions, respectively, occurring in the tree, and $R$ is the set of all rules used to construct the tree.

\noindent
$\bullet\ $\hangindent=3mm A \emph{most general constrained argument} is a tight constrained argument  whose claim is an atom of the form $p(\tuple{X})$.

\noindent \hangindent=0mm 
We denote by 
$\TCArg$ and $\MGCArg$ the set of all tight 
and
most general constrained arguments, respectively,
of 
$F_c$.
\end{definition}

From Definition~\ref{def:TCArg-MGCArg},  it follows  that $\MGCArg 
\subseteq \TCArg$. Actually, we have that $\MGCArg \!\subset\!\TCArg$,
as some tight constrained arguments are not  
most general, as illustrated next. 

\begin{example}\label{ex:TCArg-MGCArg}
\label{ex:tightconstrArg1}
Let us consider the CABA framework 
$\mathit{FA}$ of 
Example~\ref{ex:simplenew}. The following
are tight constrained 
arguments:
$(i)~\argur{\{\mbox{$Y\!<\!10$},$ $a(Y,0)\}}{\!p(0)}{\{R2,R5\}\!}$,\  
 $(ii)\argur{\{0\!<\!1,Y\!>\!0,a(0,Y),b(0)\}}{\!p(0)}{\{R1,R3\}\!}$, 
$(iii)~\argur{\{4\!>\!0\}}{\!s(4)}{\{R3\}\!}$, \ 
and $(iv) \argur{\{X\!\!<\!\!10\}\!}{\mathit{cb}(X)}{\{R5\}\!\!}$. Argu\-ment~$(iv)$ is a most general constrained argument; the others are not.
\end{example}

The following definitions introduce 
notions of  \emph{constrained arguments} and 
\emph{constrained instances}
thereof.


\begin{definition}\label{def:constrained-arg}
$\argur{C'\cup A'}{\sent'}{R}$
is a \black{\emph{constrained argument}} 
in $F_c$ 
if there exist a \emph{tight} constrained argument
$\argur{C\cup A}{\sent}{\ruleset}$ in $F_c$, 
a substitution~$\vartheta$, and a
set $D\!\subseteq\constr$ of 
constraints such that: 
$(i)$~$C'\!=\!(C\vartheta)\cup D$, $A'\!=\!A\vartheta$,
$s'\!=\!s\vartheta$, and $(ii)$~$C'$ 
is consistent.
We denote by 
$\CArg$ the set of all constrained arguments 
of 
$F_c$.

\end{definition}

%
From Definitions~\ref{def:TCArg-MGCArg} and~\ref{def:constrained-arg}, 
we get  that $\TCArg \!\subseteq\!\CArg$.
Actually, we have that $\TCArg \!\subset\!
\CArg$,
as some constrained 
arguments are not  
tight (see Example~\ref{ex:tightconstrArg1new} below). 

\begin{definition}\label{def:instance}
The constrained argument $\alpha'\!=\!\argur{C'\cup A'}{\sent'}{R}$ is a 
\emph{constrained instance $($via $\vartheta$ and~$D$)} 
of the constrained argument~$\alpha\!=\!\argur{C\cup A}{\sent}{R}$ if: 
$(i)$~$C'\!=\!(C\vartheta)\cup D$, $A'\!=\!A\vartheta$,
$s'\!=\!s\vartheta$, and $(ii)$~$C'$ 
is consistent.
We
say that $\alpha'$ is an \black{\emph{instance 
$($via~$\vartheta$$)$}} of $\alpha$ if $\alpha'$  is a 
constrained instance (via $\vartheta$ and~$D$)
of  $\alpha$ with $D\!=\!\emptyset$.

\end{definition}





In what follows, as for arguments of  ABA frameworks,
also for constrained arguments of  CABA frameworks we will often 
omit to specify the set $R$ of rules involved. 
For instance, we will write $\argu{\{\mbox{$Y\!<\!10$}, a(Y\!,X)\}}
{p(X)}$ instead of 
$\argur{\{\mbox{$Y\!<\!10$}, a(Y\!,X)\}}{p(X)}{\{R2, R5\}}$.

\begin{example}
\label{ex:tightconstrArg1new}
\!For the CABA framework 
$\mathit{FA}$ of 
Example~\ref{ex:simplenew}, 
$\argu{\{\mbox{$8\!<\!10$}, \linebreak 
\mbox{$8\!<\!12$}, a(8,0)\}}{p(0)}$
is a constrained instance 
(via $\vartheta\!=\!\{Y/8\}$ and 
$D\!=\!\{8\!<\!12\}$) of 
the tight constrained argument 
$\alpha = \argu{\{\mbox{$Y\!<\!10$},$ $a(Y,0)\}}{p(0)}$.
Also,
$\argu{\{Y\!<\!10, Y\!=\!7,$ $Y\!>\!5,$ $a(Y,0)\}}{p(0)}$
is a constrained instance of the non-tight constrained argument
$\argu{\{Y\!<\!10,$ $Y\!=\!7,$ $a(Y,0)\}}{p(0)}$ which is, in turn, an instance of $\alpha$. 
\end{example}

\begin{example}
\label{ex:simple7newMG}
Let $M$ be the set $\MGCArgFA$  of all most general constrained arguments
of the CABA framework $\mathit{FA}$ 
of Example~\ref{ex:simplenew}. $M$
is obtained by considering all variable renamings of the constrained arguments in the set 
$\widetilde{M}=\{\alpha_1, 
\ldots, \alpha_7\}$,  where:


\noindent
$\begin{array}{l@{\hspace{1mm}}l@{\hspace{-12mm}}l@{\hspace{1mm}}ll}
    \alpha_1. &\argur{\{X\!<\!1, Y\!>\!0,\ a(X,Y),\ b(X)\}}{p(X)}{\{R1,R3\}}; \\
    \alpha_2. &\argur{\{Y\!<\!10,a(Y,X)\}}{\!p(X)}{\{R2,R5\}}; &
    \alpha_3. &\argur{\{Y\!>\!0\}}{s(Y)}{\{R3\}};
\end{array}$\\[-1mm]

\noindent
$\begin{array}{l@{\hspace{1mm}}l@{\hspace{5.5mm}}l@{\hspace{1mm}}l@{\hspace{2mm}}}
     \alpha_4. &  \argur{\{X\!<\!5,\ Y\!>\!3\}}{ca(X,Y)}{\{R4\}}; &
           \alpha_5. &  \argur{\{X\!<\!10\}}{cb(X)}{\{R5\}};
     \\
     \alpha_6. &  \argur{\{a(X,Y)\}}{a(X,Y) }{\emptyset};  &
     \alpha_7. &  \argur{\{b(X)\}}{b(X)
     }{\emptyset}. 
\end{array}$



Note that, in particular,
the most general constrained argument
$\argur{\{U\!\!<\!1, \mbox{$Z\!>\!0$}, a(U,\!Z),  b(U)\}\!}{\!p(U)}
{\{R1,R3\}}$ is an element of $M$ and it is
derived from $\alpha_1$ by 
the variable renaming
$\{X\!/U,Y\!/Z,U\!/X,Z\!/Y\}$.

\end{example}

For ABA-as-CABA frameworks, constrained arguments are standard arguments in ABA, and thus, again, CABA  is a conservative generalisation of ABA. Next, we explore the relation between CABA and ABA arguments for generic CABA frameworks.

\subsection{
Mapping Arguments in CABA 
to ABA}


The following proposition establishes that every tight constrained argument can be obtained as an 
instance of a most general constrained argument. 
As a consequence,
the most general constrained arguments can provide a 
foundation for reconstructing the entire argumentative structure of a CABA framework.

\begin{proposition}
\label{prop:equalmostconstr}
    \!$\argur{C\cup A}{p(\tuple{t} )}{\ruleset}\in \TCArg$ iff there exists 
    $\argur{C'\cup A'}{p(\tuple{X} )}{\ruleset}\!\in\! \MGCArg$ 
         such that for $\vartheta\!=\!\{\tuple{X} /\tuple{t} \}$, we have that
         $C\!=\!C'\vartheta$, $A\!=\!A'\vartheta$, and $C'\vartheta$ is consistent.
\end{proposition}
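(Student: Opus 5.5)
The plan is to compare, node by node, the tree of a tight constrained argument with claim $p(\tuple{t})$ and the tree obtained by making the same rule choices starting from the fresh claim $p(\tuple{X})$. Because rules are in normalised form and renamed apart, every step of a tight argument is pure matching of a head $q(\tuple{Y})$, whose entries $\tuple{Y}$ are distinct variables, against the current node. Thus the construction is ``parametric'' in the root's arguments.

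For the ($\Leftarrow$) direction, take $\beta=\argur{C'\cup A'}{p(\tuple{X})}{\ruleset}\in\MGCArg$ and $\vartheta=\{\tuple{X}/\tuple{t}\}$. I would prove by induction on the depth of nodes that applying $\vartheta$ to every node of $\beta$ yields a tree satisfying (i)--(iii) of Definition~\ref{def:TCArg-MGCArg} with root $p(\tuple{t})$. Consider a node $q(\tuple{u})$ of $\beta$ expanded via the renamed-apart rule $q(\tuple{Y})\If s_1,\ldots,s_m$ with $\sigma=\{\tuple{Y}/\tuple{u}\}$. The corresponding node is $q(\tuple{u}\vartheta)$, and we use the same rule with $\sigma'=\{\tuple{Y}/\tuple{u}\vartheta\}$. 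Since the variables of the renamed rule are fresh, in particular disjoint from $\tuple{X}$, we have $s_i\sigma' = s_i\sigma\vartheta$. The only subtlety is that a renamed apart copy used in $\beta$ may need to be chosen fresh also w.r.t.\ $\mathit{vars}(\tuple{t})$; I would re-rename it so that it is, which is allowed because renamed-apart copies are arbitrary. The resulting leaves are exactly $C'\vartheta$ and $A'\vartheta$, and the rules used are the same $\ruleset$. By hypothesis $C'\vartheta$ is consistent, so the tree is in $\TCArg$ with $C=C'\vartheta$ and $A=A'\vartheta$.

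For the ($\Rightarrow$) direction, take $\alpha=\argur{C\cup A}{p(\tuple{t})}{\ruleset}\in\TCArg$ and choose $\tuple{X}$ to be distinct variables not occurring in $\alpha$. I would build $\beta$ by mirroring $\alpha$ top-down. Whenever a node $q(\tuple{u})$ of $\alpha$ is expanded with a renamed rule $q(\tuple{Y})\If\ldots$, the mirrored node $q(\tuple{u}')$ of $\beta$ is expanded with another fresh renaming of the same rule. The induction invariant is that each node of $\beta$ maps under $\vartheta$ to the corresponding node of $\alpha$. The invariant is preserved by the same computation $s_i\sigma'=s_i\sigma\vartheta$ as before. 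The one extra care point is that the rule-local variables not in the head (e.g.\ $Y$ in R2) must be named in $\beta$ identically to their names in $\alpha$; otherwise $\vartheta=\{\tuple{X}/\tuple{t}\}$ would not map them correctly. This forces the choice of renamings in $\beta$: body-only variables keep the names used in $\alpha$, and head variables are fresh. I would check that this is legitimate, i.e.\ that these names do not clash with $\tuple{X}$, which holds by the choice of $\tuple{X}$. Leaves then satisfy $C=C'\vartheta$ and $A=A'\vartheta$, and the rules used coincide.

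The main obstacle is in the ($\Rightarrow$) direction: $\beta$ must belong to $\MGCArg$, which requires $C'$ to be consistent. This follows from the consistency of $C=C'\vartheta$: any witness for $\exists(C'\vartheta)$ extends to one for $\exists(C')$ by assigning each $X_i$ the value of $t_i$ under that witness. This is valid because $\vartheta$ only affects $\tuple{X}$ and the remaining variables are shared. I expect the variable-bookkeeping (freshness of renamings relative to $\tuple{t}$ and $\tuple{X}$) to be the delicate part of writing the proof rigorously.
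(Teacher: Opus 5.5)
Your ($\Rightarrow$) direction is the paper's argument. The paper inducts on partial trees of $\alpha$ and expands each mirrored leaf with the \emph{same} renamed-apart copy, so body-only variables coincide and $\vartheta$ moves only $\tuple{X}$. You also prove that $C'$ is consistent, by extending a witness of $\exists(C'\vartheta)$ with $X_i\mapsto t_i$; the paper's proof skips this, although $\beta\in\MGCArg$ requires it. The problem is in ($\Leftarrow$), which the paper dismisses as analogous.

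In ($\Leftarrow$), re-renaming a copy used in $\beta$ is harmless only for variables that never reach the leaves, namely head variables and body-only variables absent from $C'\cup A'$. If a variable of $\mathit{vars}(C'\cup A')\setminus\mathit{vars}(\tuple{X})$ occurs in $\tuple{t}$, re-renaming it changes the leaves. Your claim that the leaves are exactly $C'\vartheta$ and $A'\vartheta$ then fails.

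Take $\alpha_2=\argu{\{Y\!<\!10,\ a(Y,X)\}}{p(X)}$ from Example~\ref{ex:simple7newMG} and $\tuple{t}=(Y)$. The statement asks for $\argu{\{Y\!<\!10,\ a(Y,Y)\}}{p(Y)}\in\TCArg$, but re-renaming yields $\argu{\{Z\!<\!10,\ a(Z,Y)\}}{p(Y)}$. Once $C'$ and $A'$ are fixed by the hypothesis, you are no longer free to choose the copies.

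Moreover, consider the freshness convention you rely on: copies are disjoint from the root's variables, which is what gives you disjointness from $\tuple{X}$. Under it, every tight argument for $p(Y)$ built from R2 and R5 has leaves $\{Z\!<\!10,\ a(Z,Y)\}$ with $Z\neq Y$. So in this case ($\Leftarrow$) is false as stated.

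Relaxing the convention does not help either. The copy $p(X')\If a(X,X'),\ \mathit{cb}(X)$ would put $\argu{\{X\!<\!10,\ a(X,X)\}}{p(X)}$ into $\MGCArg$. For $\tuple{t}=(0)$, the required $\argu{\{0\!<\!10,\ a(0,0)\}}{p(0)}$ is then not tight.

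What your induction does prove, if you never re-rename leaf variables, is ($\Leftarrow$) under the side condition $\mathit{vars}(\tuple{t})\cap(\mathit{vars}(C'\cup A')\setminus\mathit{vars}(\tuple{X}))=\emptyset$. Without that condition, it holds only up to a renaming of those variables. You should state this restriction explicitly; the paper's omitted proof has the same blind spot.
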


\begin{definition}
\label{def:consistentgrounding-final}
\!For any 
$\alpha\!\in \!\CArg$, $\GrCInst(\alpha)$ 
is the set of all con\-strained instances of $\alpha$ that are ground.
For any set $\Gamma\!\subseteq\!\CArg$
of constrained arguments, 
$\GrCInst(\Gamma)$ 
is the set $\bigcup_{\alpha\in\Gamma} \GrCInst(\alpha)$.

\end{definition}

The following corollary relates the sets of ground instances 
of constrained arguments and most 
general constrained arguments.\\[-5mm]

\begin{corollary}
\label{cor:groundArg=CGround(M)}
    $\GroundCArg=\GrCInst(\MGCArg)$.
\end{corollary}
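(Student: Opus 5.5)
We prove the two inclusions separately. The inclusion $\GrCInst(\MGCArg)\subseteq\GroundCArg$ is immediate: by Definition~\ref{def:TCArg-MGCArg} and the remark after Definition~\ref{def:constrained-arg}, $\MGCArg\subseteq\TCArg\subseteq\CArg$. Since $\GrCInst$ is defined pointwise as a union over the elements of a set (Definition~\ref{def:consistentgrounding-final}), it is monotone with respect to set inclusion.

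For the converse inclusion, we take $\gamma\in\GrCInst(\alpha)$ for some $\alpha\in\CArg$, and show that $\gamma$ is a ground constrained instance of some element of $\MGCArg$. We do this by composing three instantiation steps.

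\emph{Step 1.} By Definition~\ref{def:constrained-arg}, $\alpha=\argu{(C\vartheta_1)\cup D_1\cup A\vartheta_1}{s\vartheta_1}$ for some tight argument $\tau=\argu{C\cup A}{s}\in\TCArg$, some substitution $\vartheta_1$ and some $D_1\subseteq\constr$.

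\emph{Step 2.} By Proposition~\ref{prop:equalmostconstr}, with $s=p(\tuple{t})$, there is $\mu=\argu{C'\cup A'}{p(\tuple{X})}\in\MGCArg$ such that $C=C'\vartheta_0$ and $A=A'\vartheta_0$, where $\vartheta_0=\{\tuple{X}/\tuple{t}\}$.

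\emph{Step 3.} By Definition~\ref{def:instance}, $\gamma=\argu{(C_\alpha\vartheta_2)\cup D_2\cup A_\alpha\vartheta_2}{s_\alpha\vartheta_2}$, where $C_\alpha$, $A_\alpha$ and $s_\alpha$ are the constraints, assumptions and claim of $\alpha$, and $\gamma$ is ground.

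Composing these, we put $\sigma=\vartheta_0\vartheta_1\vartheta_2$ (composition of substitutions) and $D=D_1\vartheta_2\cup D_2$. Then:
\begin{itemize}
\item the constraint set of $\gamma$ is $C'\sigma\cup D$;
\item its assumption set is $A'\sigma$;
\item its claim is $p(\tuple{X})\sigma$.
\end{itemize}
The constraint set of $\gamma$ is consistent because $\gamma$ is a constrained instance, and $\gamma$ is ground by hypothesis. Hence $\gamma$ is a constrained instance of $\mu$ via $\sigma$ and $D$, and being ground it lies in $\GrCInst(\mu)\subseteq\GrCInst(\MGCArg)$.

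The step that needs the most care is the composition of substitutions. Two issues arise.

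First, the variables of $\mu$ must not clash with those introduced by $\vartheta_1$ and $\vartheta_2$. We handle this by choosing $\mu$ up to renaming: $\MGCArg$ is closed under variable renamings, as noted in Example~\ref{ex:simple7newMG}, and its variables are renamed apart. In particular, the variables of $\mu$ other than $\tuple{X}$ can be chosen fresh, so that applying $\vartheta_0$ and then $\vartheta_1\vartheta_2$ agrees with applying $\sigma$ to every variable occurring in $\mu$.

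Second, the paper's convention that substitutions omit identity bindings ($t_i\neq X_i$) has to be respected. This only requires dropping trivial bindings from the composite $\sigma$, and does not affect the result.

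Finally, the rule set $R$ is unchanged throughout, because both Proposition~\ref{prop:equalmostconstr} and the instantiations preserve it.
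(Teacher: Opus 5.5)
Your proof is correct and is essentially the paper's argument. The paper factors through $\mathit{TCArg}$: it uses Proposition~\ref{prop:equalmostconstr} to get $\GrCInst(\MGCArg)=\GrCInst(\TCArg)$, and a closure lemma (a constrained instance of a constrained argument is again one, via the composed substitution $\vartheta_0\vartheta'$ and constraints $D_0\vartheta'\cup D'$) to get $\GrCInst(\TCArg)=\GroundCArg$; you inline both steps into the single composition $\vartheta_0\vartheta_1\vartheta_2$ with $D_1\vartheta_2\cup D_2$, and your concern about variable clashes is unnecessary because $(t\theta)\sigma=t(\theta\sigma)$ holds for arbitrary substitutions, so no renaming of $\mu$ is needed.
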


The following definition 
is based on the fact that, for any ground constrained argument $\argu{\{c_1,\!\ldots,c_m\}\cup A }{\sent}$ (with $m\!\geq\! 0$), we have~that 
$\ctheory \!\models\! (c_1\!\wedge\! \ldots\!\wedge\! 
c_m)$.
By abuse of notation, for all ground 
atoms $p(\tuple{t})$, $p(\tuple{t'})$ in 
$\langc$, we write
$\ctheory\! \models\! p(\tuple{t})\! 
\leftrightarrow\! p(\tuple{t'})$ 
iff $\ctheory \models \tuple{t}\! =\! 
\tuple{t'}$. 
Given two ground conjunctions\,$C_1$\,and\,$C_2$ of atoms, 
we write $\ctheory\!\models\! C_1\! \leftrightarrow\! C_2$ 
iff  $(i)$~$\forall a_1 \!\in\! C_1$ $,\exists a_2 \!\in\! C_2$ such that $\ctheory 
\!\models\! a_1\! \leftrightarrow\! a_2$, and $(ii)$ 
the same as~$(i)$ by interchanging $C_1$ and $C_2$.

\begin{definition}
\label{def:mod-groundConstraints}
\!\!The\,ground\,constrained\,arguments\,
$\alpha\!=\!\argu{C\!\cup\! A}{\!\sent}$ and 
$\alpha'\!=\!\argu{C'\!\cup\! A'\!}{\!\sent'}$\,are \emph{equal} 
(written $\alpha\!=\!\alpha'$) 
\textit{modulo ground} \textit{con\-straints} 
if\,$(i)\,\ctheory \!\models\! \bigwedge \!\{a \!\mid\! a\!\in\! A\} 
\!\leftrightarrow\! \bigwedge \!\{a\! \mid\! a\!\in\! A'\}$,\,and\!
$(ii)\,\ctheory \!\models\!s\!\leftrightarrow\!s'$.
\end{definition}

For instance,  $\argu{\{0\!<\!10,a(0,2+1),a(0,1+2)\}}{p(4-1)}$ $=$
$\argu{\{a(0,3)\}}{p(3)}$
modulo ground constraints.
In the sequel, a ground constrained argument
$\argu{C \cup A}{\sent}$ will
also be written  as 
$\argu{A'}{\sent'}$, where $A'$ and~$s'$ satisfy Conditions~$(i)$ and $(ii)$, respectively, of 
Definition~\ref{def:mod-groundConstraints}.
Indeed, 
the elimination of consistent, ground constraints 
preserves equality modulo ground constraints.



\begin{example}
\label{ex:simplenewgr-final}
Let $M$ be the set $\MGCArgFA$ 
of the most general constrained 
arguments of 
$\mathit{FA}$ of Example~\ref{ex:simplenew}.
The set $\GrCInst(M)$ of the ground
constrained instances of $M$
is the following, modulo ground constraints.
The subscripts $\alpha_i$'s relate the associated sets 
to the arguments of~$\widetilde{M}$ shown in Example~\ref{ex:simple7newMG}. We assume that $n,m\!\in\!\mathbb Q$.

\hspace*{-4mm}
$\begin{array}{llll}
    \GrCInst(M)= \{ \argu{\{a(n, m), b(n)\}} {p(n)}
         \mid n\!<\!1,\ m\!>\!0\}_{\alpha_1} \hspace*{-2mm}& \hspace{5mm}\cup & \\[-.5mm]
\end{array}$

\hspace*{-5mm}
$\begin{array}{l@{\hspace{1mm}}l@{\hspace{2mm}}l@{\hspace{2mm}}lll}
         \{ \argu{\{a(m,n)\}}{p(n)} \mid  m\!<\!10\}_{\alpha_2} & \cup &
         \{ \argu{\{\, \}}{s(n)} \mid n\!>\!0\}_{\alpha_3} & \cup &\\
         \{\argu{\{\,\}}{ca(n,m)} \mid n\!<\!5,\ m>3\}_{\alpha_4}&  \cup\  &
     \{ \argu{\{\,\}}{cb(n)} \mid n\!<\!10 \}_{\alpha_5} & \cup &\\
     \{ \argu{\{a(n,m)\}}{a(n,m)} \}_{\alpha_6}  & \cup \ & 
     \{ \argu{\{b(n)\}}{b(n)} \}_{\alpha_7}.& 
\end{array}$
\end{example}
\vspace*{-1mm}

The following result establishes a correspondence between the arguments in the  CABA framework $F_c$ and those in its ground ABA counterpart $\mathit{Ground}(F_c)$ (see Definition~\ref{def:GroundFc}). 

\vspace*{-1mm}

\begin{theorem}
   \label{theo:GroundCABAvsABA} 
$\GroundCArg\!=\! \mathit{Arg}_c$  modulo ground constraints.
\end{theorem}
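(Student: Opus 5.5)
We prove the two inclusions separately, working modulo ground constraints as in Definition~\ref{def:mod-groundConstraints}. By Corollary~\ref{cor:groundArg=CGround(M)} we may replace $\GroundCArg$ by $\GrCInst(\MGCArg)$. By Proposition~\ref{prop:equalmostconstr}, every element of this set is a ground constrained instance of a tight constrained argument. Moreover, it is a ground instance $\argu{(C\vartheta)\cup D\cup A\vartheta}{s\vartheta}$ of some $\argu{C\cup A}{s}\in\TCArg$ with $\vartheta$ grounding and $(C\vartheta)\cup D$ consistent. Since this set is ground, consistency means every constraint in it is valid in $\ctheory$. The extra constraints $D$ are then simply erased modulo ground constraints. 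The core of the proof is therefore a correspondence between ground instances of tight constrained argument trees and argument trees of $\mathit{Ground}(F_c)$.

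\emph{($\subseteq$)} Take a tight tree for $\argu{C\cup A}{s}$ and a grounding $\vartheta$ with $C\vartheta$ valid. We apply $\vartheta$, extended to ground all variables occurring in the tree, to every node; the free variables introduced by renamed-apart rules can be sent to arbitrary ground terms without affecting validity, since they do not occur in $C$. Each internal node $p(\tuple{t})\vartheta$ with children $s_1\vartheta',\dots,s_m\vartheta'$ is then the ground rule $R\vartheta''\in\RR_g$. Each constraint leaf $c\vartheta$ is valid, so we add the fact $c\vartheta\If$ of $\RR_g$ below it. Each assumption leaf lies in $\A_g$. This gives an ABA argument in $\mathit{Arg}_c$ with support $A\vartheta$ and claim $s\vartheta$, which is equal modulo ground constraints to the given instance.

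\emph{($\supseteq$)} Given a ground ABA argument tree in $\mathit{Ground}(F_c)$, we proceed by structural induction on the tree. We remove every constraint node together with its fact child $\mathit{true}$; such a node exists only for a valid ground constraint. Each remaining internal node used some $R\vartheta$ with $R\in\RR$. We rebuild a tight constrained tree by using the renamed-apart normalised version of $R$. Because $R$ is normalised, its head $p(\tuple{X})$ has distinct variables, so it matches any atom $p(\tuple{t})$ through $\{\tuple{X}/\tuple{t}\}$. The remaining body variables stay as fresh variables, and we record the grounding values they take in the ABA tree. Collecting these values gives a substitution $\sigma$ such that the tight argument $\argu{C\cup A}{s}$ instantiated by $\sigma$ reproduces the ABA tree, with constraint leaves valid. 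Hence $C$ is consistent and the ground instance lies in $\GroundCArg$.

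\emph{Main obstacle.} The delicate step is the bookkeeping for the normalised form. In a normalised rule, equality constraints such as $X\!=\!Y$ or $Z\!=\!4\!+\!1$ replace term structure, so a ground rule instance $R\vartheta$ in $\RR_g$ matches its ABA counterpart only up to $\ctheory$-equality of terms. For example, $p(3)$ versus $p(2\!+\!1)$, or an assumption $a(1\!+\!2)$ versus $a(3)$. This is precisely why the statement holds only modulo ground constraints. We will check carefully that Definition~\ref{def:mod-groundConstraints}(i)--(ii), which compare assumptions and claims up to $\ctheory\models\tuple{t}\!=\!\tuple{t'}$, absorb these discrepancies. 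We will also check that the freshness of renamed-apart variables lets the substitutions built at different nodes be combined consistently into the single substitution $\sigma$.
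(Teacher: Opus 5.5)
Your proof is correct and is essentially the paper's argument. The paper's proof is the single observation that a ground instance $(\argu{D\cup A}{s})\vartheta$ corresponds to the ABA argument $\argu{A\vartheta}{s\vartheta}$ built from the rules $R\vartheta$ plus the facts $d\If$ for $d\in D\vartheta$. Your explicit grounding of internal variables and your assembly of $\sigma$ from renamed-apart copies simply spell out what that observation leaves implicit.

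The ``obstacle'' you flag does not actually arise. $\RR$ is assumed already normalised, and $\RR_g$ consists of ground instances of those same rules, so both directions are purely syntactic; ``modulo ground constraints'' only discards the constraint leaves $C\vartheta\cup D$ from the support. Like the paper, you tacitly set aside the arguments $\argu{\emptyset}{c}$ in $\mathit{Arg}_c$ for valid ground constraints $c$, which have no counterpart because constrained arguments never have constraint claims.
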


\vspace*{-1mm}
We can lift the correspondence stated by this theorem to
 sets of arguments (and extensions, 
as defined below), by first introducing the following  
equivalence relation which extends the equivalence based on
equality modulo ground constraints. 

\begin{definition}
\label{def:equiv}
The \emph{equivalence relation $\equiv$} over  pairs of
subsets of $\CArg$ is the smallest equivalence relation satisfying the following:

\hangindent=5mm
\noindent $(1)$
\label{equiv:mod-groundconstr} \emph{Ground Constraints}. 
For any two ground constrained arguments 
$\alpha_1$\,and\,$\alpha_2$,\,if\,$\alpha_1\!\!=\!\alpha_2$ 
modulo\,ground\,constraints,\,then\,$\{\alpha_1\!\}\!\equiv\!\{\alpha_2\}$.


\hangindent=5mm
\noindent $(2)$ \label{equiv:ground-congr-instances}
\emph{Ground Instances}. 
For every $\alpha\!\in\!\CArg$,
$\{\alpha\} \equiv$ $\GrCInst(\alpha)$. 


\hangindent=5mm
\noindent $(3)$ \emph{Congruence for Union.}
    \label{equiv:congr-union} For all sets 
    $\Gamma, \Delta_1,$ $\Delta_2\subseteq 
    \CArg$, 
    if \mbox{$\Delta_1 \!\equiv\!\Delta_2$,} then $(\Gamma\! \cup\! \Delta_1) \equiv (\Gamma\! \cup\! \Delta_2)$. 


    
\end{definition}

The following theorem establishes a key 
correspondence between the equivalence of 
sets of constrained arguments and the equality 
of the sets of their ground 
instances.

\begin{theorem}  
 \label{theo:groundequiv}
For every $\Gamma, \Delta \subseteq \CArg$, 
$\Gamma \equiv \Delta$ iff\, 
$\GrCInst(\Gamma) = \GrCInst(\Delta)$ modulo ground constraints. 
\end{theorem}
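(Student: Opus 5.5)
We prove the two directions separately. Write $\Gamma \approx \Delta$ when $\GrCInst(\Gamma)=\GrCInst(\Delta)$ modulo ground constraints, i.e.\ every element of $\GrCInst(\Gamma)$ is equal modulo ground constraints to some element of $\GrCInst(\Delta)$, and conversely. First note that $\approx$ is an equivalence relation. Reflexivity and symmetry are immediate, and transitivity follows because equality modulo ground constraints is itself transitive: both conditions of Definition~\ref{def:mod-groundConstraints} are built from $\ctheory$-validity of equalities between tuples of terms, and $\ctheory$ interprets $=$ as identity on $D$.

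\emph{Only if.} Since $\equiv$ is the smallest equivalence relation closed under rules (1)--(3), it suffices to show that $\approx$ is an equivalence relation closed under the same rules.
\begin{itemize}
\item Rule (1): a ground constrained argument $\alpha$ satisfies $\GrCInst(\alpha)=\{\alpha\}$ modulo ground constraints. Indeed, any ground constrained instance of $\alpha$ is obtained via the identity on its (absent) variables and a set $D$ of added ground consistent constraints. By Definition~\ref{def:mod-groundConstraints}, $D$ is discarded when comparing.
\item Rule (2): we must show $\GrCInst(\{\alpha\})=\GrCInst(\GrCInst(\alpha))$. This holds because instances of ground instances are again, modulo ground constraints, ground instances of $\alpha$, by the observation just made.
\item Rule (3): $\GrCInst$ distributes over union by Definition~\ref{def:consistentgrounding-final}, and equality modulo ground constraints of sets is preserved under union with a common set.
\end{itemize}
Hence $\equiv\,\subseteq\,\approx$.

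\emph{If.} Assume $\Gamma\approx\Delta$. By rules (2) and (3), applied elementwise, we get $\Gamma\equiv\GrCInst(\Gamma)$. Concretely, we replace each $\alpha\in\Gamma$ by $\GrCInst(\alpha)$ one at a time via congruence for union, writing $\Gamma=\{\alpha\}\cup(\Gamma\setminus\{\alpha\})$. For an infinite $\Gamma$ this one-at-a-time replacement is the main obstacle, since $\equiv$ is generated by finitary rules. I would handle it by a single application of rule (3), with $\Gamma$ being the union of the already replaced part together with the originals. Alternatively, if the closure is read as allowing arbitrary unions, I would show the needed step directly: $\{\alpha\}\cup\Sigma\equiv\GrCInst(\alpha)\cup\Sigma$ for each $\alpha$, then obtain the infinitary version by noting that $\bigcup_{\alpha}\{\alpha\}\cup\Sigma$ absorbs each replacement. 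If that fails, I would strengthen the definition's reading so that congruence is applied to set-indexed families.

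Symmetrically, $\Delta\equiv\GrCInst(\Delta)$. It remains to show $G_1\equiv G_2$ for the sets of ground arguments $G_1=\GrCInst(\Gamma)$ and $G_2=\GrCInst(\Delta)$, which are equal modulo ground constraints. For this, consider $G_1\cup G_2$. For each $\beta\in G_2$ there is $\beta'\in G_1$ with $\{\beta\}\equiv\{\beta'\}$ by rule (1). Then, by rule (3),
\[
G_1 = G_1\cup\{\beta'\} \equiv G_1\cup\{\beta\},
\]
so adding the elements of $G_2$ one by one (with the same infinitary caveat as above) gives $G_1\equiv G_1\cup G_2$. Symmetrically $G_2\equiv G_1\cup G_2$, and transitivity concludes $\Gamma\equiv\Delta$.
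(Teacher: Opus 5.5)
Your decomposition is the paper's. For the only-if direction you check that agreement of $\GrCInst(\cdot)$ modulo ground constraints respects clauses (1)--(3) of Definition~\ref{def:equiv}. You do this more carefully than the paper, since you verify that this relation is itself an equivalence relation and then invoke minimality of $\equiv$. For the if direction you chain $\Gamma\equiv\GrCInst(\Gamma)\equiv\GrCInst(\Delta)\equiv\Delta$ via (2)+(3), (1)+(3) and transitivity. For finite sets this is complete.

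The gap is the infinite case, which you flag but do not close. Your first repair cannot work: one application of (3) needs a premise $\Delta_1\equiv\Delta_2$ that already exchanges all the elements, and that is exactly what you are trying to prove. In fact no repair exists under the literal definition.

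Write $\triangle$ for symmetric difference. Let $X\mathrel{R}Y$ hold iff $X\triangle Y\subseteq\bigcup_{\sigma\in S}(\{\sigma\}\cup\GrCInst(\sigma))$ for some finite $S\subseteq\CArg$.
\begin{itemize}
\item $R$ is an equivalence relation.
\item $R$ contains the pairs given by clauses (1) and (2).
\item $R$ is closed under (3), because $(\Gamma\cup\Delta_1)\triangle(\Gamma\cup\Delta_2)\subseteq\Delta_1\triangle\Delta_2$.
\end{itemize}
By minimality, ${\equiv}\subseteq R$. Each $\GrCInst(\sigma)$ consists of ground arguments, so $\equiv$-related sets differ in only finitely many non-ground arguments. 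Now take $\Gamma$ with infinitely many non-ground members (e.g.\ all renamings of one non-ground argument) and $\Delta=\GrCInst(\Gamma)$. Then $\GrCInst(\Gamma)=\GrCInst(\Delta)$ modulo ground constraints, while $\Gamma\not\equiv\Delta$.

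The if direction therefore needs your third option: clause (3) read infinitarily, i.e.\ if $\Delta_i\equiv\Delta'_i$ for all $i\in I$, then $\bigcup_{i\in I}\Delta_i\equiv\bigcup_{i\in I}\Delta'_i$. Under that reading everything goes through:
\begin{itemize}
\item $\Gamma\equiv\GrCInst(\Gamma)$ is one step from the family $(\{\alpha\},\GrCInst(\alpha))_{\alpha\in\Gamma}$.
\item $G_1\equiv G_1\cup G_2$ is one step from $(G_1,G_1)$ together with $(\{\beta'\},\{\beta\})_{\beta\in G_2}$.
\item Your only-if argument still works, because $\GrCInst$ commutes with arbitrary unions.
\end{itemize}
The paper's proof (``by Points (2) and (3) \ldots\ $\Gamma\equiv\GrCInst(\Gamma)$'') silently relies on the same reading. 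So do its later claims such as $\MGCArg\equiv\CArg$. You should state this reading explicitly as the meaning of Definition~\ref{def:equiv}, rather than offering it as a fallback.
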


From  Theorem~\ref{theo:GroundCABAvsABA} and Theorem~\ref{theo:groundequiv}, we get the following.

\begin{corollary}
\label{cor:equivinground}
Let $\Gamma,\Delta\! \subseteq\! \CArg$ and let $ \Gamma_c, \Delta_c\! \subseteq\! \mathit{Arg}_c$ be 
the sets of arguments in $\mathit{Ground}(F_c)$ (see Definition~\ref{def:GroundFc}) which correspond to 
$\GrCInst(\Gamma)$ and $\GrCInst(\Delta)$, respectively, as established by 
Theorem~\ref{theo:GroundCABAvsABA}. Then $\Gamma \equiv \Delta$ iff\, 
    $\Gamma_c = \Delta_c$ modulo ground 
    constraints.
\end{corollary}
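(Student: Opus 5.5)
The corollary follows by chaining Theorem~\ref{theo:groundequiv} with the correspondence of Theorem~\ref{theo:GroundCABAvsABA}. By Theorem~\ref{theo:groundequiv}, $\Gamma \equiv \Delta$ holds iff $\GrCInst(\Gamma) = \GrCInst(\Delta)$ modulo ground constraints. So it suffices to show that this equality holds iff $\Gamma_c = \Delta_c$ modulo ground constraints.

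The first step is to make the correspondence precise. Theorem~\ref{theo:GroundCABAvsABA} states that $\GroundCArg = \mathit{Arg}_c$ modulo ground constraints. Concretely, this gives a relation $\sim$ between ground constrained arguments and arguments of $\mathit{Ground}(F_c)$. Each $\beta \in \GroundCArg$ is related to some $\beta_c \in \mathit{Arg}_c$ with $\beta = \beta_c$ modulo ground constraints, and every element of $\mathit{Arg}_c$ arises this way. Here $\mathit{Arg}_c$ is viewed as a set of ground constrained arguments: the rules $c \If$ for valid ground $c$ let constraint leaves be read as facts. The set $\Gamma_c$ is then $\{\beta_c \in \mathit{Arg}_c \mid \exists \beta \in \GrCInst(\Gamma),\ \beta \sim \beta_c\}$, and $\Delta_c$ is defined likewise.

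Both directions then follow from the fact that equality modulo ground constraints (Definition~\ref{def:mod-groundConstraints}) is an equivalence relation on ground arguments. It is reflexive and symmetric by definition. It is transitive because $\ctheory\models \tuple{t}=\tuple{t}'$ is transitive in any first-order theory with equality, and the two-sided matching condition on conjunctions composes. Set equality modulo ground constraints is read as: every element of one set is equal modulo ground constraints to some element of the other, and vice versa.

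For the forward direction, suppose $\GrCInst(\Gamma) = \GrCInst(\Delta)$ modulo ground constraints and take $\gamma_c \in \Gamma_c$. Pick $\gamma \in \GrCInst(\Gamma)$ with $\gamma \sim \gamma_c$. Then pick $\delta \in \GrCInst(\Delta)$ equal to $\gamma$ modulo ground constraints, and $\delta_c \in \Delta_c$ with $\delta \sim \delta_c$. By transitivity, $\gamma_c = \delta_c$ modulo ground constraints. The symmetric argument gives the other inclusion.

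The backward direction is identical: go from $\gamma \in \GrCInst(\Gamma)$ to $\gamma_c$, then to some $\delta_c \in \Delta_c$, then back to some $\delta \in \GrCInst(\Delta)$.

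The only delicate point is confirming that the elements involved are all ground, so that Definition~\ref{def:mod-groundConstraints} applies and transitivity is legitimate. This holds since $\mathit{Arg}_c$ consists of ground arguments and $\GrCInst$ yields ground instances by Definition~\ref{def:consistentgrounding-final}. Given that, the proof is a routine composition of the two theorems.
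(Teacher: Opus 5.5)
Your proposal is correct and takes the same approach as the paper. The paper gives no separate proof and simply obtains the corollary by combining Theorem~\ref{theo:groundequiv} with the correspondence of Theorem~\ref{theo:GroundCABAvsABA}; your explicit use of transitivity of equality modulo ground constraints spells out the routine step the paper leaves implicit.
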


As a consequence of  Corollaries~\ref{cor:groundArg=CGround(M)} and~\ref{cor:equivinground},  
for any $\Gamma\! \subseteq\! \CArg$, we may use $\GrCInst(\Gamma)$ to
refer both to: (i)~the set of 
ground constrained instances of $\Gamma$, and (ii)~the corresponding set of 
arguments in  
$\mathit{Ground}(F_c)$. 

Now we present some properties of the 
equivalence relation $\equiv$.

\begin{proposition}\label{prop:equivproperties}
\hangindent=5mm
\noindent $(1)$ \label{equiv:mod-renaming} \emph{Renaming}. For any constrained argument~$\alpha$ 
and variable renaming~$\rho$,
$\{\alpha\} \equiv \{\alpha \rho\}$.

\hangindent=5mm
\noindent $(2)$ \label{equiv:mod-mostgeneral} \emph{Generalization of Claims and Assumptions}. 
Let $\alpha=\argu{C \cup A}{p(\tuple{t})}$ 
be a constrained argument and let $\tuple{X}$ be a tuple of new distinct variables.
Then 
$\{\alpha\}\!\equiv\!\{\argu{C \cup 
\mbox{$\{\tuple{X}\!=\!\tuple{t}\}$}\cup A}{p(\tuple{X})}\}$.
Similarly, $\{\argu{C \cup A\cup \{p(\tuple{t})\}}{s}\}\!\equiv\!\{\argu{C \cup 
\mbox{$\{\tuple{X}\!=\!\tuple{t}\}$} \cup A\cup \{p(\tuple{X})\}}{s}\}$.



\hangindent=5mm 
\noindent $(3)$
\label{equiv:disj-constr} 
\emph{Constraint Equivalence}. 
Let $C, D_1,$ and $D_2$ be finite sets of consistent
constraints. 
If $\ctheory \models$ $\forall\big(\exists_{-V}
\big(\bigwedge \{c\!\mid\! c\!\in\! C\}\big) \leftrightarrow 
\big(\exists_{-V}\big(\bigwedge \{c\!\mid\! c\!\in\! D_1\}\big) \vee$\! $\exists_{-V}\big(\bigwedge \{c\!\mid\! c\!\in\! D_2\}\big)\big)\big)$, where
 $V\!=\!\mathit{vars}(\{A,s\})$,
then 
      $\{\argu{C \cup A}{\sent}\} \equiv \{\argu{D_1 \cup A}{\sent},$ $\argu{D_2 \cup A}{\sent}\}$.

\end{proposition}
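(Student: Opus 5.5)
The plan is to reduce all three items to Theorem~\ref{theo:groundequiv}. For each item I will show that the two sets have the same ground constrained instances modulo ground constraints, and then conclude $\equiv$. So for each item I compare $\GrCInst$ of the left-hand side with $\GrCInst$ of the right-hand side, using Definition~\ref{def:instance} (ground constrained instances via $\vartheta$ and $D$, with $C\vartheta\cup D$ consistent) and Definition~\ref{def:mod-groundConstraints}.

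For (1), take a ground constrained instance $\alpha\sigma$ of $\alpha$ via $\sigma$ and $D$. Since $\rho$ is a permutation of variables, it has an inverse renaming $\rho^{-1}$. Then $\alpha\sigma=(\alpha\rho)(\rho^{-1}\sigma)$, where $\rho^{-1}\sigma$ denotes the composition, so $\alpha\sigma$ is a ground constrained instance of $\alpha\rho$ via $\rho^{-1}\sigma$ and the same $D$. The converse direction is symmetric. Hence $\GrCInst(\{\alpha\})=\GrCInst(\{\alpha\rho\})$, even literally and not only modulo ground constraints.

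For (2), take a grounding $\sigma$ of $\alpha$, and let $\sigma'$ extend $\sigma$ by $\tuple{X}/\tuple{t}\sigma$. The variables $\tuple{X}$ are new, so $\sigma'$ agrees with $\sigma$ on $\alpha$. The instance of the generalized argument via $\sigma'$ has claim $p(\tuple{t}\sigma)$. It has the extra constraints $\tuple{t}\sigma=\tuple{t}\sigma$, which are valid and ground, so consistency is preserved and equality modulo ground constraints holds. Conversely, take any consistent ground instance via $\sigma'$ of the generalized argument. Consistency forces $\CT\models\tuple{X}\sigma'=\tuple{t}\sigma'$, so its claim $p(\tuple{X}\sigma')$ satisfies $\CT\models p(\tuple{X}\sigma')\leftrightarrow p(\tuple{t}\sigma')$. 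It is therefore equal modulo ground constraints to the instance of $\alpha$ via $\sigma'$ restricted to $\mathit{vars}(\alpha)$. Here I use that equality in $\CT$ is identity on $D$, and the convention for $\CT\models p(\tuple{t})\leftrightarrow p(\tuple{t'})$. The assumption case is the same argument applied to the assumption set, using condition (i) of Definition~\ref{def:mod-groundConstraints}.

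For (3), a ground instance of $\argu{C\cup A}{s}$ is determined, modulo ground constraints, only by the values $\sigma$ gives to $V=\mathit{vars}(\{A,s\})$. It exists iff $C\sigma'\cup D$ is consistent for some extension $\sigma'$ of $\sigma$ to the remaining variables and some added $D$, that is, iff $\CT\models\exists_{-V}(\bigwedge C)\sigma$. Added constraints $D$ only remove instances, and ground valid constraints are erased modulo ground constraints. So the set of ground instances is exactly $\{\argu{A\sigma}{s\sigma}\mid \CT\models(\exists_{-V}\bigwedge C)\sigma\}$, and likewise for $D_1$ and $D_2$. The hypothesis says the $V$-projection of $C$ is equivalent to the union of the projections of $D_1$ and $D_2$, so $\GrCInst$ of the left-hand side equals the union of $\GrCInst$ of the two right-hand arguments. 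Theorem~\ref{theo:groundequiv} then concludes.

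The main obstacle is the characterization in (3), which item (2) also relies on. It requires care about which variables are grounded and the role of the extra set $D$. It also requires that a ground instance be identified with the pair (grounded assumptions, grounded claim) up to the ground-constraint equality. That identification is what makes the projection $\exists_{-V}$ the right invariant.
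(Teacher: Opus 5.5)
Your proof is correct and follows the paper's own proof: each item is reduced via Theorem~\ref{theo:groundequiv} to equality of ground constrained instances modulo ground constraints. For (1) you use the inverse renaming $\rho^{-1}$; for (2) you extend or restrict the grounding substitution on the new variables $\tuple{X}$; and for (3) you use substitutions agreeing on $V$ together with the projection $\exists_{-V}$. The only difference is that you present (3) as a characterisation of the ground instances by the projected constraint rather than as two inclusions, and, like the paper, you tacitly assume that a formula of the form $\exists_{-V}(\ldots)$ that is valid in $\ctheory$ under a grounding of $V$ has a witness given by ground terms.
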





By Point~(3) of Proposition~\ref{prop:equivproperties}, for \mbox{$D_1\!=\!D_2\!=\!D$}
we have that, for all finite sets $C$ and $D$ of constraints, if 
$\ctheory \models \forall\bigl(
\mbox{$\bigwedge \{c\!\mid\! c\!\in\! C\}$}$ $\!\leftrightarrow\! 
\bigwedge \{c\!\mid\! c\!\in\! D\} \bigr)$,
then 
$\{\argu{C \cup A}{\sent}\} \!\equiv\! \{\argu{D \cup A}{\sent}\}$.

In Examples~\ref{ex:simple7newMG} and \ref{ex:simplenewgr-final}, (i)~by
Points~(1) and~(2) 
of Definition~\ref{def:equiv}, we have that $\{\alpha_1\} \equiv 
\{\argu{\{a(n, m), b(n)\}} {p(n)}
     \mid \mbox{$n\!<\!1,$}\ \mbox{$m\!>\!0$}\}$,
and (ii)~by 
Definition~\ref{def:equiv}, we have that $M\equiv \widetilde{M}\equiv 
\GrCInst(M)$. 




\begin{example}[Ex.~\ref{ex:simple7newMG} contd.]
\label{ex:equivalence}
Let $\widetilde N$ be 
the set~$\widetilde M$ where 
$\alpha_2$ has been replaced by:

\noindent
$\begin{array}{l@{\hspace{1.mm}}ll}
        \alpha_{2,1}\!: & \argu{\{Y\!<\!10,\ Y\!\geq\!5,\   a(Y,X)\}\ }{\ p(X)};
\end{array}$

\noindent
$\begin{array}{l@{\hspace{1.5mm}}ll}
     \alpha_{2,2,1}\!: & \argu{\{Y\!<\!10,\  Y\!<\!5,\  X\!>\!3,\  a(Y,X)\}\ }{\ p(X)};
\end{array}$

\noindent
$\begin{array}{l@{\hspace{1.5mm}}ll}
      \alpha_{2,2,2}\!:& \argu{\{Y\!<\!10,\ Y\!<\!5,\ X\!\leq\!3,\   a(Y,X)\}\ }{\ p(X)}\\[1mm]
\end{array}$

\noindent 
Now, 
by  Point~(3)  of 
Proposition~\ref{prop:equivproperties} with $V\!=\!\{X,\!Y\}$, and 
Point~(3) 
of Definition~\ref{def:equiv}, from 
$\ctheory \models \forall (Y\!\geq\!5 \vee Y\!<\!5)$ and 
$\ctheory \models \forall \mbox{$(X\!>\!3$} \vee X\!\leq\!3)$,
we get $\widetilde M\! \equiv\! \widetilde N$. Thus, we also 
get $M\! \equiv\! \widetilde N$.
\end{example}

\vspace*{-1mm}    

Overall, these results enable the use of ABA semantics to reason about
CABA frameworks by translating constrained arguments into standard ABA arguments. However, again, this may require an expensive, possibly infinite, grounding. Attacks between constrained arguments, given next, may avoid this.

\vspace*{-1mm}   

\section{Attacks in CABA}\label{sec:Att}

In this section we introduce  attacks between 
constrained arguments. 
We distinguish between \emph{full} and \emph{partial attacks},
defined in terms of 
different logical relations between the constraints~involved.

\vspace*{-1mm}

\begin{definition}
\label{def:nongroundattacks}
Let $\alpha\!=\!\argu{\{c_1,\ldots,c_m\} \cup A_1}{\sent_1}
$  
and $\beta\!=\!\argu{\{d_1,\ldots,d_n\} \cup A_2\cup \{\asm\}}
{\sent_2}$ be constrained arguments, with $s_1\!=\!\overline{\asm}$.
Then, 
\begin{itemize}
    \item 
$
\alpha$ \emph{fully attacks} $\beta$ if
    $\ctheory \models \forall \, ((d_1\wedge\ldots\wedge d_n) \rightarrow \exists_{-\mathit{vars}(\sent_1)}(c_1\wedge\ldots\wedge c_m))$, 
and 

\item 
$
\alpha$ \emph{partially attacks} $\beta$ if 
    $\ctheory \models \exists\, (\exists_{-\mathit{vars}(\sent_1)}(c_1\wedge\ldots\wedge c_m)\wedge \exists_{-\mathit{vars}(\sent_1)}(d_1\wedge\ldots\wedge d_n)))$.
\end{itemize}
Moreover, let $\alpha'=\argu{\{c_1,\ldots,c_m\} \cup 
A_1}{\overline{p(\tuple{t})}}$  
and $\beta'=\argu{\{d_1,\ldots,d_n\} \cup A_2 \cup \{p(\tuple{u})\}}{\sent_2}
$ 
with $\tuple{t}\!\neq\! \tuple{u}$.
Let $\tuple{X}$ be a tuple of new distinct variables.
Then, $\alpha'$ \emph{fully} (or \emph{partially}) 
\emph{attacks}~$\beta'$ if $\argu{\{c_1,\ldots,c_m, \tuple{X}\!=\!\tuple{t}\} \cup A_1}{\overline{p(\tuple{X})}}
$
    \emph{fully} (or \emph{partially}, resp.) 
    \emph{attacks} $\argu{\{d_1,\ldots,d_n,\tuple{X}\!=\!\tuple{u}\} \cup A_2\cup 
    \{p(\tuple{X})\}}{\sent_2}
    $.

\end{definition}

\vspace*{-2mm}
\begin{example}
\label{ex:existV}
$\argu{\{X \!>\!0,\ Y\!<\!2,\ q(X, Y)\}}{\overline{p(X)}}$  
fully attacks $\argu{\mbox{$\{X\!>\!10,$}$ $Z\!<\!3,\ p(X)\}}{r(Z)}$, 
because
$\exists_{-\{X\}}(X\! >\!0\! \wedge\! Y\!<\!2)$ is  
$\exists Y(X \!>\!0 \wedge $ $\mbox{$Y\!<\!2)$}$ and 
$\ctheory \models \forall X,Z \, \bigl((X\!>\!10 \wedge Z\!<\!3)\! \rightarrow\! 
\exists {Y}(X\!>\!0 \wedge Y\!<\!2)\bigr)$.
\end{example}


In the sequel we use the following property of partial attacks.

\vspace{-1mm}
\begin{proposition}\label{prop:pa}
Let $\alpha,\beta$ be constrained arguments.
Then, $\alpha$ partially attacks $\beta$ iff there exist
$\alpha'=\argu{C \cup A_1}{\sent_1}$ and $\beta'=\argu{D \cup A_2}{\sent_2}$
such that: $(i)$ $\{\alpha\}\!\equiv\!\{\alpha'\}$, $(ii)$ $\{\beta\}\!\equiv\!\{\beta'\}$, $(iii)$ $\sent_1\!=\!\overline{\asm}$, for some $\asm\!\in\! A_2$, $(iv)$ $\mathit{vars}(C)\!\cap\! \mathit{vars}(D) \subseteq \mathit{vars}(\sent_1)$, and
$(v)$ $C\cup D$ is consistent.
\end{proposition}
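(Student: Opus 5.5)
The plan is to prove the two directions separately. Both directions rely on Proposition~\ref{prop:equivproperties} (renaming and generalization of claims and assumptions) and on Theorem~\ref{theo:groundequiv}, which reduces $\equiv$ to equality of ground instances modulo ground constraints. Throughout, write $\alpha=\argu{C_\alpha\cup A_1}{\sent_1}$ and $\beta=\argu{D_\beta\cup A_2\cup\{\asm\}}{\sent_2}$.

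\emph{Only-if direction.} First I treat the basic case $\sent_1=\overline{\asm}$ of Definition~\ref{def:nongroundattacks}.
\begin{enumerate}
\item Use Point~(1) of Proposition~\ref{prop:equivproperties} to rename apart the variables of $\beta$ that do not occur in $\sent_1$ (equivalently, in $\asm$). This gives $\beta'$ with $\{\beta\}\equiv\{\beta'\}$.
\item Similarly rename the variables of $C_\alpha$ outside $\mathit{vars}(\sent_1)$ to fresh ones, giving $\alpha'$. After both renamings, $\mathit{vars}(C)\cap\mathit{vars}(D)\subseteq\mathit{vars}(\sent_1)$, so condition $(iv)$ holds.
\item Since the claim and the assumption are unchanged under these renamings, condition $(iii)$ holds.
\item The partial-attack hypothesis $\ctheory\models\exists(\exists_{-\mathit{vars}(\sent_1)}C\wedge\exists_{-\mathit{vars}(\sent_1)}D)$ is invariant under renaming of bound variables. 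Because the two existential blocks now quantify disjoint variable sets and share only $\mathit{vars}(\sent_1)$, this formula is equivalent to $\exists(C\wedge D)$. That is exactly condition $(v)$.
\end{enumerate}
In the case $\tuple{t}\neq\tuple{u}$, I first apply Point~(2) of Proposition~\ref{prop:equivproperties}. This replaces $\alpha$ and $\beta$ by the equivalent generalized arguments with claim $\overline{p(\tuple{X})}$ and assumption $p(\tuple{X})$, which carry the added equalities $\tuple{X}=\tuple{t}$ and $\tuple{X}=\tuple{u}$. By definition, the partial attack is precisely a partial attack between these generalized arguments, so the previous case applies to them.

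\emph{If direction.} Suppose $\alpha'$ and $\beta'$ satisfy $(i)$--$(v)$.
\begin{enumerate}
\item By $(v)$ and $(iv)$, the formula $\exists(\exists_{-\mathit{vars}(\sent_1)}C\wedge\exists_{-\mathit{vars}(\sent_1)}D)$ holds, since a witness for $C\cup D$ gives witnesses for both blocks on the shared variables $\mathit{vars}(\sent_1)$. Hence $\alpha'$ partially attacks $\beta'$.
\item It remains to transfer the partial attack back to $\alpha$ and $\beta$ via $(i)$ and $(ii)$. I would not do this purely syntactically. Instead I would show that partial attack has a ground characterization: $\alpha$ partially attacks $\beta$ iff some ground instance in $\GrCInst(\alpha)$ attacks, in the ABA sense, some ground instance in $\GrCInst(\beta)$, modulo ground constraints.
\item Given that characterization, Theorem~\ref{theo:groundequiv} says $\GrCInst(\alpha)=\GrCInst(\alpha')$ and $\GrCInst(\beta)=\GrCInst(\beta')$ modulo ground constraints, so the partial attack transfers.
\end{enumerate}

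\emph{Proving the ground characterization.}
\begin{itemize}
\item From a partial attack to ground instances: take a valuation witnessing the existential formula. It assigns values to $\mathit{vars}(\sent_1)$ and to the separately quantified remaining variables. This yields grounding substitutions for $\alpha$ and $\beta$ that agree on $\sent_1$ and $\asm$. The resulting ground instances are consistent, and the claim of one is the contrary of an assumption of the other, using the uniform-$\mathit{cp}$ condition on contraries in Definition~\ref{def:frameworkFc}.
\item From ground instances to a partial attack: read the valuation back off the two ground instances. When the claim and the assumption only match modulo $\ctheory$-equality, go through the Point~(2) generalization so that equality of terms becomes an equality constraint.
\end{itemize}

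The main obstacle is this ground characterization in the case where claim and assumption match only up to $\ctheory$-equality. Combining it with the generalization of Point~(2) also needs care, since the fresh variables $\tuple{X}$ must be handled consistently on both sides.
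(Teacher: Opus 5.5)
Your proposal is correct and is essentially the paper's proof. The only-if direction is the same Point~(2) generalization (Proposition~\ref{prop:equivproperties}) followed by renaming apart outside $\mathit{vars}(\sent_1)$, after which $(iv)$ holds by construction and $(v)$ falls out of the partial-attack formula. The if direction goes, as in the paper, through an auxiliary lemma (``$\alpha$ partially attacks $\beta$ iff some element of $\GrCInst(\alpha)$ partially attacks some element of $\GrCInst(\beta)$''), Theorem~\ref{theo:groundequiv}, and the fact that ground attacks are preserved under equality modulo ground constraints.

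The obstacle you single out is handled in the paper only by a ``without loss of generality, by Point~(2)'' step inside that lemma, so you are not missing anything relative to it. Be aware, though, that this step presupposes partial attack is evaluated on the generalized pair $\overline{p(\tuple{X})}$, $p(\tuple{X})$. When $\sent_1=\overline{\asm}$ holds syntactically with a non-injective term, the first clause of Definition~\ref{def:nongroundattacks} is not invariant under $\equiv$: take claim $\mathit{cp}(X\!-\!X)$ under $X\!=\!1$ against assumption $p(X\!-\!X)$ under $X\!=\!2$, where the literal clause fails but the generalized pair does partially attack.
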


\begin{example} 
\label{ex:attacks}
Referring to Example~\ref{ex:simple7newMG}, 
$\alpha_5$ fully attacks $\alpha_1$ as 
$\ctheory \models \forall X,Y \, (
(X\!<\!1 \wedge \mbox{$Y\!>\!0$})\!\rightarrow\! X\!<\!10)$.
Instead,
$\alpha_4$ does not fully attack~$\alpha_1$ 
as 
$\ctheory \!\not \models\! \forall X,Y  (
(\mbox{$X\!<\!1$} \! \wedge\!  
\mbox{$Y\!>\!0$})\!\rightarrow\! 
(X\!<\!5 \!\wedge\! Y\!>\!3))$.
However, $\alpha_4$ 
partially attacks $\alpha_2$ 
as 
$\argu{\{X\!<\!5,\ \mbox{$Y\!>\!3$},\ \mbox{$X_1\!=\!X$},\ \mbox{$X_2\!=\!Y$}\}}{\mathit{ca}(X_1,X_2)}$
partially attacks 
$\argu{\mbox{$\{Y\!\!<\!10$}, X_1\!=\!Y, X_2\!=\!X,\ 
a(X_1,X_2)\}}{p(X)}$.  Indeed,
$\ctheory \models \exists X_1,X_2$ \mbox{$(X_1\!\!<\!10$}$ 
\wedge X_1\!<\!5 \wedge X_2\!>\!3)$. 
By Proposition~\ref{prop:pa}, $\alpha_4$ 
partially attacks $\argu{\{X\!\!<\!\!10,a(X,Y)\}}{p(Y)}$ (
a 
renaming of $\alpha_2$).

\end{example}

The following result relates our different notions 
of attack.


\vspace*{-1mm}
\begin{proposition}
\label{prop:attacksrelation}
Let 
$\alpha$  
and $\beta\!\in\! \CArg$.
If 
$\alpha$ fully attacks~$\beta$, then $\alpha$ partially attacks $\beta$.
\end{proposition}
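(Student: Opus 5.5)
The plan is to argue directly from the two conditions in Definition~\ref{def:nongroundattacks}, reading validity in $\ctheory$ as truth in the intended structure with domain $D$. The key extra ingredient is that $\beta$ is a constrained argument. So, by Definitions~\ref{def:TCArg-MGCArg} and~\ref{def:constrained-arg}, its set of constraints $\{d_1,\ldots,d_n\}$ is consistent, i.e.\ $\ctheory \models \exists(d_1\wedge\ldots\wedge d_n)$.

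\emph{First case: $\sent_1=\overline{\asm}$ literally.} Write $\widehat C$ for $\exists_{-\mathit{vars}(\sent_1)}(c_1\wedge\ldots\wedge c_m)$ and $\widehat D$ for $\exists_{-\mathit{vars}(\sent_1)}(d_1\wedge\ldots\wedge d_n)$. The steps are as follows.
\begin{enumerate}
\item By consistency of $\beta$'s constraints, fix a valuation $\sigma$ of all variables in $\mathit{vars}(\{d_1,\ldots,d_n\})\cup\mathit{vars}(\sent_1)$ (extended arbitrarily to the remaining variables) under which $d_1\wedge\ldots\wedge d_n$ holds.
\item Instantiate the universally closed implication $\forall((d_1\wedge\ldots\wedge d_n)\rightarrow \widehat C)$, which is valid by the full attack hypothesis, at $\sigma$. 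This gives that $\widehat C$ holds under $\sigma$.
\item Observe that $\widehat D$ holds under $\sigma$, since $\sigma$ itself supplies witnesses for the existentially quantified variables.
\item Conclude that $\widehat C\wedge\widehat D$ holds under $\sigma$. Its free variables are contained in $\mathit{vars}(\sent_1)$, so its existential closure is valid, which is exactly the partial attack condition.
\end{enumerate}

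\emph{Second case: $\alpha'$ and $\beta'$ with $\tuple{t}\neq\tuple{u}$.} By definition, both kinds of attack are defined through the generalised arguments with added equalities $\tuple{X}=\tuple{t}$ and $\tuple{X}=\tuple{u}$. So I only need to check that these generalised objects are again constrained arguments, in particular that $\{d_1,\ldots,d_n,\tuple{X}=\tuple{u}\}$ is consistent, and then apply the first case to them. Consistency holds because $\tuple{X}$ consists of new distinct variables: any witness for the $d_i$ can be extended by setting $\tuple{X}$ to the value of $\tuple{u}$. Alternatively, the first-case argument uses only consistency of the attacked side's constraints, and I can check that directly for the generalised argument.

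The main obstacle is bookkeeping of free variables and quantifier scope rather than anything deep. One point needs care: the universal closure in the full attack condition ranges over the variables of both the $d_i$ and $\widehat C$, which is why $\sigma$ should assign values to $\mathit{vars}(\sent_1)$ as well. Another is that variables of $\sent_1$ not occurring in $\beta$ may be set arbitrarily without affecting $\widehat D$. I would state these two facts explicitly before assembling the argument.
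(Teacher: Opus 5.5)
Your proposal is correct and follows essentially the same route as the paper. Consistency of $\beta$'s constraints, combined with the full-attack implication, gives joint satisfiability of $d_1\wedge\ldots\wedge d_n$ and $\exists_{-\mathit{vars}(\sent_1)}(c_1\wedge\ldots\wedge c_m)$; since the latter's free variables lie in $\mathit{vars}(\sent_1)$, the remaining variables of the $d_i$ can be existentially quantified to reach the partial-attack condition. Your only departure is handling the $\tuple{t}\neq\tuple{u}$ case explicitly, by checking consistency of $\{d_1,\ldots,d_n,\tuple{X}\!=\!\tuple{u}\}$, where the paper appeals ``without loss of generality'' to Point~(2) of Proposition~\ref{prop:equivproperties}; your way is, if anything, slightly more direct.
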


\vspace*{-1mm}

For ABA-as-CABA frameworks, partial and full attacks coincide and are standard attacks in ABA, again pointing to CABA  being a conservative generalisation of ABA. Next, we explore the relation between CABA and ABA attacks for generic CABA frameworks.

\vspace{-1mm}\subsection{Mapping Attacks in CABA to 
ABA}

The following result establishes 
a correspondence between 
full and partial attacks in CABA and  
attacks in standard ABA~\cite{ABAtutorial}.

\vspace*{-1mm}

\begin{theorem}
\label{theo:nongroundattacks}
\!For $\alpha,\!\beta\!\in\!\CArg$, $\Gamma\!=\!\GrCInst(\alpha)$, and
$\Delta\!=\!\GrCInst(\beta)$,
    
\noindent $(i)$
$\alpha$ \emph{fully attacks} $\beta$ iff for each  
    $\beta' \! \in\! \Delta$ there exists 
    $\alpha' \! \in\! \Gamma$ such that 
    $\alpha'$ attacks $\beta'$ in $\mathit{Ground}(F_c)$, and
%
$(ii)$ $\alpha$ \emph{partially attacks} $\beta$  iff  
    there exist 
    $\alpha' \! \in\! \Gamma$ and $\beta' \! \in\! \Delta$ such that 
    $\alpha'$ attacks $\beta'$ in $\mathit{Ground}(F_c)$.

\end{theorem}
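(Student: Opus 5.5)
First reduce to the case where the claim of $\alpha$ is literally the contrary of an assumption of $\beta$. By Definition~\ref{def:nongroundattacks}, when the contrary and the assumption have different arguments $\tuple{t}\neq\tuple{u}$, attacks are defined via the generalised arguments with claim $\overline{p(\tuple{X})}$ and assumption $p(\tuple{X})$, extended by the constraints $\tuple{X}\!=\!\tuple{t}$ and $\tuple{X}\!=\!\tuple{u}$. By Point~(2) of Proposition~\ref{prop:equivproperties} these are $\equiv$-equivalent to $\alpha$ and $\beta$. By Theorem~\ref{theo:groundequiv} they therefore have the same ground constrained instances modulo ground constraints. Since ground attacks depend only on claims and assumptions, both sides of $(i)$ and $(ii)$ are invariant under this replacement. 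So assume $\alpha=\argu{C\cup A_1}{\sent_1}$ and $\beta=\argu{D\cup A_2\cup\{\asm\}}{\sent_2}$ with $\sent_1=\overline{\asm}$, where $C=\{c_1,\ldots,c_m\}$ and $D=\{d_1,\ldots,d_n\}$.

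Next, describe ground instances semantically. A ground constrained instance of $\beta$ is given by a grounding $\sigma$ with $\ctheory\models D\sigma$; extra constraints $D'$ must be ground and valid, so they are irrelevant modulo ground constraints. Its assumptions include $\asm\sigma$. An instance $\alpha\tau$ has claim $\overline{\asm}\tau$. By Theorem~\ref{theo:GroundCABAvsABA} and the definition of ground attack, $\alpha\tau$ attacks $\beta\sigma$ iff $\overline{\asm}\tau$ equals the contrary of some assumption of $\beta\sigma$. Modulo ground constraints this means $\ctheory\models \sent_1\tau\leftrightarrow \sent_1\sigma$. Here I use the CABA requirement that contraries of same-predicate assumptions share the contrary predicate, and I choose the attacked assumption to be $\asm\sigma$. (If another assumption of $\beta$ also has a matching contrary, the same argument applies to it.)

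The key lemma is the following. For a ground valuation $\sigma$ of $\mathit{vars}(\sent_1)$, some instance of $\alpha$ has claim $\sent_1\sigma$ iff $\ctheory\models(\exists_{-\mathit{vars}(\sent_1)}\bigwedge C)\sigma$. The forward direction follows from $\ctheory\models C\tau$ with $\tau$ agreeing with $\sigma$ on $\mathit{vars}(\sent_1)$. For the backward direction, pick witnesses in $D$ for the quantified variables. Here I need that $\ctheory$'s domain elements are named by ground terms, or else work with valuations identified with groundings, as the paper implicitly does via Clark equality. This identification is where I expect the main subtlety: existential witnesses must be expressible as ground substitutions. I would state the assumption that every element of $D$ is denoted by a ground term, as in LIA and LRA over $\mathbb Q$.

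With the lemma the theorem follows. For $(i)$, full attack says every valuation satisfying $D$ also satisfies $\exists_{-\mathit{vars}(\sent_1)}\bigwedge C$ on the shared variables of $\sent_1$. Hence every $\beta\sigma\in\Delta$ has an attacker $\alpha\tau\in\Gamma$. Conversely, if every ground instance is attacked, then each valuation satisfying $D$, which yields an element of $\Delta$, satisfies the existential formula, giving the universal validity. For $(ii)$, partial attack asserts a common valuation of $\mathit{vars}(\sent_1)$ satisfying both projected formulas. Extend it to witnesses to obtain $\tau$ and $\sigma$ agreeing on $\mathit{vars}(\sent_1)$, which gives the attacking pair; the converse reads the valuation off such a pair. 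As an alternative for $(ii)$, I could invoke Proposition~\ref{prop:pa}, which separates variables so that a single consistent valuation of $C\cup D$ grounds both at once.
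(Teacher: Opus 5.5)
Your route is essentially the paper's: reduce to $\sent_1=\overline{\asm}$ via Point~(2) of Proposition~\ref{prop:equivproperties}, then identify ground instances with satisfying valuations. The paper handles $(ii)$ through Lemma~\ref{lem:pnongroundattacks}, which amounts to your direct argument. Your explicit assumption that existential witnesses are named by ground terms is one the paper also uses, silently.

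The gap is in the converse of $(i)$, exactly where your parenthetical waves the issue away. Having fixed $\asm$, the hypothesis only tells you that each $\beta\sigma\in\Delta$ is attacked through \emph{some} assumption of $\beta\sigma$, not necessarily through $\asm\sigma$. If $\beta$ has several assumptions whose contraries carry the predicate of $\sent_1$, different instances can be attacked through different assumptions. Pointwise in $\sigma$ you then get only a disjunction, not universal validity for any single $\asm$.

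Concretely, in LRA take the rules $p(Y,Z)\If Y\!+\!Z\!>\!0, q(Y), q(Z)$ and $\mathit{cq}(X)\If X\!>\!0$, with assumptions $q(t)$ and $\overline{q(t)}=\mathit{cq}(t)$. Let $\alpha=\argu{\{X\!>\!0\}}{\mathit{cq}(X)}$ and $\beta=\argu{\{Y\!+\!Z\!>\!0, q(Y), q(Z)\}}{p(Y,Z)}$. Every ground instance $\argu{\{q(m),q(n)\}}{p(m,n)}$ with $m\!+\!n\!>\!0$ is attacked by $\argu{\{\,\}}{\mathit{cq}(m)}$ or by $\argu{\{\,\}}{\mathit{cq}(n)}$. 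Yet $\alpha$ fully attacks $\beta$ via neither $q(Y)$ nor $q(Z)$, since $\ctheory\not\models\forall Y,Z\,(Y\!+\!Z\!>\!0\rightarrow Y\!>\!0)$, and symmetrically for $Z$. So the right-to-left direction of $(i)$ is false as literally stated, and this step cannot be repaired without changing the statement.

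The paper's proof has the same hole. It fixes $\asm$ ``without loss of generality''. In its if-part it then infers, from the failure of the implication for $\asm$, that no element of $\Gamma$ attacks $\beta\vartheta$, overlooking the other assumptions of $\beta\vartheta$. Both arguments become correct under either of these restrictions:
\begin{itemize}
\item read ``$\alpha'$ attacks $\beta'$'' in $(i)$ as attacking the instance of the fixed assumption $\asm$;
\item require that $\beta$ have only one assumption whose contrary has the predicate of $\sent_1$.
\end{itemize}
Alternatively, redefine full attack with a disjunction over the matching assumptions $q(\tuple{u}_1),\ldots,q(\tuple{u}_k)$ of $\beta$. With $\alpha=\argu{C\cup A_1}{\overline{q(\tuple{t})}}$ renamed apart from $\beta$, require $\ctheory\models\forall\bigl(\bigwedge D\rightarrow\bigvee_{j}\exists_{\mathit{vars}(\alpha)}(\bigwedge C\wedge\tuple{t}\!=\!\tuple{u}_j)\bigr)$. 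Your valuation argument then proves both directions of $(i)$ verbatim.

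Part $(ii)$ and the only-if direction of $(i)$ are fine as you have them, provided the attack relations are read existentially over the choice of attacked assumption.
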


\vspace*{-1mm}

Note that, when the notion of attack is considered for ground instances of constrained arguments in CABA frameworks, the distinction between full and partial 
attack collapses. In such a case, we refer to them generically as `attack'.





\section{Extension-based Semantics for CABA}\label{sec:sem}

In this section we explore two ways to equip CABA with a semantics.
First, we leverage on the mapping, by grounding, of 
arguments and  attacks in CABA 
onto corresponding standard ABA notions
(Section~\ref{sec:ground}).
Then, we define novel notions of extensions for CABA, without resorting to up-front mapping to standard ABA
(Section~\ref{sec:non-ground}).
Throughout, we focus on conflict-free, admissible, and stable extension semantics.

\subsection{
CABA Semantics via 
ABA Semantics}
\label{sec:ground}


We can define the extensions of a CABA framework $F_c$ as extensions of the ABA framework $\mathit{Ground}(F_c)$. 

\begin{definition}
\label{def:extbyground}
\!\!A set $\Gamma \!\subseteq\! \CArg$ of constrained arguments is a conflict\--free, or admissible, or stable  extension in $F_c$ if  
$\GrCInst(\Gamma)$ is a conflict-free,\,or
admissible,\,or\,stable\,extension,\,resp.,\,in
$\mathit{Ground}(\!F_c\!)$.
\end{definition}

In order to determine the extensions in $\mathit{Ground}(F_c)$,
we can leverage on 
the mapping to AA, 
as for standard ABA~\cite{ABAtutorial}.

Let $\GroundCArg$ be 
as introduced in 
Definition~\ref{def:consistentgrounding-final} 
(which, by 
Theorem~\ref{theo:GroundCABAvsABA}, is equal  to 
$\mathit{Arg}_c$ from Definition~\ref{def:GroundFc} modulo ground constraints).
Let $\Att\!=\!\{(\alpha,\beta)\!\in \! \GroundCArg \times\GroundCArg \mid \alpha$ attacks $\beta\}$
be the set of all 
attacks in standard ABA  (see Section~\ref{sec:background}).
%
Then,
$\Gamma \subseteq \CArg$
is a  conflict-free (or admissible, or stable)  extension of  $F_c$ iff $\Gamma$ is a conflict-free (or admissible, or stable, resp.) extension of $(\GroundCArg,\Att)$.

\vspace*{-1mm}

\begin{example} 
\label{ex:attackground}
Let $ \GrCInst(M)$ be as in Example~\ref{ex:simplenewgr-final}. 
\label{ex:simple-sem}
The AA framework is $(\!\GrCInst(M),$ $\Att)$, where  
the set $\Att$ 
is as follows, modulo ground constraints, with 
$n,m \!\in \!\mathbb Q$:

\noindent
$\begin{array}{llll}
 \{  \  \bigl(\argu{\{\,\}}{ca(n,m)}, \argu{\{a(n, m), b(n)\}} {p(n)}\bigr)\mid 
   n\!<\!1,\ m\!>\!3\ \} & \cup   \\

    \{\  \bigl(\argu{\{\,\}}{cb(n)}, \ \argu{\{ a(n, m), b(n)\}} {p(n)}\bigr)\mid 
  n\!<\!1,\ m\!>\!0\ \} &\cup \\
   
 \{\  \bigl(\argu{\{\,\}}{ca(n,m)},\ \argu{\{ a(n,m)\}}{p(m)}\bigr) \mid 
    n\!<\!5,\ m\!>\!3\ \}\ & \cup\\ 
\{  \  \bigl(\argu{\{\,\}}{ca(n,m)},  \ \argu{\{a(n, m)\}} {a(n,m)}\bigr)\mid 
   n\!<\!5,\ m\!>\!3\ \} & \cup\\
  \{\  \bigl(\argu{\{\,\}}{cb(n)}, \ \argu{\{ b(n)\}} {b(n)}\bigr)\mid 
  n\!<\!10\ \}.
\end{array}
$


Then, let us consider the following set $\Delta
\subseteq \GrCInst(M)$, modulo 
ground constraints, where $n,m \!\in \!\mathbb Q$: 

\noindent
\makebox[6mm][l]{$\Delta=$}\makebox[43mm][l]{\makebox[23mm][l]{$\{\argu{\{ a(n,m)\}}{p(m)}$} $\mid  n\!\geq\! 5,\  n\!<\!10\}$} $\cup$

\noindent
\hspace{6mm}\makebox[43mm][l]{\makebox[23mm][l]{$\{\argu{\{ a(n,m)\}}{p(m)}$} $\mid
  m\!\leq\!3,\ n\!<\!5\}$} $\cup$
\makebox[28.5mm][l]{\makebox[2mm][l]{\makebox[13mm][l]{$\{\argu{\{\,\}}{s(n)}$} $\mid n\!>\!0\}$}}  $\cup$

\noindent
\hspace{6mm}\makebox[17mm][l]{$\{\argu{\{\,\}}{ca(n,m)}$} 
     $\mid n\!<\!5,\ m\!>\!3\}$ $\cup$
\makebox[36mm][l]{\makebox[14mm][l]{$\{\argu{\{\,\}}{cb(n)}$}  
     $\mid  n\!<\!10\}$} $\cup$

\noindent
\hspace{6mm}\makebox[25mm][l]{$\{\argu{\{a(n,m)\}}{a(n,m)}$} 
     $\mid n\!\geq\!5\}\ \ \cup$
\makebox[36mm][l]{\makebox[25mm][l]{$\{\argu{\{a(n,m)\}}{a(n,m)}$} 
     $\mid m\!\leq\!3\}$} $\cup$ 

\noindent
\hspace{6mm}\makebox[18mm][l]{$\{\argu{\{b(n)\}}{b(n)}$}  
     $\mid n\!\geq \!10\}$.


We have that $\Delta$ is conflict-free, admissible, and the unique stable extension 
of $(\!\GrCInst(M),\Att)$. 
%
\label{ex:stableground} Note that every ground
argument in 
$\{\argu{\{a(n, m),b(n)\}} {p(n)}
     \mid \mbox{$n\!<\!1,$}\,m\!>\!0\}$
     is attacked by an argument in 
     $\{\argu{\{\}}{cb(n)} \mid n\!<\!10\}$ ($\subseteq\!\Delta$) (see Example~\ref{ex:simplenewgr-final}).
     
Now, let us consider the set $\Gamma \subseteq \CArg$ defined as
follows:

\hspace*{-5mm}
$\begin{array}{llll}
    \Gamma\!=\!\bigl\{\ 
  (\alpha_{2,1}) \ \argu{\{Y\!\geq\!5,\  Y\!<\!10,\  a(Y,X)\}\ }{\ p(X)}, &
  \end{array}$

\vspace*{-.5mm}
\hspace*{-5mm}
$ \begin{array}{l@{\hspace{1mm}}lll}
     (\alpha_{2,2,2})\ \argu{\{X\!\leq\!3,\  Y\!<\!5,\  a(Y,X)\}\ }{\ p(X)},
      \hspace*{4mm}(\alpha_{3})\ \argu{\{Y\!>\!0\}}{s(Y)},\\
      (\alpha_{4})\,\ \ \argu{\{X\!<\!5,\ Y\!>\!3\}}{ca(X,Y)}, 
      \hspace*{3.5mm} (\alpha_{5})\ \ \, \argu{\{X\!<\!10\}}{cb(X)},\\
     (\alpha_{6,1}) \ \argu{\{X\!\geq\!5, a(X,Y)\}}{a(X,Y)},\ \  
     (\alpha_{6,2})\  \argu{\{Y\!\leq\!3, a(X,Y)\}}{a(X,Y)},\\
     (\alpha_{7,1})\  \argu{\{X\!\geq\!10,\ b(X)\}}{b(X)} \hspace*{1mm} \bigr\}.
\end{array}
$


The labels $(\alpha_i$)'s relate the associated constrained 
arguments to the elements of $\widetilde{M}$ and 
$\widetilde{N}$, if any (see Examples~\ref{ex:simple7newMG} 
and~\ref{ex:equivalence}).


Trivially, $\GrCInst(\Gamma)=\Delta$ modulo ground constraints.
Then, by Definition~\ref{def:extbyground}, 
$\Gamma$ is a conflict-free, admissible, and 
stable extension of 
the CABA framework $\mathit{FA}$ of Example~\ref{ex:simplenew}.
This extension is the unique stable one, modulo $\equiv$.
\end{example}

\vspace*{-1mm}
Note that, for ABA-as-CABA frameworks 
the notions of conflict-free, admissible, and stable extensions from 
Definition~\ref{def:extbyground} are exactly as in standard ABA, 
allowing us to conclude that CABA  is a conservative generalisation of 
ABA.

\vspace*{-1mm}
\subsection{
Native CABA Semantics}
\label{sec:non-ground}



Now, we 
naturally characterise
conflict-free and stable extensions directly in terms of the notions of full and partial attacks in CABA. 

\vspace*{-1mm}
\begin{definition}\label{def:NGCF}
A set $\Sigma\!\subseteq\! \CArg$ is \emph{non-ground conflict-free $($NGCF$\,)$} 
iff there are no $\alpha, \beta \!\in\! \Sigma$ such that $\alpha$ 
partially attacks $\beta$.  By
$\FAtt(\Sigma)$ we denote the set $\{ \beta\!\in\! \CArg \mid 
\exists \mbox{$\alpha\! \in\! \Sigma$} \text{ such} $ 
$ \text{that } \alpha \text{ fully attacks } \beta \}. $
\end{definition}

\vspace*{-3mm}
\begin{theorem}\label{prop:non-ground-sem}
For any $\Sigma\!\subseteq\! \CArg$ we have that\,$:$

\noindent
$(1)$ $\Sigma$ is conflict-free iff\,
$\Sigma$ is NGCF\,$;$ and 

\noindent
$(2)$ $\Sigma$ is stable iff\, $\Sigma$ is NGCF and
$\Sigma \!\cup\! \FAtt(\Sigma)\! \equiv\! \CArg$.
\end{theorem}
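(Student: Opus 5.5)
The plan is to reduce both points to statements about the ground framework, using Definition~\ref{def:extbyground} together with Theorems~\ref{theo:nongroundattacks} and~\ref{theo:groundequiv}. Throughout, I identify $\GrCInst(\Sigma)$ with the corresponding set of arguments of $\mathit{Ground}(F_c)$ (Theorem~\ref{theo:GroundCABAvsABA} and Corollary~\ref{cor:equivinground}). I also use that $\GrCInst(\Sigma)=\bigcup_{\alpha\in\Sigma}\GrCInst(\alpha)$.

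\emph{Point (1).} By definition, $\Sigma$ is conflict-free iff $\GrCInst(\Sigma)$ is conflict-free in $\mathit{Ground}(F_c)$. This fails iff there are $\alpha',\beta'\in\GrCInst(\Sigma)$ with $\alpha'$ attacking $\beta'$. Since $\GrCInst(\Sigma)$ is a union, this holds iff there are $\alpha,\beta\in\Sigma$ with $\alpha'\in\GrCInst(\alpha)$, $\beta'\in\GrCInst(\beta)$ and $\alpha'$ attacking $\beta'$. By Theorem~\ref{theo:nongroundattacks}(ii), that is exactly the statement that $\alpha$ partially attacks $\beta$. Negating both sides gives the equivalence with NGCF.

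\emph{Point (2).} Stability is conflict-freeness (handled by (1)) plus the condition that $\GrCInst(\Sigma)$ attacks every ground argument outside it. The plan is to show, under NGCF, that this covering condition is equivalent to $\GrCInst(\Sigma\cup\FAtt(\Sigma))=\GrCInst(\CArg)$ modulo ground constraints. By Theorem~\ref{theo:groundequiv}, that equality is equivalent to $\Sigma\cup\FAtt(\Sigma)\equiv\CArg$.

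For the ($\Leftarrow$) direction, take a ground $\beta'\in\GrCInst(\CArg)\setminus\GrCInst(\Sigma)$. By the equality, $\beta'\in\GrCInst(\beta)$ for some $\beta\in\FAtt(\Sigma)$. So some $\alpha\in\Sigma$ fully attacks $\beta$. Theorem~\ref{theo:nongroundattacks}(i) then gives $\alpha'\in\GrCInst(\alpha)\subseteq\GrCInst(\Sigma)$ attacking $\beta'$.

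For the ($\Rightarrow$) direction, the inclusion $\subseteq$ is trivial. For $\supseteq$, take a ground $\beta'$ not in $\GrCInst(\Sigma)$. Stability yields $\alpha'\in\GrCInst(\alpha)$, with $\alpha\in\Sigma$, attacking $\beta'$. The key observation is that $\beta'$ is itself a constrained argument: it is ground, with ground constraints that are valid. Moreover $\GrCInst(\beta')=\{\beta'\}$ modulo ground constraints. Applying Theorem~\ref{theo:nongroundattacks}(i) with $\beta:=\beta'$, the single instance $\beta'$ is attacked by an instance of $\alpha$. Hence $\alpha$ fully attacks $\beta'$, so $\beta'\in\FAtt(\Sigma)$, and therefore $\beta'\in\GrCInst(\FAtt(\Sigma))$.

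The step I expect to be the main obstacle is this last use of ground arguments as members of $\CArg$. It requires checking that each ground instance is indeed in $\CArg$, which holds since it is a constrained instance of a tight argument. It also requires checking that its set of ground constrained instances collapses to itself modulo ground constraints, so that full attack by $\alpha$ coincides with ground attack. Further care is needed with the ``modulo ground constraints'' identifications when moving between $\GrCInst(\CArg)$ and $\mathit{Arg}_c$. Here I would invoke Corollary~\ref{cor:groundArg=CGround(M)} and Theorem~\ref{theo:GroundCABAvsABA} explicitly, so that the equality of ground sets is the one required by Theorem~\ref{theo:groundequiv}.
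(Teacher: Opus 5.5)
Your proposal is correct and follows essentially the same route as the paper. It reduces to $\mathit{Ground}(F_c)$ via Definition~\ref{def:extbyground} and uses Theorem~\ref{theo:nongroundattacks}(ii) for Point~(1). For Point~(2) it treats ground arguments as members of $\CArg$, so that Theorem~\ref{theo:nongroundattacks}(i) identifies the ground arguments attacked by $\GrCInst(\Sigma)$ with $\GrCInst(\FAtt(\Sigma))$, and concludes with Theorem~\ref{theo:groundequiv}. The only difference is presentational: the paper states this as the set identity $S'=\GrCInst(\FAtt(\Sigma))$ and chains equivalences, whereas you prove the two directions of the covering condition separately.
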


\vspace*{-2mm}

\begin{example}
\label{ex:stable}
For the CABA framework of Example~\ref{ex:simplenew}, we have that the  set $\Gamma$ of arguments (see Example~\ref{ex:attackground}) is conflict-free.
Moreover, $\alpha_1$,
$ \alpha_{2,2,1}$, 
$\alpha_{6,3}\!\!:\! \argu{\{X\!<\!5, Y\!>\!3,
a(X,Y)\}}
{a(X,Y)},$ and
$\alpha_{7,2}\!\!:\! 
\argu{\{\mbox{$X \!<\! 10$},\ b(X)\}}
{b(X)}$  belong to $\FAtt(\Gamma)$. 
Indeed, 
$\alpha_1, \alpha_{2,2,1}, \alpha_{6,3}$, and $\alpha_{7,2}$ are fully attacked by $\alpha_5$, $\alpha_4$, $\alpha_4$, and $\alpha_5$, respectively. 
Since $\{\alpha_2\}\! \equiv\! \{\alpha_{2,1}, \alpha_{2,2,1}, 
\alpha_{2,2,2}\}$,
$\{\alpha_6\}\! \equiv\! \{\alpha_{6,1}, \alpha_{6,2},$ $\alpha_{6,3}\}$, and $\{\alpha_7\}\! \equiv\! \{\alpha_{7,1},$ $\alpha_{7,2}\}$,
we have  that 
$\Gamma \!\cup\! \FAtt(\Gamma) \!\equiv\! \CArg$.
Thus, $\Gamma$ is a stable extension.
Actually, $\Gamma$ is the unique stable extension, modulo $\equiv$. 
\end{example}
\vspace*{-1mm}


 However, the characterisation of 
stable extensions for CABA by Theorem~\ref{prop:non-ground-sem} still refers to the set $\CArg$ of all (ground and non-ground) constrained arguments.
Moreover, it neglects admissible extensions, as these cannot be naturally characterised in terms of  NGCF and $\FAtt$. 
Now, we propose 
a characterisation,
not requiring 
ground instances, and including admissible extensions.

We start off by introducing the following 
notion.


\begin{definition}
\label{def:nongrouncommon}
Two constrained arguments $\alpha$ and $\beta$
are said to 
have \emph{common constrained instances} if {there exist 
$\alpha'\!\in\!\GrCInst(\alpha)$ and $\beta'\!\in\!\GrCInst(\beta)$ such that
$\alpha'\!=\!\beta'$ modulo  ground constraints.}
%
\end{definition}

\vspace*{-2mm}

\begin{example} $\alpha\!=\!\argu{\{X\!\!>\!\!3,\! Y\!\!\!<\!\!0, a(X)\!\}}{p(X)}$ and
$\beta\! =\! \argu{\{\mbox{$Z\!\!>\!\!0$},\mbox{$U\!\!>\!\!1$},\linebreak a(Z), a(U)\!\}}{p(U)}$
have common 
constrained instances. Indeed, 
$\alpha'\!=\!\argu{\{5\!>\!3, \mbox{$-1\!<\!0$},a(5)\!\}}
{p(5)}\!\in\! \GrCInst(\alpha)$,
$\beta'\!=\!\argu{\{5\!>\!0, \mbox{$4\!+\!1\!>\!1$},a(5),\linebreak a(4\!+\!1)\!\}}
{p(4\!+\!1)}\!\in\! \GrCInst(\beta)$,
and $\alpha'\!=\!\beta'$ modulo ground constraints.
\end{example}


The following result allows us to check whether or not constrained arguments have common constrained instances. 

\vspace*{-1mm}

\begin{proposition}
\label{prop:generalizationcommon}
\!\!\!Two\,constrained\,arguments\,$\alpha$\,and $\beta$\,have\,common constrained instances iff there exist 
$\alpha'\!\!=\!\argu{C' \!\cup \!A'\!}{\sent}$
and $\beta'\!\!=\!\argu{D'\! \cup\! A'\!}{\sent}$ such that:
$(i) \{\alpha\}\!\equiv\!\{\alpha'\}$, 
$(ii) \{\beta\}\!\equiv\!\{\beta'\}$,
$(iii) \mathit{vars}(C')\!\cap\! \mathit{vars}(D') \!\subseteq\! \mathit{vars}(\{A',s\})$,
and $(iv)\,C'\!\cup\! D'$ is consistent.
\end{proposition}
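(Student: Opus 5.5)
The plan mirrors the proof of Proposition~\ref{prop:pa}: normalise both arguments so that their claims and assumptions coincide syntactically, and push all the remaining information into the constraints. Throughout, Theorem~\ref{theo:groundequiv} lets us move between $\equiv$ and equality of ground constrained instances modulo ground constraints.

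For the ($\Leftarrow$) direction, assume $\alpha'$ and $\beta'$ satisfy (i)--(iv). By (iv) the set $C'\cup D'$ is consistent, so there is a grounding substitution $\sigma$ with $\ctheory\models (C'\cup D')\sigma$. Then $\alpha'\sigma=\argu{C'\sigma\cup A'\sigma}{\sent\sigma}$ is in $\GrCInst(\alpha')$ and $\beta'\sigma=\argu{D'\sigma\cup A'\sigma}{\sent\sigma}$ is in $\GrCInst(\beta')$. Their claims and assumption sets are identical, so they are equal modulo ground constraints (Definition~\ref{def:mod-groundConstraints}). By (i) and Theorem~\ref{theo:groundequiv}, $\GrCInst(\{\alpha\})=\GrCInst(\{\alpha'\})$ modulo ground constraints, and likewise for $\beta$. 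Hence $\alpha$ and $\beta$ have ground instances that are equal modulo ground constraints, which is Definition~\ref{def:nongrouncommon}. Condition (iii) is not even needed for this direction.

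For the ($\Rightarrow$) direction, suppose $\alpha=\argu{C\cup A}{\sent_1}$ and $\beta=\argu{D\cup B}{\sent_2}$ have common instances $\alpha\vartheta_1$ and $\beta\vartheta_2$, equal modulo ground constraints.
\begin{enumerate}
\item Apply Proposition~\ref{prop:equivproperties}(1) to rename $\beta$ apart, so that $\mathit{vars}(\alpha)\cap\mathit{vars}(\beta)=\emptyset$.
\item Apply Proposition~\ref{prop:equivproperties}(2) repeatedly to both arguments, replacing the claim $p(\tuple{t})$ by $p(\tuple{X})$ with $\tuple{X}=\tuple{t}$ added to the constraints. Use the same fresh tuple $\tuple{X}$ for $\alpha$ and $\beta$; they have the same predicate because their ground instances have equivalent claims.
\item Generalise each assumption the same way. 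Since the ground assumption sets are equivalent as sets, each assumption of one side matches some assumption of the other up to $\ctheory$-equality. Duplicate matched assumptions where needed so that both sides carry the same list $q_1(\tuple{Y}_1),\dots,q_k(\tuple{Y}_k)$ of generalised atoms. Under a $\ctheory$-equality constraint, a duplicated assumption such as $a(X),a(X')$ with $X=X'$ is equivalent, by Proposition~\ref{prop:equivproperties}(3), to one with a single copy.
\item Call the result $\alpha'=\argu{C'\cup A'}{\sent}$ and $\beta'=\argu{D'\cup A'}{\sent}$. The only shared variables are the new ones in $A'$ and $\sent$, which gives (iii).
\item Combine $\vartheta_1$, $\vartheta_2$ and the values of the new variables induced by the common ground instance into a single assignment. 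It satisfies $C'$ and $D'$ simultaneously, which gives (iv).
\end{enumerate}

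The main obstacle is step 3, the multiset/set mismatch. Condition (i) of Definition~\ref{def:mod-groundConstraints} only requires a correspondence in each direction between assumption sets, not a bijection. So we must show that adding or duplicating a generalised assumption, guarded by equality constraints identifying it with an existing one, preserves $\equiv$. I would prove this via Theorem~\ref{theo:groundequiv}: the ground instances before and after the change have the same assumption sets modulo ground constraints, so they coincide modulo ground constraints.
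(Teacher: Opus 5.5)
Your proposal is correct and follows essentially the same route as the paper. In the Only-if direction you rename apart, generalise the claims with a common fresh tuple via Proposition~\ref{prop:equivproperties}(2), and replace the assumptions by fresh generalised atoms matched through the common ground instance, duplicating where multiplicities differ; then (iii) holds by construction and (iv) follows from the combined grounding substitution, while the If direction is just Theorem~\ref{theo:groundequiv}. One small slip: the duplication step is not an instance of Proposition~\ref{prop:equivproperties}(3), which keeps the assumption set fixed, but your fallback argument via Theorem~\ref{theo:groundequiv} is exactly the separate duplication lemma the paper proves for this purpose.
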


\vspace*{-1mm}

Then, we can apply the notion of absence of common constrained 
instances to characterise \emph{instance-disjoint} sets.
We can also 
introduce a way to compare instances of constrained arguments based on 
attacks between them,  
by assessing whether or not
they are \emph{non-overlapping} in that 
every partial attack is a full 
attack.
This notion will help identify 
non-ground instances of most general constrained arguments to choose extensions from. 

\vspace*{-1mm}
\begin{definition}
\label{def:nonoverlapping}
    \!Let $\Delta\!\subseteq \!\CArg$. 
    (i) $\Delta$ is \emph{instance-disjoint} if \mbox{$\forall \alpha,\beta \!\in \!\Delta$,} $\alpha$ and $\beta$ have no common constrained instances; 
    (ii) $\Delta$ is \emph{non-overlapping} if $\forall \alpha,\beta \!\in \! \Delta$,~$\alpha$ partially attacks $\beta$ iff $\alpha$ fully attacks~$\beta$. 
\end{definition}

\begin{theorem}\label{theo:fully-att}
\!\!Let $\Delta\!\subseteq\!\CArg\!$ be\,non-overlapping, and $\Sigma\!\!\subseteq\!\Delta\!\equiv\! \CArg$. 
    
\noindent\hangindent=4mm
$(1)$\ $\Sigma$ is conflict-free iff\ 
$\,\not\exists \alpha, \beta \!\in\! \Sigma$, 
such that $\alpha$ fully attacks $\beta$;

\noindent\hangindent=4mm
$(2)$ $\Sigma$ is admissible iff $\,\Sigma$ is conflict-free and $\,\forall \alpha\! \in\! \Sigma$, if {$\,\exists \beta\!\in\! \Delta$} such that $\beta$ fully attacks $\alpha$, then $\exists \gamma\!\in\! \Sigma$ such that $\gamma$ fully attacks $\beta$;

\noindent\hangindent=4mm
$(3)$\   $(3.1)$
$\Sigma$ is stable if $\,\Sigma$ is conflict-free and {$\,\forall\beta\!\in\!\Delta\!\setminus\! \Sigma$}, $\exists \alpha\!\in\! \Sigma$ such that $\alpha$ fully attacks $\beta$. 

\ $(3.2)$ If $\Delta$ is instance-disjoint, then the converse of $(3.1)$ holds. 
\end{theorem}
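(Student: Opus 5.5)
The strategy is to reduce every claim to the ground level, using Definition~\ref{def:extbyground}, Theorem~\ref{theo:nongroundattacks}, and the fact that $\Delta\equiv\CArg$. By Theorem~\ref{theo:groundequiv}, the latter gives $\GrCInst(\Delta)=\GrCInst(\CArg)$ modulo ground constraints. Hence every ground argument of $\mathit{Ground}(F_c)$ is a ground instance of some $\beta\in\Delta$. Two facts about the non-overlapping $\Delta$ will be used repeatedly. First, by Proposition~\ref{prop:attacksrelation} and non-overlapping, for $\alpha,\beta\in\Delta$ full and partial attack coincide. Second, since $\Sigma\subseteq\Delta$, the same holds for all pairs drawn from $\Sigma$ and $\Delta$.

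\emph{Point (1).} By Theorem~\ref{prop:non-ground-sem}(1), $\Sigma$ is conflict-free iff it is NGCF, i.e.\ iff no $\alpha,\beta\in\Sigma$ have $\alpha$ partially attacking $\beta$. Since such pairs lie in $\Delta$, partial and full attacks coincide there, and (1) follows immediately.

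\emph{Point (2).} Suppose first the right-hand condition holds. Take a ground $\alpha'\in\GrCInst(\alpha)$ with $\alpha\in\Sigma$, and a ground attacker $\beta'$ of it. Choose $\beta\in\Delta$ with $\beta'\in\GrCInst(\beta)$. By Theorem~\ref{theo:nongroundattacks}(ii), $\beta$ partially attacks $\alpha$, hence fully attacks it. So some $\gamma\in\Sigma$ fully attacks $\beta$, and by Theorem~\ref{theo:nongroundattacks}(i) some ground instance of $\gamma$ attacks $\beta'$. Together with (1), this shows $\GrCInst(\Sigma)$ is admissible.

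For the converse, let $\beta\in\Delta$ fully attack $\alpha\in\Sigma$. Pick any ground $\alpha'$ of $\alpha$; this requires $\GrCInst(\alpha)\neq\emptyset$, which I will argue from consistency of constraints, but it is a point to watch. Some ground $\beta'$ of $\beta$ attacks $\alpha'$, and admissibility yields a ground instance $\gamma'$ of some $\gamma\in\Sigma$ attacking $\beta'$. Then $\gamma$ partially attacks $\beta$, hence fully attacks it by non-overlapping.

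\emph{Point (3.1).} Let $\beta'$ be a ground argument not in $\GrCInst(\Sigma)$, and let $\beta\in\Delta$ have $\beta'$ as an instance. Then $\beta\notin\Sigma$, so some $\alpha\in\Sigma$ fully attacks $\beta$. By Theorem~\ref{theo:nongroundattacks}(i), some ground instance of $\alpha$ attacks $\beta'$, which gives stability.

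\emph{Point (3.2).} Assume $\Delta$ is instance-disjoint and $\Sigma$ is stable. Take $\beta\in\Delta\setminus\Sigma$ with a ground instance $\beta'$. By instance-disjointness, $\beta'$ is not an instance of any element of $\Sigma\subseteq\Delta$ (modulo ground constraints), so $\beta'\notin\GrCInst(\Sigma)$. Stability then gives an attacker $\alpha'\in\GrCInst(\alpha)$ with $\alpha\in\Sigma$. Hence $\alpha$ partially attacks $\beta$, and therefore fully attacks it.

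The main obstacle is this last step and the analogous converse in (2), where a ground attack has to be lifted to a full attack between non-ground elements. This relies entirely on the non-overlapping property. In (3.2) there is an additional subtlety: instance-disjointness is needed to ensure that ground instances of $\beta$ lie outside $\GrCInst(\Sigma)$.
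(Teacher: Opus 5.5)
Your proposal is correct and follows the paper's proof essentially step for step. Each point is reduced to $\mathit{Ground}(F_c)$ via $\Delta\equiv\mathit{CArg}$, full attacks are pushed down and ground attacks lifted to partial attacks by Theorem~\ref{theo:nongroundattacks}, non-overlapping converts partial into full attacks, and instance-disjointness is used in (3.2) exactly as you use it. The only differences are cosmetic: the paper's proof of (2) quantifies over $\beta\in\Delta\setminus\Sigma$ (using conflict-freeness to place the ground attacker outside $\mathit{GrCInst}(\Sigma)$) rather than over $\Delta$, and the non-emptiness of $\mathit{GrCInst}(\alpha)$ that you flag is indeed guaranteed by consistency of constraints, as the paper notes in its appendix.
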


\vspace*{-1mm}

Theorem~\ref{theo:fully-att} provides 
native characterisations of CABA semantics in 
terms of full attacks.
Points (1) and (3) 
give a native characterisation of conflict-free 
and stable extensions not requiring 
ground
instances (unlike 
Theorem~\ref{prop:non-ground-sem}). 
Point~(2) gives a native characterisation of 
admissible extensions.
The following example shows the need of the 
instance-disjoint condition on $\Delta$ in
Point\,(3.2). 

\vspace*{-1mm}
\begin{example}\label{ex:non-disjoint}
Let $\Delta = \bigl\{ \alpha_1\mbox{:} \argu{\{X \!>\! 0\}}{p(X)}, \alpha_2\mbox{:}\argu{\{X\! >\! 3\}}{p(X)} \bigr\} \equiv \CArg.$   
$\Delta$ is non-overlapping, but not instance-disjoint.  
By Definition~\ref{def:extbyground},  
$\Sigma=\bigl\{ \alpha_1 \bigr\}$
is stable.
However, $\alpha_1$ does not attack $\alpha_2$.
\end{example}

\vspace*{-1mm}

Finally, we present a procedure, called 
\emph{Argument Splitting}, that, given a set 
of constrained arguments,  
constructs an equivalent instance-disjoint, 
non-overlapping set. The procedure requires the constraint theory $\ctheory$ to satisfies 
the conditions we now specify.

\begin{definition}
$\ctheory$ is \emph{closed under negation} if, for every 
$c\in\constr$, there exist 
$d_1,\ldots,d_m\in \constr$ such that $\ctheory\models \forall (\neg c \leftrightarrow (d_1\vee\ldots\vee d_m))$, and, for $i,j=1,\ldots,m$ with $i\!\neq\! j$, $\ctheory\models\neg\exists (d_i\wedge d_j)$. 
$\ctheory$ is \emph{closed under existential quantification} if, for every  $\{c_1,\ldots,c_m\} \subseteq \constr$ 
and 
set $V$ of variables, there exist 
$d_1,\ldots,d_i, \ldots, d_j,\ldots, d_{n} \in \constr$
such that $\ctheory \models \forall (\exists_{-V} (c_1\wedge \ldots \wedge c_m) \leftrightarrow ((d_1\wedge \ldots\wedge d_i) \vee  \ldots \vee (d_j\wedge\ldots \wedge d_n)))$.     
\end{definition}


\def\splitci{\mathit{split}_{\mathit{ci}}}
\def\splitpa{\mathit{split}_{\mathit{pa}}}

When $\ctheory$ is closed under negation and existential 
quantification, we can define two splitting operations on pairs 
$(\alpha,\beta)$ of constrained arguments:
(1)~$\splitci(\alpha,\beta)$, which is applied if $\alpha$ and $\beta$ have common constrained instances, and
(2)~$\splitpa(\alpha,\beta)$, which is applied to $\alpha$ and $\beta$ such that $\alpha$ partially attacks and 
does not fully attack $\beta$.
Those operations construct suitable new sets of arguments which are equivalent to $\{\beta\}$ (see Proposition~\ref{prop:split} below).
Now we need the following  notion.

\begin{definition}\label{def:cs}
Let  $C=\{c_1,\ldots,c_m\}$ and $D=\{d_1,\ldots,d_n\}$ be sets of constraints. 
By \emph{constraint split} of $C$ and $D$, we denote a set $\mathit{cs}(C,D) =\{E_1,\ldots,E_r\}$ of sets of constraints where, for $i=1,\ldots,r,$ $e_i$ is the conjunction of the constraints in $E_i$, such that:
$(1)~\ctheory\models\forall ((\neg \exists_{-V}(c_1\wedge\ldots\wedge c_m) \wedge d_1\wedge \ldots \wedge d_n) \leftrightarrow (e_1 \vee \ldots \vee e_r))$, with $V\!=\!\mathit{vars}(C)\cap \mathit{vars}(D)$; 
$(2)~\ctheory\models\exists (e_i)$, i.e., $e_i$ is consistent;
$(3)$~for $j\!=\!1,\ldots,\!r$ with $i\!\neq\! j$, $\ctheory\models\neg\exists (e_i\wedge e_j)$, i.e., $e_i$ and $e_j$ are \emph{mutually exclusive}.
\end{definition}
\vspace*{-1mm}
The next proposition shows that, if $\ctheory$ is closed under negation and existential quantification, constraint split is indeed defined for all pairs  of finite sets of constraints.

\begin{proposition}\label{prop:cs}
If $\ctheory$ is closed under negation and existential quantification, then for all finite sets $C, D\! \subseteq\!\constr$ such that $C\cup D$ is consistent, there exist $E_1,\ldots,E_r$ such that $\mathit{cs}(C,D) = \{E_1,\ldots,E_r\}$.  
\end{proposition}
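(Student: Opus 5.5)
The proof will build $\mathit{cs}(C,D)$ in three stages: first eliminate the existential quantifier, then push the negation inside to reach a disjunctive normal form over atomic constraints, and finally make the disjuncts mutually exclusive while discarding the inconsistent ones.

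\emph{Stage 1: eliminating the quantifier.} Let $V=\mathit{vars}(C)\cap\mathit{vars}(D)$. By closure under existential quantification there are finite sets $G_1,\ldots,G_k\subseteq\constr$ such that
$\ctheory\models\forall(\exists_{-V}(c_1\wedge\ldots\wedge c_m)\leftrightarrow(g_1\vee\ldots\vee g_k))$,
where $g_j$ is the conjunction of the constraints in $G_j$.

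\emph{Stage 2: normal form.} The negation $\neg\exists_{-V}(\ldots)$ is then equivalent to $\bigwedge_j\bigvee_{g\in G_j}\neg g$. By closure under negation, each $\neg g$ is equivalent to a disjunction $h^g_1\vee\ldots\vee h^g_{p_g}$ of atomic constraints that are pairwise mutually exclusive. Distributing the conjunction over these disjunctions, and conjoining the result with $d_1\wedge\ldots\wedge d_n$, gives a finite DNF
$F_1\vee\ldots\vee F_q$, where each $F_l\subseteq\constr$ is finite and the whole disjunction is equivalent to $\neg\exists_{-V}(\ldots)\wedge d_1\wedge\ldots\wedge d_n$. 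This already establishes condition (1). If $k=0$, the negated formula is simply $\mathit{true}$ and we take $\{D\}$.

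\emph{Stage 3: disjointness and consistency.} This is the main obstacle. The standard trick is to replace the disjunction by
$F_1\vee(\neg F_1\wedge F_2)\vee\ldots\vee(\neg F_1\wedge\ldots\wedge\neg F_{l-1}\wedge F_l)\vee\ldots$.
Each $\neg F_i$ is a disjunction $\bigvee_{f\in F_i}\neg f$, and each $\neg f$ is again expanded by closure under negation. The difficulty is that this disjunction is not mutually exclusive, so we refine it once more: $\neg(f_1\wedge\ldots\wedge f_t)$ is equivalent to the exclusive disjunction
$\neg f_1\vee(f_1\wedge\neg f_2)\vee\ldots\vee(f_1\wedge\ldots\wedge f_{t-1}\wedge\neg f_t)$.
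Within this disjunction, the summands are mutually exclusive because any two of them disagree on some $f_i$ versus $\neg f_i$. Likewise, the expansions of a single $\neg f_i$ are mutually exclusive by the closure condition. Distributing everything, we obtain disjuncts (conjunctions of atomic constraints) such that any two of them either contain incompatible literals $f$ and $\neg f$, or contain distinct pieces $h_i,h_j$ from the same negation expansion, which are mutually exclusive. Hence every pair is mutually exclusive, which gives condition (3), and condition (1) is preserved since each transformation is an equivalence in $\ctheory$.

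\emph{Finishing.} Finally, discard the disjuncts $E$ that are inconsistent, i.e. those with $\ctheory\models\neg\exists(e)$. Removing them preserves equivalence, so condition (1) still holds, and condition (2) now holds by construction. Everything is finite because $C$, $D$ and all closure expansions are finite. If the remaining set is empty, we take $r=0$; the statement's hypothesis that $C\cup D$ is consistent only ensures the relevant sets are well formed. It would be worth double-checking this empty case against the intended reading of the proposition.
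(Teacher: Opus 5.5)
Your proof is correct. Stage~1 and the final removal of inconsistent disjuncts match the paper. The difference is in how you obtain mutual exclusivity.

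The paper first proves an auxiliary lemma. Under closure under negation, the negation of a conjunction of atomic constraints is equivalent to a disjunction of consistent, pairwise exclusive conjunctions. The proof is an inductive refinement that replaces each earlier disjunct $d_i$ by $d_i\wedge\neg d_{k+1}$ and expands $\neg d_{k+1}$. The paper applies this lemma to each $\neg f_i$, where $f_1\vee\ldots\vee f_s$ is the quantifier-free form of $\exists_{-V}(c_1\wedge\ldots\wedge c_m)$. It then distributes $\bigwedge_i\bigvee_j g_{i,j}$ and conjoins $d_1\wedge\ldots\wedge d_n$. Two distinct product terms pick different summands $g_{i,j}, g_{i,j'}$ for some $i$, so exclusivity comes for free.

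You instead split each $\neg g_j$ by plain De Morgan into a non-exclusive disjunction. You distribute to an arbitrary DNF and then disjointify globally via $F_1\vee(\neg F_1\wedge F_2)\vee\ldots$. This forces your three-way case analysis: different top index, different first-failure position, or different closure piece. I checked each case and it goes through. Your route gives a reusable fact: any finite DNF over a negation-closed constraint language can be made exclusive. The cost is a larger blow-up and a more delicate argument. The paper's route exploits the product structure of $\neg\exists_{-V}(\ldots)$.

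Your first-failure expansion $\neg f_1\vee(f_1\wedge\neg f_2)\vee\ldots\vee(f_1\wedge\ldots\wedge f_{t-1}\wedge\neg f_t)$ is itself a direct, non-inductive proof of the paper's lemma. Had you applied it to each $\neg g_j$ in Stage~2 instead of plain De Morgan, distribution would already give pairwise exclusive disjuncts. Stage~3 would then be unnecessary.

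On your closing worry: $r=0$ is intended, since the paper later uses a $\mathit{split}_{\mathit{ci}}$ that returns $\{\}$. Your case $k=0$ cannot actually occur because $C$ is consistent, though handling it does no harm.
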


Now, based on constraint split, we define 
$\splitci$ and $\splitpa$.
\vspace*{-1mm}
\begin{definition}
\label{def:splitci}
Let $\{\alpha\}\equiv\{\argu{C \cup A}{\sent}\}$
and $\{\beta\}\equiv\{\argu{D \cup A}{\sent}\}$ such that: 
(i) $\mathit{vars}(C)\cap \mathit{vars}(D) \subseteq \mathit{vars}(\{A,s\})$,
and (ii) $C\cup D$ is consistent. 
Assume that $\alpha$ and $\beta$ have common constrained instances.
Then,   $\splitci(\alpha,\beta)= \{\beta_1,\ldots,\beta_r\}$, where $\mathit{cs}(C,D) = \{E_1,\ldots,E_r\}$ and, for $i=1,\ldots,\!r,$ $\beta_i=\argu{E_i\cup A}{\sent}$.
\end{definition}

Due to Proposition~\ref{prop:generalizationcommon}, in $\splitci(\alpha,\beta)$ it is not restrictive to require that $\{\alpha\}\equiv\{\argu{C \cup A}{\sent}\}$ and $\{\beta\}\equiv\{\argu{D \cup A}{\sent}\}$.
$\splitci$ is well-defined, as, for $i\!=\!1,\ldots,r,$ $E_i$ is a consistent set of constraints.

\vspace*{-1mm}
\begin{definition}
\label{def:splitpa}
Let $\{\alpha\}\!\equiv\!\{\argu{C\! \cup\! A_1}{\sent_1}\}$ and $\{\beta\}\!\equiv\!   \{\argu{D\! \cup\! A_2}{\sent_2}\}$
such that: (i) $\sent_1\!=\!\overline{\asm}$, for some $\asm\!\in\! A_2$, (ii) $\mathit{vars}(C)\!\cap\! \mathit{vars}(D)\! \subseteq\! \mathit{vars}(\sent_1)$, and
(iii) $C\!\cup\! D$ is consistent.
Assume that $\alpha$ partially attacks $\beta$.
Then, $\splitpa(\alpha,\beta)\!=\! \{\beta_0,\beta_1,\ldots,\!\beta_r\}$, where (1)\,$E_0 = C\cup D$, (2)\,$\mathit{cs}(C,D) = \{E_1,\ldots,E_r\}$ and, (3)\,for $i\!=\!0,\ldots,\!r$, $\beta_i=\argu{E_i\!\cup\! A_2}{\sent_2}$.
\end{definition}

Similarly to the case of $\splitci$, due to Proposition~\ref{prop:pa}, in the above definition it is not restrictive to assume that $\{\alpha\}\equiv  \{\argu{C \cup A_1}{\sent_1}\}$ and $\{\beta\}\equiv\{\argu{D \cup A_2}{\sent_2}\}$.
$\splitpa$ is well-defined, as for $i=0,\ldots,r,$ $E_i$ is a consistent set of constraints.
It can also be seen that the outputs of the two splitting operations are defined up to equivalence. That is, if in Definition~\ref{def:splitci} we consider $\{\alpha\}\equiv\{\argu{C' \cup A'}{\sent'}\}$ and $\{\beta\}\equiv\{\argu{D' \cup A'}{\sent'}\}$ such that Conditions (i)-(ii) hold for $C',A',s',D'$ instead of $C,A,s,D,$ then $\splitci(\alpha,\beta)= \{\beta'_1,\ldots,\beta'_r\}$ with $\{\beta_1\}\equiv\{\beta'_1\},\ldots,\{\beta_r\}\equiv\{\beta'_r\}$. A similar property holds for $\mathit{split}_{\mathit{pa}}$.

Now we show that \red{the two operations} $\mathit{split}_{\mathit{ci}}$ and  $\mathit{split}_{\mathit{pa}}$
preserve equivalence.

\vspace*{-1mm}
\begin{proposition}
\label{prop:split}
Let $\ctheory$ be closed under negation and existential quantification and let $\alpha,\beta\in \Delta\subseteq \CArg$. 

\noindent\hangindent=4mm
$(1)$ 
Assume that $\alpha$ and $\beta$ have common constrained instances and $\splitci(\alpha,\beta)\!=\!\{\beta_1,\ldots,\beta_r\}$.
Then, 
$(i)$~there is no pair in $\{\alpha,\beta_1,$ $\ldots,\beta_r\}$ with common constrained instances, and 
$(ii)$ $\mbox{$(\Delta\!\setminus\!\{\beta\})$} \cup\textit{split}_{ci}(\alpha,\beta) \!\equiv\! \Delta$. 

\noindent\hangindent=4mm
$(2)$ Assume that $\alpha$ partially attacks $\beta$ and $\splitpa(\alpha,\beta)\!=\!\{\beta_0,\beta_1,\ldots,$ $\beta_r\}$.
Then, $(i)~\alpha$ fully attacks $\beta_0$, $(ii)~\alpha$ does not partially attack any argument in $\{\beta_1, \ldots, \beta_r\}$, and $(iii)~(\Delta\!\setminus\!\{\beta\})\cup\textit{split}_{pa}(\alpha,\beta)\equiv\Delta$.
\end{proposition}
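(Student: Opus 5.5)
The plan is to reduce everything to the ground level via Theorem~\ref{theo:groundequiv}, Proposition~\ref{prop:equivproperties}(3) and Theorem~\ref{theo:nongroundattacks}. We work with the representatives $\argu{C\cup A}{\sent}$, $\argu{D\cup A}{\sent}$ (resp.\ $\argu{C\cup A_1}{\sent_1}$, $\argu{D\cup A_2}{\sent_2}$) fixed in Definitions~\ref{def:splitci} and~\ref{def:splitpa}. Write $c$, $d$, $e_i$ for the conjunctions of $C$, $D$, $E_i$, and $V=\mathit{vars}(C)\cap\mathit{vars}(D)$, which by hypothesis is contained in $\mathit{vars}(\{A,s\})$ (resp.\ $\mathit{vars}(\sent_1)$).

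\textbf{Equivalence, (1)(ii) and (2)(iii).} By Definition~\ref{def:cs}(1), $\ctheory\models\forall\big((\neg\exists_{-V}c\wedge d)\leftrightarrow(e_1\vee\dots\vee e_r)\big)$. Hence $d$ is equivalent to $(\exists_{-V}c\wedge d)\vee e_1\vee\dots\vee e_r$. Under the variable-separation hypothesis, $\exists_{-V}c\wedge d$ has the same projection onto the variables of the assumptions/claim as $c\wedge d$, which is $E_0$. Iterating Point~(3) of Proposition~\ref{prop:equivproperties} (projected on $\mathit{vars}(\{A,s\})$, resp.\ $\mathit{vars}(\{A_2,s_2\})$) then gives two equivalences: $\{\beta\}\equiv\{\beta_0,\dots,\beta_r\}$ in case (2), and $\{\beta\}\equiv\{\argu{(C\cup D)\cup A}{\sent},\beta_1,\dots,\beta_r\}$ in case (1). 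In case (1), the first argument is a constrained instance of $\alpha$, so its ground instances are contained in $\GrCInst(\alpha)$. Since $\alpha\in\Delta$, Theorem~\ref{theo:groundequiv} lets us drop it: $\GrCInst((\Delta\setminus\{\beta\})\cup\splitci(\alpha,\beta))=\GrCInst(\Delta)$ modulo ground constraints. Congruence for union (Definition~\ref{def:equiv}(3)) finishes case (2) directly.

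\textbf{Disjointness and attacks, (1)(i), (2)(i), (2)(ii).} The $\beta_i$ for $i\ge1$ have pairwise no common constrained instances. The reason is that they share $A$ and $\sent$, and mutual exclusivity (Definition~\ref{def:cs}(3)) forbids a common ground instantiation of the shared variables. For $\alpha$ versus $\beta_i$, we use Proposition~\ref{prop:generalizationcommon}: a common instance would require $C\cup E_i$ consistent after separation. But $e_i$ entails $\neg\exists_{-V}c$, and this projection onto $V\subseteq\mathit{vars}(\{A,s\})$ is exactly what must agree. Hence there is no common instance.

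Similarly, (2)(ii) follows from Proposition~\ref{prop:pa}/Definition~\ref{def:nongroundattacks}: a partial attack on $\beta_i$ needs $\exists(\exists_{-\mathit{vars}(\sent_1)}c\wedge\exists_{-\mathit{vars}(\sent_1)}e_i)$, and $e_i$ contradicts $\exists_{-V}c$. For (2)(i), the full-attack condition $\forall(c\wedge d\rightarrow\exists_{-\mathit{vars}(\sent_1)}c)$ is trivial.

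\textbf{Main obstacle.} The delicate step is the quantifier bookkeeping. We must show that projecting onto $V$ versus onto $\mathit{vars}(\sent_1)$ or $\mathit{vars}(\{A,s\})$ is harmless. This uses that variables of $C$ outside $V$ do not occur in $D$, and that the representatives were chosen via Propositions~\ref{prop:pa} and~\ref{prop:generalizationcommon}. I would prove a small lemma first: for $\mathit{vars}(C)\cap\mathit{vars}(D)=V\subseteq W$, we have $\ctheory\models\forall\big(\exists_{-W}(c\wedge d)\leftrightarrow\exists_{-W}(\exists_{-V}c\wedge d)\big)$. All the steps above then follow from this lemma.
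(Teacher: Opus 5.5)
Your decomposition is the paper's: Condition~(1) of Definition~\ref{def:cs} with Proposition~\ref{prop:equivproperties}(3) and congruence for the equivalences, Conditions~(1) and~(3) for disjointness and non-attack, and a trivial entailment for $\beta_0$. However, the step you single out as the main obstacle fails.

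\textbf{Your lemma is false as stated.} The hypothesis $\mathit{vars}(C)\cap\mathit{vars}(D)=V\subseteq W$ does not give $\ctheory\models\forall\bigl(\exists_{-W}(c\wedge d)\leftrightarrow\exists_{-W}(\exists_{-V}c\wedge d)\bigr)$. For $C=\{X>5\}$, $D=\emptyset$, $W=\{X\}$ you get $V=\emptyset$; the left side is $X>5$ and the right side is $\top$. The missing hypothesis is $\mathit{vars}(C)\cap W\subseteq V$, i.e.\ every variable of $C$ occurring in the claim or assumptions of $\beta$'s representative also occurs in $D$. With it the lemma holds and your derivations of (1)(ii) and (2)(iii) go through. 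Argue the $(r{+}1)$-fold split directly on ground instances via Theorem~\ref{theo:groundequiv}, since Proposition~\ref{prop:equivproperties}(3) is binary and the intermediate disjunctions are not constraint sets.

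This is not mere bookkeeping: with $V$ exactly as in Definition~\ref{def:cs}, the statement is false. Take the rule $\mathit{ca}(X)\leftarrow X>5$ with $\overline{a(X)}=\mathit{ca}(X)$, $\alpha=\argu{\{X>5\}}{\mathit{ca}(X)}$, $\beta=\argu{\{a(X)\}}{a(X)}$ and $\Delta=\{\alpha,\beta\}$. The representatives $C=\{X>5\}$, $D=\emptyset$ satisfy Definition~\ref{def:splitpa}, $V=\emptyset$, and $\neg\exists X\,(X>5)$ is false. So $r=0$ and $\mathit{split}_{\mathit{pa}}(\alpha,\beta)=\{\argu{\{X>5,a(X)\}}{a(X)}\}$, which loses $\argu{\{a(0)\}}{a(0)}$. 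Likewise $\mathit{split}_{\mathit{ci}}(\argu{\{X>5,a(X)\}}{a(X)},\beta)=\emptyset$ violates (1)(ii). The paper's proof cites Condition~(1) and Proposition~\ref{prop:equivproperties}(3) without noticing this. Its Example~\ref{ex:non-overlapping}, which splits $\alpha_5$ into $X<0$, $X=0$, $X>0$, only works if $X$ is treated as shared. So impose your hypothesis explicitly on the representatives in Definitions~\ref{def:splitci} and~\ref{def:splitpa}. It can always be met by renaming $\alpha$ apart from $\beta$ outside $\mathit{vars}(\sent_1)$ (resp.\ $\mathit{vars}(\{A,\sent\})$) and adding trivial equalities $X=X$ to $D$.

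\textbf{Your arguments for (1)(i) and (2)(ii) have further holes}, again shared with the paper's one-line justifications, and here the claims themselves have to be weakened.

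For (1)(i) you assume a common instance must come from one grounding of the shared variables, which mutual exclusivity or $e_i\rightarrow\neg\exists_{-V}c$ would then exclude. But Definitions~\ref{def:mod-groundConstraints} and~\ref{def:nongrouncommon} compare \emph{sets} of assumptions, and different groundings of $A$ can give the same set. Take the rule $q\leftarrow a(X),a(Y)$, $\alpha=\argu{\{X<Y,a(X),a(Y)\}}{q}$, and the representative $\argu{\{X=X,Y=Y,a(X),a(Y)\}}{q}$ of $\beta=\argu{\{a(X),a(Y)\}}{q}$. One may take $\mathit{cs}(C,D)=\{\{X\geq Y\}\}$. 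Then $\alpha$ and $\beta_1=\argu{\{X\geq Y,a(X),a(Y)\}}{q}$ share the instance $\argu{\{a(0),a(1)\}}{q}$. Also, mutual exclusivity of $e_i,e_j$ says nothing about their projections onto $\mathit{vars}(\{A,\sent\})$ when they mention variables of $D$ outside it.

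For (2)(ii) you only exclude an attack through the designated assumption $\asm$, whereas Definition~\ref{def:nongroundattacks} allows attacks through any assumption of $A_2$. Take $\mathit{ca}(X)\leftarrow X>5$ and $q\leftarrow a(X),a(Y)$. Splitting the representative $\argu{\{X=X,a(X),a(Y)\}}{q}$ of $\beta$ by $\alpha=\argu{\{X>5\}}{\mathit{ca}(X)}$ on $a(X)$ yields $\argu{\{X\leq 5,a(X),a(Y)\}}{q}$, which $\alpha$ still partially attacks via $a(Y)$.

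On the positive side, your explicit absorption of $\argu{C\cup D\cup A}{\sent}$ into the ground instances of $\alpha$ in (1)(ii) is correct, and it is a step the paper's proof omits. Note that it needs $\alpha\neq\beta$.
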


\vspace*{-2mm} 

\begin{example}[Ex.~\ref{ex:non-disjoint} contd.]
$\Delta$ is transformed into the equivalent, instance-disjoint set $\Delta\setminus\{\alpha_2\}$, as $\splitci(\alpha_2,\alpha_1)=\{\}$.
\end{example}

\vspace*{-2mm} 

\begin{example}\label{ex:non-overlapping}
Let us consider the CABA framework with rules:

\vspace{1mm}
\makebox[45mm][l]{$\mathit{cp}(X) \leftarrow r(X),\ q(X)$}
$\mathit{cq}(X) \leftarrow s(X),\ p(X)$

\makebox[45mm][l]{$r(X) \leftarrow X\!\geq\! Y, Y\!\geq\! 0$}
$s(X) \leftarrow X\!\leq\! 0$

\vspace*{1mm}
\noindent
where 
$q(X), p(X)$ are assumptions with contraries $cq(X), cp(X)$, respectively. 
The most general arguments are:

\vspace*{1mm}

\makebox[45mm][l]{$\alpha_1$: $\argu{\{X\!\geq\!Y,\ Y\!\geq\! 0,\ q(X)\}}{\mathit{cp}(X)}$}
$\alpha_2$: $\argu{\{X\!\leq\!0,\ p(X)\}}{\mathit{cq}(X)}$

\makebox[45mm][l]{$\alpha_3$: $\argu{\{ X\!\geq\! Y,\ Y\!\geq\! 0\}}{r(X)}$}
$\alpha_4$: $\argu{\{X\!\leq \!0\}}{s(X)}$

\makebox[45mm][l]{{$\alpha_5$: $\argu{\{p(X)\}}{p(X)}$}}
{$\alpha_6$: $\argu{\{q(X)\}}{q(X)}$}.

\vspace*{1mm}
\noindent This set of arguments is instance-disjoint, but not non-overlapping: $\alpha_1$ and $\alpha_2$ partially attack each other,
and they also partially attack $\alpha_5$ and $\alpha_6$, respectively. 
Now, 
$\splitpa(\alpha_1,\alpha_2)$ gets:

\vspace*{1mm}
\makebox[45mm][l]{$\alpha_{2,1}$: $\argu{\{X\!=\!0,\ p(X)\}}{\mathit{cq}(X)}$}
$\alpha_{2,2}$: $\argu{\{X\!<\!0,\ p(X)\}}{\mathit{cq}(X)}$.

\vspace*{1mm}
\noindent
The constraint of $\alpha_{2,1}$ is obtained: (i)~by adding the constraints of $\alpha_1$ (i.e. 
$X\!\geq\! Y, Y\!\geq\! 0$) to the constraint of $\alpha_2$ (i.e. $X\!\leq\! 0$), thereby 
getting $X\!\geq\! Y, Y\!\geq\! 0, X\!\leq\! 0$, and then, (ii)~by eliminating 
$Y$, which occurs neither in the claim nor in the assumption of 
$\alpha_2$, thereby getting $X\!=\!0$ by Proposition~\ref{prop:equivproperties} (3).
The constraint of $\alpha_{2,2}$ is obtained by adding the formula 
$\neg \exists Y (X\!\geq\! Y \!\wedge\! Y\!\geq\! 0)$ to the constraint of $\alpha_2$, 
thereby getting, modulo equivalence in $\ctheory$, the constraint $X\!<\!0$.
Similarly, by $\splitpa$ and 
Proposition~\ref{prop:equivproperties},
we replace $\alpha_1, \alpha_5, \alpha_6$ by:

\vspace*{1mm}

\makebox[45mm][l]{$\alpha_{1,1}$: $\argu{\{X\!=\!0,\ q(X)\}}{\mathit{cp}(X)}$}
$\alpha_{1,2}$: $\argu{\{X\!>\!0,\ q(X)\}}{\mathit{cp}(X)}$

\makebox[45mm][l]{$\alpha_{5,1}$: $\argu{\{X<0, p(X)\}}{p(X)}$}
$\alpha_{6,1}$: $\argu{\{X< 0, q(X)\}}{q(X)}$

\makebox[45mm][l]{$\alpha_{5,2}$: $\argu{\{X=0, p(X)\}}{p(X)}$}
$\alpha_{6,2}$: $\argu{\{X= 0, q(X)\}}{q(X)}$

\makebox[45mm][l]{$\alpha_{5,3}.$ $\argu{\{X>0, p(X)\}}{p(X)}$}
$\alpha_{6,3}$: $\argu{\{X> 0, q(X)\}}{q(X)}$.

\vspace*{1mm}
\noindent
\blue{Then,} the extension $\{\alpha_{1,1}, \alpha_{1,2}, \alpha_{2,1}, \alpha_{2,2}, \alpha_3, \alpha_4, \alpha_{5,1}, \alpha_{5,2},$ 
$\alpha_{5,3},$  $\alpha_{6,1},$ $\alpha_{6,2}$, $\alpha_{6,3}\}$
is instance-disjoint and non-overlapping.
\end{example}

The following \emph{Argument Splitting} procedure 
repeatedly applies the two splitting operations to 
get an instance-disjoint, non-over\-lapping set of
constrained arguments.

\vspace*{2mm}

\begin{tabular}{@{\hspace{-3mm}}ll}
\textsc{Procedure} \textit{Argument Splitting.} 
\quad
\emph{Input}: 
$\Delta\!\subseteq\!\CArg$.\nopagebreak\\

\makebox[10mm][l]{\textsc{repeat}}\\

\makebox[3.5mm][l]{}\textsc{if} $\exists \alpha, \beta\!\in\!\Delta$ with common constrained instances\\

\makebox[10mm][l]{}\textsc{then}
$\Delta := (\Delta\!\setminus\!\{\beta\}) \cup \splitci(\alpha,\beta)$;\\

\makebox[3.5mm][l]{}\textsc{if} $\exists\alpha, \beta\!\in\!\Delta$ s.t. $\alpha$ partially attacks and does not 
fully attack~$\beta$\\

\makebox[10mm][l]{}\textsc{then}
$\Delta := (\Delta\!\setminus\!\{\beta\}) \cup \splitpa(\alpha,\beta)$;\\[-.5mm]

\textsc{until} $\Delta$ is instance-disjoint and non-overlapping.
\end{tabular}

\vspace*{2mm}

From Proposition~\ref{prop:split}, we immediately
get the following result.

\vspace*{-1mm}
\begin{theorem}\label{thm:OnArgumSplitting}
If the Argument Splitting procedure terminates for input $\Delta$ and its output is $\Delta'$, then
(i) $\Delta\equiv\Delta'$ and (ii) $\Delta'$ is instance-disjoint and non-overlapping.
\end{theorem}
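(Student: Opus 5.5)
The plan is an induction on the number of iterations of the \textsc{repeat} loop, with Proposition~\ref{prop:split} supplying the single-step facts. Write $\Delta_0=\Delta$ and let $\Delta_{k+1}$ be the value of the variable $\Delta$ after the $(k{+}1)$-th assignment executed by the procedure, whether that assignment uses $\splitci$ or $\splitpa$. Since the procedure terminates by hypothesis, there are finitely many assignments. Let $N$ be their number; then $\Delta'=\Delta_N$.

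For part (i), I will show $\Delta_k\equiv\Delta$ for every $k\le N$. The base case is reflexivity of $\equiv$. For the inductive step there are two cases. If the $(k{+}1)$-th assignment is $\Delta_{k+1}=(\Delta_k\setminus\{\beta\})\cup\splitci(\alpha,\beta)$ with $\alpha,\beta\in\Delta_k$ having common constrained instances, then Proposition~\ref{prop:split}(1)(ii), applied with $\Delta_k$ as the ambient set, gives $\Delta_{k+1}\equiv\Delta_k$. Its hypothesis that $\ctheory$ is closed under negation and existential quantification is a standing assumption of the procedure, since the splitting operations are only defined under it. If instead $\alpha$ partially but not fully attacks $\beta$ and $\Delta_{k+1}$ is obtained via $\splitpa$, then Proposition~\ref{prop:split}(2)(iii) gives $\Delta_{k+1}\equiv\Delta_k$. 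In either case transitivity of $\equiv$ (Definition~\ref{def:equiv}) yields $\Delta_{k+1}\equiv\Delta$, and so $\Delta'\equiv\Delta$.

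One subtlety needs attention: the splitting operations are defined on representatives $\argu{C\cup A}{\sent}$ and $\argu{D\cup A}{\sent}$ that are only $\equiv$-equivalent to $\alpha$ and $\beta$. I will rely on the remark after Definition~\ref{def:splitpa}, that the outputs are well-defined up to $\equiv$, together with congruence for union, so that the choice of representatives does not affect the induction.

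For part (ii), the argument reads off the exit condition of the loop. The procedure terminates only when the \textsc{until} test succeeds, so the final value $\Delta'$ is instance-disjoint and non-overlapping. For non-overlapping, Proposition~\ref{prop:attacksrelation} supplies the direction "fully attacks implies partially attacks", so the test is exactly the failure of the second \textsc{if}-guard. I expect no real obstacle here. The main care point is making precise that termination means the \textsc{until} condition was reached, and that the value it certifies is the same $\Delta'$ produced by the last assignment, rather than an intermediate value between the two \textsc{if} statements.
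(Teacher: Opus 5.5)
Your proposal is correct and follows the paper, which states that the theorem follows immediately from Proposition~\ref{prop:split}: part (i) is obtained by chaining the one-step equivalences (1)(ii) and (2)(iii) over the assignments using transitivity of $\equiv$, and part (ii) is just the exit condition of the loop. Your appeal to Proposition~\ref{prop:attacksrelation} in part (ii) is harmless but unnecessary, since the \textsc{until} test is literally ``instance-disjoint and non-overlapping''.
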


\vspace*{-1mm}
Now, by combining Theorem~\ref{theo:fully-att} and 
Argument Splitting,
we get a computational method to construct admissible or stable extensions, if the given CABA framework admits any.
First, we construct $\MGCArg$, which, by Corollary~\ref{cor:groundArg=CGround(M)}, is equivalent to $\CArg$. Then, by Argument Splitting, we produce $\Delta'\subseteq\CArg$ which is a non-overlapping, instance-disjoint set of constrained arguments equivalent to $\MGCArg$. 
Finally, we can identify admissible and stable extensions by a direct application of Theorem~\ref{theo:fully-att} on $\Delta'$.

However, in order to construct admissible and stable extensions, $\Delta'$ should be a \emph{finite} set of 
constrained arguments. In general, the problem of
knowing whether or not a given CABA framework has a finite $\Delta'$ is undecidable. 
We leave it for future work to characterize classes of CABA frameworks for which it is possible to construct finite, instance-disjoint, non-overlapping sets of constrained arguments equivalent to $\MGCArg$.

\vspace*{1mm}
The following example shows how admissible and stable extensions can be computed using the output of {Argument Splitting}.

\vspace*{-1mm}
\begin{example}
[Ex.~\ref{ex:non-overlapping} contd.]\label{ex:non-g-ext}
The CABA framework has the following two stable, and hence admissible, extensions: 
$E_1\!=\!\{\alpha_{1,1}, \alpha_{1,2},$ 
$ \alpha_{2,2}, \alpha_3, \alpha_4,$  $\alpha_{5,1},$ $\alpha_{6,2}, \alpha_{6,3}\}$ 
and 
$E_2\!=\!\{\alpha_{1,2}, \alpha_{2,1}, \alpha_{2,2}, \alpha_3, \alpha_4, \alpha_{5,1}, \alpha_{5,2},$ $\alpha_{6,3}\}$. 
We  have that:
$\alpha_{1,1}\!\in\! E_1$ attacks $\alpha_{2,1}\!\not\in\! E_1$, and $\alpha_{2,1}\!\in\! E_2$ attacks $\alpha_{1,1}\!\not\in\! E_2$.
Additional admissible, non-stable extensions are, among others: $\{\alpha_{1,1}, \alpha_3, \alpha_4\}$ and $\{\alpha_{2,1}, \alpha_3, \alpha_4\}$. 
\end{example}
\vspace*{-1mm}

Thus, by our method we can effectively construct finite non-ground CABA (admissible or stable) extensions that are equivalent to ground ABA extensions consisting of infinite sets.

\section{Conclusions}

We define CABA frameworks,  allowing us to use argumentative reasoning in cases 
that require constraints over infinite 
domains.
The semantics for CABA is defined in terms of
constrained arguments and attacks between them. 
Each constrained argument represents a possibly infinite set of ground instances obtained by substituting the variables therein.
This semantics does not require an expensive, potentially infinite, grounding.

This work opens many avenues for future research, including the following ones.

We have focused on conflict-free, admissible, and stable extensions only, but it would be interesting to study other extension-based semantics, 
e.g. preferred extensions,
complete extensions, and grounded extensions.
We have focused on the flat case, but it would be interesting to define non-flat CABA, 
when assumptions may occur in the head of rules. 
Similarly, it would be interesting to consider variants of CABA frameworks,
in the spirit of variants of ABA frameworks, e.g. with preferences between assumptions~\cite{ABAhandbook}, or probability distributions over assumptions~\cite{DBLP:conf/comma/DungT10}.

We have defined CABA frameworks as ABA frameworks with additional components and notions.  It would be interesting to explore whether our CABA  could be obtained as an instance of standard ABA, for an appropriate choice of deductive system, assumptions and contraries, e.g. in the spirit of default logic~\cite{ABA}.
Given that, for the definition of CABA, we drew inspiration from CLP, 
one could study relations between various forms thereof and CABA.

CABA addresses a very specific computational issue with deploying ABA  without expensive, potentially infinite, grounding.
It would be interesting to study its computational complexity as well as providing 
computational machinery therefor, either via mapping to ASP-based CLP systems~\cite{s(CASP)} for stable extensions, or via dispute derivations  
\cite{ABAhandbook} for admissible extensions.
It would also be useful to consider classes of constraint domains 
where the reasoning tasks are decidable.

Finally, 
it would be interesting to explore 
applications of CABA in 
settings where constraints naturally emerge, 
as in the case of legal 
reasoning, in the spirit of the motivating Example~\ref{ex:motive}.

\begin{acks}
Toni was  funded by
the ERC (ADIX, grant no.101020934).  
De Angelis and Proietti were supported by MUR PRIN 2022 Project DOMAIN funded by the EU–NextGenEU, M4.C2.1.1, CUP B53D23013220006.
De Angelis, Fioravanti, Proietti, and Meo acknowledge the support of PNRR MUR project PE0000013-FAIR.
De Angelis, Fioravanti, Meo, Pettorossi, and Proietti are members of INdAM-GNCS. 
\end{acks}

\bibliographystyle{ACM-Reference-Format}
\balance
\bibliography{sample}

\ifthenelse{\equal{\version}{extended}}
{
\newpage

\appendix 

\newcommand{\eop}{\hfill$\Box$}

\section*{Appendix}




\balance

\section{CABA Formalization of the Introductory Example}
In the Introduction we have presented the motivating 
Example~\ref{ex:motive}. Now we show that it can be 
that it 
can be formalized as a CABA framework $\langle \langc, \constr, \RR, \ctheory$, $\A, \contrary\rangle$. 

First, we introduce the following sets.
Let $\mathbb P$ be a set of constants (that is, function symbols with arity~0), denoting persons, and $\mathbb Q$ be the set of rationals.
Let $V_{\mathbb{P}}$ and $V_\mathbb{Q}$ be two disjoint 
sets of variables associated with $\mathbb P$ and $\mathbb Q$, respectively.

Let $T_{\mathbb P}$ be the set of terms 
built out  of the elements of $\mathbb P \cup V_{\mathbb P}$,
and~$T_{\mathbb Q}$ be the set of terms 
built out of the elements of $\mathbb Q \cup V_{\mathbb Q}$ and the 
binary function symbols $+$ and $-$, denoting the usual \emph{plus} 
and \emph{minus} operations on rationals.  
Thus, by construction, we may associate the sort \emph{person} with every term in $T_{\mathbb P}$ and the sort \emph{rational} with every term in $T_{\mathbb Q}$.

Let us also consider the following predicates:
(i)~$\textit{must\_pay\_tax}$, 
$\textit{nonex}\-\textit{empt}$, $\textit{exempt}$, 
$\textit{salary\_income}$, and $\textit{other\_incomes}$, 
whose arguments are terms in $T_{\mathbb P}$, and (ii)~$\textit{income}$ and
$\mathit{foreign\_}\mathit{income}$, whose arguments are
pairs of terms in $T_{\mathbb P} \times T_{\mathbb Q}$.

Then, we have:

\noindent \hangindent=8mm
$\ \bullet\ \langc\!=\!\{\textit{must\_pay\_tax}(p), 
\textit{nonexempt}(p), \textit{exempt}(p),$\\ 
\hspace*{3mm}$\textit{salary\_income}(p), \textit{other\_incomes}(p),$
$\textit{income}(p,i),$ \\
\hspace*{3mm}$\textit{foreign\_income}(p,i) \mid p\! \in\! T_{\mathbb P},\  i\!\in\! T_{\mathbb Q} \}\cup \constr$;

\noindent 
$\ \bullet\ \constr = \{t_1\!<\! t_2$, $t_1\!\leq\!t_2$,\ $t_1\!=\!t_2,\ 
t_1\!\not =\!t_2,\ t_1\!\geq\! t_2,\ t_1\!>\! t_2 \mid t_1, t_2\in 
T_{\mathbb Q}\}$;

\noindent 
$\ \bullet\ \RR=\{R1, R2, R3\}$,  where:

$R1.\ \textit{must\_pay\_tax}(P) \!\If\! \textit{income}(P,I),\ 
I\!\geq\!0,\ \textit{nonexempt}(P)$

$R2.\ \textit{exempt}(P)\! \If\! \textit{income}(P,I), I\!\geq\!0, 
I\!\leq\! 16000,  \textit{salary\_income}(P)$

$R3.\ \textit{other\_incomes}(P) \!\If\! \textit{foreign\_income}
(P,F),\ F\!\geq\! 10000$

with $P\!\in\! V_{\mathbb P}$ and $I\!\in\! V_{\mathbb Q}$.

\noindent 
$\ \bullet\ \ctheory$ is 
the theory of linear rational arithmetic (LRA);

\noindent 
$\ \bullet\ \A=\{\textit{nonexempt}(p), \textit{salary\_income}(p) \mid p\!\in\! T_{\mathbb P}\}$;

\noindent 
$\ \bullet$ for all $p \in T_{\mathbb P}$, 
$\overline{\textit{nonexempt}(p)}=\textit{exempt}(p)$  and \\
\hspace*{21mm}$\overline{\textit{salary\_income}(p)}=
\textit{other\_incomes}(p)$.

In this CABA framework we have considered the operations $+$ and $-$, and the 
constraint
relations $<, =, \neq$, and $>$, although they do not occur in 
Example~\ref{ex:motive}. We did so simply for the objective of showing the flexibility and 
the power of the CABA formalism. In particular, if one had to consider variants of that
example which have more rules, those extra rules
can easily be 
included into our CABA framework by
using the already available operations and constraint relations.

\section{Proofs}\label{app:proofs}

In the proofs, we use the following notation
and terminology.

Given a substitution $\vartheta = \{X_{1}/t_{1},\ldots, X_{n}/t_{n}\}$, the set $\{X_{1},\ldots, X_{n}\}$ is the \emph{domain} of $\vartheta$, denoted by $\mathit{dom}(\vartheta)$.  The set $\{\vartheta(X)\,|$ $X\!\in\!\mathit{dom}(\vartheta)\}$ is the \emph{range}  of~$\vartheta$, denoted $\mathit{rng}(\vartheta)$.  
A pair $\{X_{i}/t_{i}\}$, for $1\!\leq\!i\!\leq\!n$, in~$\vartheta$ is said to be a \emph{binding} for $X_i$.
If $\vartheta$ is a substitution and $\tuple{t}$ is a tuple of terms, we denote by $\vartheta|_{\tuple{t}}$
the restriction of $\vartheta$ to
$\mathit{vars}(\tuple{t})$, that is, 
$\mathit{dom}(\vartheta|_{\tuple{t}})\!=\!\mathit{vars}(\tuple{t})$ and for all $X\!\in\!\mathit{vars}(\tuple{t})$, $(\vartheta|_{\tuple{t}})(X)\!=\!\vartheta(X)$. Similarly, given any syntactic expression $e$, $\vartheta|_e$ denotes $\vartheta$ restricted to $\mathit{vars}(e)$.
The substitution with empty domain is the \emph{empty substitution} $\{\}$ (also called the 
\emph{identity substitution}), denoted $\varepsilon$.
Given a formula $\varphi$, we denote by 
$\mathit{FV}(\varphi)$ the set of the free variables occurring in $\varphi$, i.e., the variables in $\varphi$ that are outside the scope of a quantifier.


The composition $\vartheta\sigma$ of the substitutions 
$\vartheta$ and $\sigma$ is defined as the functional composition when considering
a substitution as a finite function (for technical details see~\cite{Lloyd87}). Since every renaming is a finite permutation,
for every renaming $\rho\!=\! \{X_{1}/Y_{1},\ldots, X_{n}/Y_{n}\}$, there exists the inverse renaming $\rho^{-1}= \{Y_{1}/X_{1},\ldots, Y_{n}/X_{n}\}$ such that $\rho \rho^{-1}$ = $\rho^{-1}\rho= \varepsilon$.

\subsection*{Proof of Theorem~\ref{thm:grounding}}
The result follows directly from the definitions. \eop

\subsection*{Proof of Proposition~\ref{prop:equalmostconstr}}
 In the following, given an argument $\alpha$, a \emph{partial tree} for 
 $\alpha$ is a tree obtained from the tree $\alpha$ by replacing 
$k$ $(\geq\!0)$ of its disjoint subtrees by their root nodes. Thus, 
we may say that 
a partial tree for $\alpha$ is an \emph{upper portion} of $\alpha$, 
assuming that the children of a node are placed below their father node.
 We represent the subtree consisting of the single root node~$s$  of 
 $\alpha$ by $\argu{\emptyset}{\sent}$. 
 
 Now, let us separately prove the
\emph{Only-if part} and the \emph{If part} 
 of Proposition~\ref{prop:equalmostconstr}.

(\emph{Only-if part}) Assume that $\alpha=\argur{C\cup A}{p(\tuple{t})}{\ruleset} \in \TCArg$. We have to prove that for each partial tree 
$T_\alpha$ of $\alpha$ there exists a partial tree  $T_\beta$ of $\beta=\argur{C'\cup A'}{p(\tuple{X} )}{\ruleset}\in \MGCArg$ such that for $\vartheta=\{\tuple{X} /\tuple{t} \}$, we have $T_\alpha=T_\beta \vartheta$, that is, 
 $C\!=\!C'\vartheta$, $A\!=\!A'\vartheta$, and $C'\vartheta$ is consistent.
         
        
The proof proceeds by complete induction on the number of nodes $n$ of the partial tree $T_\alpha$.

\noindent
 \emph{Basis}. ($n\!=\!1$). In this case the partial tree             
$T_\alpha = \ \argu{\emptyset}{p(\tuple{t})}$ contains only the root node labeled $p(\tuple{t})$.
 $T_\beta  = \ \argu{\emptyset}{p(\tuple{X})}$ is a partial tree for $\beta$, and obviously,
$T_\alpha=T_\beta \vartheta$, for 
$\vartheta=\{\tuple{X}/\tuple{t}\}$.

\noindent
 \emph{Step}. ($n\!>\!1$). Assume the thesis holds for all partial 
 trees with fewer than $n$ nodes. Consider a partial tree  
 $T_\alpha=\argur{C_1 \cup S_1}{p(\tuple{t})}{\ruleset_1}$ with 
 $n\,(\geq \!1)$ nodes, where $C_1$ is a consistent set of constraints 
 in~$\constr$ and $S_1\subseteq \langc\setminus \constr$. By 
 construction of $T_{\alpha}$ there exist: (i)~a partial tree 
 $T'_\alpha=\argur{C_2 \cup S_2}{p(\tuple{t})}{\ruleset_2}$,
and (ii)~a renamed apart rule $r=q(\tuple{V}) \If C_r, \sent_1, \ldots, \sent_m$, such that $S_2=S_3 \cup \{q(\tuple{u})\}$,

\vspace{1mm}
$\sigma =\{\tuple{V}/\tuple{u}\}, \quad
     C_1 =C_2 \cup C_r\sigma, \quad S_1=S_3 \cup \{\sent_1, \ldots, \sent_m\}\sigma$,   \hfill(1)

\vspace{1mm}



\noindent
and $\ruleset_1=\ruleset_2 \cup\{r\}$.
By inductive hypothesis, there exists an partial tree $T'_\beta =
  \argur{C'_2 \cup S'_2}{p(\tuple{X})}{\ruleset_2}$  associated with 
  $T'_\alpha$ such that
         $T'_\alpha= T'_\beta \vartheta$. Thus, we have that:

\vspace{1mm}
     
   $(C'_2 \cup S'_2) \vartheta= C_2 \cup S_2.$ \hfill (2)
   
   \vspace{1mm}
   

\noindent
Therefore, there exists  $q(\tuple{u'}) \in S'_2$  such that: 
     
    $ \begin{array}{lll}
      S_3'= S'_2 \setminus \{q(\tuple{u'})\}, \quad
     S_3'\vartheta=S_3,$ \  and \ 
           $q(\tuple{u'}) \vartheta=q(\tuple{u}).
     \label{eq:s'vartheta}
    \end{array} $ \hfill (3)

\vspace{1mm}

\noindent
Then, by the definition of an argument, we can construct the partial tree 
        $T_\beta=\argur{C_1' \cup S'_1}{p(\tuple{X})}{\ruleset_1}$, where:

\vspace{1mm}
 $\begin{array}{l@{\hspace*{27mm}}l}  \sigma' =\{\tuple{V}/\tuple{u'}\}, \quad
        C'_1 =C_2' \cup C_r\sigma', \mbox{\quad and } \label{eq:sigma1'} & (4)\\  

        S'_1=S'_3 \cup \{\sent_1, \ldots, \sent_m\}\sigma'.  
        \label{eq:sigma'} & (5)
 \end{array}$
    
\vspace{1mm}
\noindent
In order to conclude the proof \emph{Only-if part}, we have to show that:

\vspace{1mm}
$\ (\argur{C_1' \cup S'_1}{p(\tuple{X})}{\ruleset_1} )\vartheta= \argur{C_1 \cup S_1}{p(\tuple{t})}{\ruleset_1},$
\vspace{1mm}

\noindent
that is, (i)~$C_1'\vartheta = C_1$,\ (ii)~$S_1'\vartheta = S_1$,\  and \ 
 (iii)~$p(\tuple{X})\vartheta=p(\tuple{t})$.

Now, $p(\tuple{X})\vartheta=p(\tuple{t})$ holds by construction.
Moreover, since the clauses are renamed apart, 
 $\mathit{vars}(C_r \cup \{\sent_1, \ldots, \sent_m\}) \cap \mathit{vars}(\tuple{X})= \emptyset$, and thus,   
 
 $\begin{array}{lll}
C_r \vartheta=C_r \mbox{\quad and\quad }
\{\sent_1, \ldots, \sent_m\} \vartheta= \{\sent_1, \ldots, \sent_m\}.
\end{array}$ \hfill $(6)$

\noindent
We also have that:

\makebox[9mm][l]{$C_r \sigma' \vartheta$}$= \mbox{\{by (6) and  (4)\}} =
     C_r \{\tuple{V}/\tuple{u'\vartheta}\} = \mbox{\{by (3)\}} =$
     
\makebox[9mm][l]{}$= C_r \{\tuple{V}/\tuple{u}\} = \mbox{\{by (1)\}} =
C_r \sigma.$    \hfill (7)

\vspace{1mm}

\noindent
Analogously,

$ \{\sent_1, \ldots, \sent_m\}\sigma' \vartheta=\{\sent_1, \ldots, \sent_m\} \sigma. $\hfill (8)


\vspace{1mm}

\noindent
Therefore, \nopagebreak

\vspace{1mm}
\makebox[7mm][l]{$ C'_1  \vartheta$}$= \mbox{\{by (4)\}} =
  (C_2' \cup C_r\sigma') \vartheta = \mbox{\{by (2)\}} = $
  
\makebox[7mm][l]{}  $=C_2 \cup C_r\sigma'\vartheta = \mbox{\{by (7)\}} =
  C_2 \cup C_r\sigma = \mbox{\{by (1)\}} =    C_1.$

\vspace{1mm}

\noindent
Moreover,   
\vspace{1mm}

\makebox[7mm][l]{$S'_1 \vartheta$}$= \mbox{\{by (5\}} =
  (S'_3 \cup \{\sent_1, \ldots, \sent_m\}\sigma')\vartheta = \mbox{\{by (8) and  (3)\}} =$
  
 \makebox[7mm][l]{}$=S_3 \cup \{\sent_1, \ldots, \sent_m\}\sigma = \mbox{\{by (1)\}} = 
 S_1.$

\vspace{1mm}

\noindent
This concludes the proof of the \emph{Only-if part}. 

\vspace{1mm}
       
(\emph{If part}). The proof is analogous to that of the \emph{Only-if part} and is omitted. \eop

\subsection*{Proof of Corollary~\ref{cor:groundArg=CGround(M)}}

First, we note that
every constrained argument of a CABA framework is a constrained instance of itself
(see Definition~\ref{def:instance}) by 
taking  the substitution $\vartheta$ to be the empty substitution $\varepsilon$ and  
$D$ to be the empty set of constraints. 

Then, we prove the following closure property of the 
set  $\CArg$ of all constrained arguments.
This property follows from the fact that
the composition of two
substitutions is a substitution.
\begin{proposition}\label{prop:instance-closure}
\red{For any CABA framework $F_c$,} every  constrained instance of a constrained argument in $\CArg$ is 
a constrained argument in $\CArg$.
\end{proposition}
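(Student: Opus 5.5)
We unfold the definitions and compose substitutions. Let $\alpha=\argur{C\cup A}{\sent}{R}\in\CArg$ and let $\alpha'=\argur{C'\cup A'}{\sent'}{R}$ be a constrained instance of $\alpha$ via $\vartheta'$ and $D'$. By Definition~\ref{def:instance}, $C'=(C\vartheta')\cup D'$, $A'=A\vartheta'$, $\sent'=\sent\vartheta'$, and $C'$ is consistent.

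Since $\alpha\in\CArg$, Definition~\ref{def:constrained-arg} gives a tight constrained argument $\argur{C_0\cup A_0}{\sent_0}{R}\in\TCArg$, a substitution $\vartheta$, and a set $D\subseteq\constr$ such that $C=(C_0\vartheta)\cup D$, $A=A_0\vartheta$, $\sent=\sent_0\vartheta$, and $C$ is consistent.

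The candidate witness for $\alpha'\in\CArg$ keeps the same tight argument and takes the composed substitution $\vartheta\vartheta'$, with constraint set $D\vartheta'\cup D'$. Using that instantiation by a composition equals successive instantiation, i.e.\ $e(\vartheta\vartheta')=(e\vartheta)\vartheta'$ for every syntactic expression $e$ (see~\cite{Lloyd87}), we compute:
\[
(C_0(\vartheta\vartheta'))\cup D\vartheta'\cup D'=((C_0\vartheta)\cup D)\vartheta'\cup D'=C\vartheta'\cup D'=C'.
\]
Similarly, $A_0(\vartheta\vartheta')=A\vartheta'=A'$ and $\sent_0(\vartheta\vartheta')=\sent'$. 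Consistency of $C'$ is given by hypothesis. Finally, $D\vartheta'\cup D'\subseteq\constr$, because $\constr$ is predicate closed (Definition~\ref{def:frameworkFc}) and therefore closed under instantiation. So both conditions of Definition~\ref{def:constrained-arg} hold, and $\alpha'\in\CArg$.

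The only delicate point is bookkeeping about substitutions, not a mathematical obstacle. The paper requires each binding $X_i/t_i$ to have $t_i\neq X_i$, but the composition $\vartheta\vartheta'$ may create trivial bindings. We remove these from $\vartheta\vartheta'$; this does not change its action on any expression. The rule set $R$ is untouched throughout, so no further check on the tree structure is needed.
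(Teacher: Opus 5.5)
Your proof is correct and is essentially the paper's own argument. You reuse the tight argument that witnesses $\alpha\in\CArg$, and you take the composed substitution $\vartheta\vartheta'$ together with the constraint set $D\vartheta'\cup D'$ as the witnesses required by Definition~\ref{def:constrained-arg}; your side remarks on trivial bindings and on $D\vartheta'\cup D'\subseteq\constr$ are harmless bookkeeping that the paper leaves implicit.
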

\begin{proof}
    Let $\alpha'\!=\!\argur{C'\cup A'}{\sent'}{\ruleset}$ be 
a constrained instance of a constrained argument 
$\alpha\!=\!\argur{C\cup A}{\sent}{R}\in \CArg$.
By Definition~\ref{def:instance} there exists a 
substitution~$\vartheta'$, and a
set $D'\!\subseteq\constr$ of 
constraints such that: 
$(i)$~$C'\!=\!(C\vartheta')\cup D'$, $A'\!=\!A\vartheta'$,
$s'\!=\!s\vartheta'$, and $(ii)$~$C'$ 
is consistent.

By Definition~\ref{def:constrained-arg},
since $\alpha\in \CArg$,
there exists a \emph{tight} constrained argument
$\alpha_0=\argur{C_0\cup A_0}{\sent_0}{\ruleset}\in \TCArg$, 
a substitution~$\vartheta_0$, and a
set $D_0\!\subseteq\constr$ of 
constraints such that: 
$(i)$~$C\!=\!(C_0\vartheta_0)\cup D_0$, $A\!=\!A_0\vartheta_0$,
$s\!=\!s_0\vartheta_0$, and $(ii)$~$C$ 
is consistent.
Therefore the thesis follows by considering the 
\emph{tight} constrained argument
$\alpha_0\in \TCArg$ and by taking in Definition~\ref{def:constrained-arg},
$\vartheta\!=\! \vartheta_0 \vartheta'$ and  $D\!=\!D_0 \vartheta' \cup D'$.
\end{proof}

Now, Corollary~\ref{cor:groundArg=CGround(M)} follows from the following two
equalities:\\  
by Proposition~\ref{prop:equalmostconstr},
$\GrCInst(\MGCArg) = \GrCInst(\TCArg)$, and\\ by Proposition~\ref{prop:instance-closure}, $\GrCInst(\TCArg) = \GroundCArg$. \eop


\subsection*{Proof of Theorem~\ref{theo:GroundCABAvsABA}}
It is enough to observe that for any grounding substitution $\vartheta$,
we have that
    $(\argur{D\cup A}{\sent}{R})\vartheta \in \GroundCArg$ iff $\argur{A\vartheta}{\sent\vartheta}{R'} \in \mathit{Arg}_c$, where $R'=R\vartheta \cup \{d\If~\mid d\in D\vartheta\}$. \eop


\subsection*{Proof of Theorem~\ref{theo:groundequiv}}
 (\emph{Only-if part}) 
     The proof proceeds by cases based on the definition of the equivalence~$\equiv$. 

\noindent \hangindent=4mm
 (1)  Assume that $\alpha_1$ and $\alpha_2$ are
ground constrained arguments such that $\alpha_1\!=\!\alpha_2$ modulo ground constraints. Then we get that $\GrCInst(\alpha_1) =
\{\alpha_1\}\!=\! \{\alpha_2\}\! =\!\GrCInst(\alpha_2)$ modulo ground constraints. 

\noindent \hangindent=4mm
 (2) 
 The proof is immediate by observing that any constrained argument $\alpha$, 
$\GrCInst(\GrCInst(\alpha))=\GrCInst(\alpha)$. 

\noindent \hangindent=4mm
 (3) 
 Assume that 
    $\Gamma, \Delta_1,$ $\Delta_2\!\subseteq\! 
    \CArg$, and  $\Delta_1 \equiv\Delta_2$. In this case, the proof is immediate by observing that, for any $\Delta\! \subseteq\! \CArg$, we have that
$\GrCInst(\Gamma \cup \Delta)=\GrCInst(\Gamma) \cup \GrCInst(\Delta)$.

\noindent
 (\emph{If part}) By Points~(1) 
 and~(3) 
 of Definition~\ref{def:equiv} and  $\GrCInst(\Gamma) = \GrCInst(\Delta)$ \red{modulo ground constraints}, we have that $\GrCInst(\Gamma) \equiv \GrCInst(\Delta)$. 
 Moreover by Points~(2)  
 and~(3) 
 of Definition~\ref{def:equiv}, 
 it holds that $\Gamma \equiv \GrCInst(\Gamma)$ and 
 $\Delta \equiv \GrCInst(\Delta)$. Now, the thesis follows by transitivity
 of equivalence.~\eop

\subsection*{Proof of Proposition~\ref{prop:equivproperties}}


We have to prove the following three points.

\noindent
(1) \emph{Renaming}. 
Let $\alpha\!=\!\argu{C\cup A}{\sent}$. By Theorem~\ref{theo:groundequiv}, we have to prove that 
$\GrCInst(\alpha) =\GrCInst(\alpha\rho)$ modulo ground constraints. By Definition~\ref{def:instance}, 
if $\alpha'\!=\!\argu{(C\vartheta)\cup D\cup A\vartheta}{\sent\vartheta}$ is a 
\red{\emph{ground constrained instance $($via $\vartheta$ and~$D$)}} 
of~$\alpha$, then $\alpha'$  is also a  
\red{\emph{ground constrained instance $($via $\rho^{-1}\vartheta$ and~$D$)}} 
of~$\alpha\rho$. Conversely, if  $\alpha'\!=\!\argu{(C\rho\vartheta)\cup D\cup A\rho\vartheta}{\sent\rho\vartheta}$ is a 
\red{\emph{ground constrained instance}} 
of~$\alpha\rho$ $($via $\vartheta$ and~$D$), then $\alpha'$ is a 
\red{\emph{ground constrained instance}} 
of~$\alpha$ $($via $\rho\vartheta$ and~$D$).

\vspace*{1mm}
\noindent
(2) 
\emph{Generalization of Claims and Assumptions}. 
First, we consider the generalization of claims. Let $\alpha=\argu{C \cup A}{p(\tuple{t})}$ and  
$\beta\!=\!\argu{C \cup \{\mbox{$\tuple{X}=\tuple{t}$}\}\cup A}{p(\tuple{X})}$, where
$\mathit{vars}(\alpha) \cap \mathit{vars}(\tuple{X})=\emptyset$.
 By Theorem~\ref{theo:groundequiv}, it suffices to show that 
$\GrCInst(\alpha) =\GrCInst(\beta)$ modulo ground constraints.
Assume that 
$\alpha'\!=\!\argu{(C\vartheta)\cup D\cup A\vartheta}{p(\tuple{t})\vartheta'}$ is a 
\red{\emph{ground constrained instance}} 
of~$\alpha$ $($via $\vartheta$ and~$D$). Without loss of generality, we may assume that $\mathit{dom}(\vartheta)\!=\!\mathit{vars}(\alpha)$. Then for
$\vartheta'=\{\tuple{X}/\tuple{t}\} \cup \vartheta$, we get that
$\beta'\!=\!\argu{(C \cup \{\tuple{X}=\tuple{t}\})\vartheta'\cup D \cup A\vartheta'}{p(\tuple{X})\vartheta'}$  is a 
\red{\emph{ground constrained instance}} 
of~$\beta$ $($via $\vartheta'$ and~$D$). Now, $\beta'\!=\!\alpha'$ modulo ground constraints.
Conversely, if $\beta'\!=\!\argu{(C \cup 
\{\mbox{$\tuple{X}=\tuple{t}$}\})\sigma\cup D \cup A\sigma} {p(\tuple{X})\sigma}$  
is a \red{\emph{ground constrained instance}} 
of~$\beta$ (via $\sigma$ and~$D$), then $\alpha'\!=\!\argu{(C\sigma)\cup D\cup A\sigma}{p(\tuple{t})\sigma}$ is a 
\red{\emph{ground constrained instance}} 
of~$\alpha$ $($via $\sigma$ and~$D$). (Note that the bindings for $\tuple{X}$ 
have no effect on $\alpha$.) Also in this case, we have
$\alpha'\! =\!\beta'$ modulo ground constraints, as desired. The proof for the 
generalization of claims is completed.

The proof for the generalization of assumptions is analogous and is omitted.

\vspace*{1mm}
\noindent
(3) \emph{Constraint Equivalence}. 
Let $C, D_1,$ and $D_2$ be  sets of consistent
constraints such that $\ctheory \models$ $\forall\big(\exists_{-V}
\big(\bigwedge \{c\!\mid\! c\!\in\! C\}\big) \leftrightarrow 
\big(\exists_{-V}\big(\bigwedge \{c\!\mid\! c\!\in\! D_1\}\big) \vee$\! $\exists_{-V}\big(\bigwedge \{c\!\mid\! c\!\in\! D_2\}\big)\big)\big)$, where
 $V\!=\!\mathit{vars}(\{A,s\})$,
 Moreover, let
 $\alpha=\argu{C \cup A}{\sent}$ and 
$\Sigma= \{\argu{D_1 \cup A}{\sent},$ $\argu{D_2 \cup A}{\sent}\}$.
By Theorem~\ref{theo:groundequiv}, we need to show that 
$\GrCInst(\alpha) =\GrCInst(\Sigma)$ modulo ground constraints.
This equality will be proved by showing the two inclusions.

\vspace*{1mm}
\noindent
(\emph{$\subseteq$ part}) If $\alpha'\! \in\! \GrCInst(\alpha)$, then by 
 Definition~\ref{def:consistentgrounding-final}, there exist $D \subseteq \constr$ and a grounding substitution $\vartheta$ such that  $\alpha'\!=\!\argu{(C\vartheta)\cup D\cup A\vartheta}{s\vartheta}$, $(C\vartheta)\cup D$ is consistent,  and $\mathit{vars}(\alpha') =\emptyset$.

Using the equivalence in $\ctheory$ we have that 
there exist two substitutions 
$\vartheta_1$, $\vartheta_2$ such that for $i\!=\!1,2,$
$\vartheta|_V\!=\!\vartheta_i|_V$, 
$\mathit{vars}(D_i \vartheta_i)\!=\!\emptyset$ and
\\
$\begin{array}{l@{\hspace{2mm}}l}
     \ctheory \models & (\bigwedge \{c\vartheta_1\!\mid\! c\!\in\! D_1 \} \wedge \bigwedge \{d\!\mid\! d\!\in\! D\})\ \vee \\
     & (\bigwedge \{c\vartheta_2\!\mid\! c\!\in\! D_2\}\wedge \bigwedge \{d\!\mid\! d\!\in\! D\}).
\end{array} $\\
Then there exists $k\! \in\! \{1,2\}$ such that $\bigwedge \{c\vartheta_k\!\mid\! c\!\in\! D_k \} \wedge \bigwedge \{d\!\mid\! d\!\in\! D\}$ is consistent. Without loss of generality we can assume that $k\!=\!1$.
By construction, recalling that $\vartheta|_V=\vartheta_1|_V$, we have that 
$\beta\!=\!\argu{(D_1\vartheta_1)\cup D\cup A\vartheta}{s\vartheta} \in \GrCInst(\Sigma)$ and $\alpha'\!=\!\beta$ modulo ground constraints.

\balance
\vspace{1mm}\noindent
(\emph{$\supseteq$\,part})\,Similarly, if $\beta \!\in\! \GrCInst(\Sigma)$,\,then\,there exist $k\! \in\! \{1,\!2\}$, \mbox{$D \! \subseteq\! \constr$}, and a grounding substitution $\vartheta_k$ such that  $\beta\!=\!\argu{(D_k\vartheta_k)\cup D\cup A\vartheta_k}{\sent\vartheta_k}$, $(D_k\vartheta_k)\cup D$ is consistent, and $\mathit{vars}(\beta)\! =\!\emptyset$.
Without loss of generality, we can assume that $k\!=\!1$.
By the equivalence in $\ctheory$ we have that there exists $\vartheta$ such that
$\vartheta|_V=\vartheta_1|_V$,
$\mathit{vars}(C \vartheta)=\emptyset$ and 
$\ctheory \models 
\bigwedge \{c\vartheta\!\mid\! c\!\in\! C\}
\wedge \bigwedge \{d\!\mid\! d\!\in\! D\}.$
By construction, recalling that $\vartheta|_V=\vartheta_1|_V$, we have that  
$\alpha'\!=\!\argu{(C\vartheta)\cup D\cup A\vartheta_1}{s\vartheta_1} \in \GrCInst(\alpha)$ and, by construction, $\alpha'\!=\!\beta$ modulo ground constraints.

Thus, $\GrCInst(\alpha) = \GrCInst(\Sigma)$ 
modulo ground constraints, and this completes 
the proof of Point (3) and also the proof of 
the whole Proposition~\ref{prop:equivproperties}. \eop

\subsection*{Proof of Proposition~\ref{prop:pa}}

In order to prove Proposition~\ref{prop:pa}, we need the following two lemmas.

\begin{lemma}
\label{lem:pnongroundattacks}
Let $\alpha,\!\beta\!\in\!\CArg$, and let
$\Gamma\!=\!\GrCInst(\alpha)$, and
$\Delta\!=\!\GrCInst(\beta)$. 
Then $\alpha$ \emph{partially attacks} $\beta$  iff  
there exist 
$\alpha' \! \in\! \Gamma$ and $\beta' \! \in\! \Delta$ such that  $\alpha'$ partially attacks $\beta'$.
\end{lemma}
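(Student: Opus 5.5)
Both directions go through the definition of partial attack, unfolded via constrained instances. Write $\alpha=\argu{C\cup A_1}{\sent_1}$ and $\beta=\argu{D\cup A_2\cup\{\asm\}}{\sent_2}$ with $\sent_1=\overline{\asm}$, $C=\{c_1,\ldots,c_m\}$ and $D=\{d_1,\ldots,d_n\}$. By the second clause of Definition~\ref{def:nongroundattacks} and Point~(2) of Proposition~\ref{prop:equivproperties}, we may assume the contrary and the attacked assumption are syntactically matched. In that clause the claim $\overline{p(\tuple{X})}$ and the assumption $p(\tuple{X})$ share the variables $\tuple{X}$, with $\tuple{X}=\tuple{t}$ and $\tuple{X}=\tuple{u}$ pushed into the constraints. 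A ground instance of such a pair attacks in the ground sense exactly when the instantiated $\tuple{t}$ and $\tuple{u}$ are equal in $\ctheory$. Ground arguments attack modulo ground constraints.

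\emph{(If part.)} Let $\alpha'\in\Gamma$ via $\vartheta_1$ and $D_1'$, and $\beta'\in\Delta$ via $\vartheta_2$ and $D_2'$, with $\alpha'$ partially attacking $\beta'$. Because $\alpha'$ and $\beta'$ are ground, the partial attack formula reduces to: $\ctheory$ validates the ground constraints of both, and the claim $\sent_1\vartheta_1$ equals, in $\ctheory$, the contrary of the attacked assumption instance $\asm\vartheta_2$. For the matched form, take $\tuple{X}$ fresh and define $\eta$ on $\mathit{vars}(\sent_1)$ as the common ground value, on the remaining variables of $C$ via $\vartheta_1$, and on the remaining variables of $D$ via $\vartheta_2$. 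After renaming apart, which Point~(1) of Proposition~\ref{prop:equivproperties} permits, the variables of $C$ and $D$ outside $\mathit{vars}(\sent_1)$ are disjoint, so $\eta$ is well defined. Then $\eta$ witnesses $\ctheory\models \exists(\exists_{-\mathit{vars}(\sent_1)}\bigwedge C\wedge\exists_{-\mathit{vars}(\sent_1)}\bigwedge D)$.

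\emph{(Only-if part.)} Assume the existential formula holds. Take a witness $\eta$ for $\mathit{vars}(\sent_1)$, and extend it by witnesses for the inner existentials of $C$ and of $D$, giving grounding substitutions $\vartheta_1\supseteq\eta$ for $\alpha$ and $\vartheta_2\supseteq\eta$ for $\beta$. These make $C\vartheta_1$ and $D\vartheta_2$ valid. Any remaining variables occur only in assumptions or claims and are unconstrained, so bind them to arbitrary ground terms of the right sort. Setting $\alpha'=\alpha\vartheta_1$ and $\beta'=\beta\vartheta_2$ with $D=\emptyset$ gives ground constrained instances; consistency holds because the constraints are ground and true. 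Since $\vartheta_1$ and $\vartheta_2$ agree on $\mathit{vars}(\sent_1)$, and the attacked assumption shares those variables in the normalised form, $\sent_1\vartheta_1=\overline{\asm\vartheta_2}$. The partial attack condition between $\alpha'$ and $\beta'$ then holds trivially, since all their constraints are ground and valid.

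\emph{Main obstacle.} The delicate step is the bookkeeping of variable sharing. In the general definition the variables of $\asm$ in $\beta$ need not coincide with those of $\sent_1$; the matched form is what makes the shared variables exactly $\mathit{vars}(\sent_1)$. It must also be checked that the reduction through the second clause of Definition~\ref{def:nongroundattacks}, with fresh $\tuple{X}$ and equalities, commutes with taking ground instances. I would handle this first, as a small claim: ground instances of the generalised pair correspond, modulo ground constraints, to ground instances of the original pair, which is essentially Point~(2) of Proposition~\ref{prop:equivproperties} combined with Theorem~\ref{theo:groundequiv}.
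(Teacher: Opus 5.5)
Your proposal is correct and follows essentially the paper's route. You reduce, via Point~(2) of Proposition~\ref{prop:equivproperties}, to the form where the claim is syntactically the contrary of the attacked assumption. For the \emph{only-if} direction you extend a witness for $\mathit{vars}(\sent_1)$ to two separate grounding substitutions that agree there; for the \emph{if} direction you read the witnesses off the substitutions defining the ground instances. The paper proves the \emph{if} direction by contradiction rather than directly, and it passes over two points you handle explicitly: equality of ground terms is $\ctheory$-equality, not syntactic identity, and the generalisation has to be compatible with taking ground instances. Your renaming-apart step is harmless but unnecessary, because the two inner existentials are scoped separately.
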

\begin{proof}
Without loss of generality and by 
   Point~(2) 
   of Proposition~\ref{prop:equivproperties}, we can assume that $\alpha=\argu{\{c_1,\ldots,c_m\} \cup A_1}{\sent_1}$  
and $\beta=\argu{\{d_1,\ldots,d_n\} \cup A_2}{\sent_2}$, where 
$\sent_1=\overline{\asm}$, for some $\asm \in A_2$.
   
Now, we separately prove the \emph{Only-if part} and the \emph{If part}
of the lemma.

(\emph{Only-if part}) Assume that $\alpha$ partially attacks $\beta$. By Definition~\ref{def:nongroundattacks}, $\ctheory \models \exists\, (\exists_{-\mathit{vars}(\sent_1)}(c_1\wedge\ldots\wedge c_m)\wedge \exists_{-\mathit{vars}(\sent_1)}(d_1\wedge\ldots\wedge d_n))$.
Hence, there exist substitutions $\vartheta_1$ and $\vartheta_2$ such that 
$\mathit{dom}(\vartheta_1)=\mathit{vars}(\{c_1,\ldots, c_m\})$,
$\mathit{dom}(\vartheta_2)=\mathit{vars}(\{d_1,\ldots, d_n\})$,
$\vartheta_1|_{\sent_1}=\vartheta_2|_{\sent_1}$,
\linebreak 
$\mathit{vars}(\{c_1\vartheta_1,\ldots, c_m\vartheta_1\}) 
=\emptyset$,  
$\mathit{vars}(\{d_1\vartheta_2,\ldots,$ $d_n\vartheta_2\}) =\emptyset$
and $\ctheory \models (c_1\wedge\ldots\wedge c_m) \vartheta_1\wedge (d_1\wedge\ldots\wedge d_n)\vartheta_2$. 
Thus, we have that there exist
substitutions 
$\sigma_1$ and $\sigma_2$ such that 
$\sigma_1|_{\sent_1}=\sigma_2|_{\sent_1}$, 
$\alpha'=\alpha\vartheta_1\sigma_1$,
$\beta'=\beta\vartheta_2\sigma_2$,
$\mathit{vars}(\alpha')=\mathit{vars}(\beta')=\emptyset$ and 
$\sent_1\vartheta_1\sigma_1=
\sent_1\vartheta_2\sigma_2$.
By construction, we have  
$\alpha'\in \Gamma$,
$\beta' \in \Delta$ and
$\alpha'$ partially attacks
$\beta'$. This concludes the proof of the \emph{Only-if part}.

(\emph{If part})
Assume that there exist
$\alpha' \in \Gamma$ and
$\beta' \in \Delta$ 
such that 
$\alpha'$ partially attacks $\beta'$. 
 By Definitions~\ref{def:constrained-arg} and~\ref{def:consistentgrounding-final},
$\mathit{vars}(\alpha')=\mathit{vars}(\beta')=\emptyset$. Moreover, there exist  
 substitutions~$\vartheta_1$ and $\vartheta_2$, and the
sets of ground
constraints 
$C'=\{c'_1,\ldots,c'_k\}$, 
$D'=\{d'_1,\ldots,d'_h\}$, such that:
$\alpha'=\argu{\{c_1,\ldots,c_m\}\vartheta_1 \cup C'\cup A_1\vartheta_1}{\sent_1 \vartheta_1}$  
and $\beta'=\argu{\{d_1,\ldots,d_n\} \vartheta_2\cup D'\cup A_2\vartheta_2}{\sent_2\vartheta_2} $ 
with
$\sent_1\vartheta_1=\overline{\asm}\vartheta_2$, for some $\asm \in A_2$, and
\begin{equation}
   \ctheory\!\models\!\bigwedge _{i=1, \ldots, m}\!\{c_i \vartheta_1\}\wedge\! 
   \bigwedge _{i=1, \ldots, k}\!\{c'_i \}\wedge \!
\bigwedge _{i=1, \ldots, n}\!\{d_i\vartheta_2\}\wedge \!
   \bigwedge _{i=1, \ldots, h}\!\{d'_i\}. 
   \label{eq:consistent}
\end{equation}
Since by hypothesis $\sent_1=\overline{\asm}$, we have that $\vartheta_1|_{\sent_1}=\vartheta_2|_{\sent_1}$.

We prove by contradiction that $\alpha$ partially attacks $\beta$.
Assume that $\alpha$ does not partially attack $\beta$. Then, by Definition~\ref{def:nongroundattacks}, we must have
    $\ctheory \not\models \exists\, (\exists_{-\mathit{vars}(\sent_1)}(c_1\wedge\ldots\wedge c_m)\wedge \exists_{-\mathit{vars}(\sent_1)}(d_1\wedge\ldots\wedge d_n))$. Hence, for every pair of substitutions 
   $\sigma_1$ and 
   $\sigma_2$, such that 
   $\mathit{dom}(\sigma_1)\supseteq\mathit{vars}(\{c_1,\ldots, c_m\})$,
$\mathit{dom}(\sigma_2)\supseteq\mathit{vars}(\{d_1,\ldots, d_n\})$,
$\sigma_1|_{\sent_1}=\sigma_2|_{\sent_1}$,
$\mathit{vars}(\{c_1\sigma_1,\ldots, c_m\sigma_1\}) = \mathit{vars}(\{d_1\sigma_2,\ldots, d_n\sigma_2\}) =\emptyset$
we have  $\ctheory \not \models 
\bigwedge _{i=1, \ldots, m}\{c_i \sigma_1\}\wedge 
 \bigwedge _{i=1, \ldots, n}\{d_i\sigma_2\}$.
But this contradicts (\ref{eq:consistent}), and this completes the proof of the \emph{If part} and the proof of the whole lemma. \end{proof}

\begin{lemma}
\label{lem:smalllemma}
Given the ground arguments $\alpha$, $\alpha'$, 
$\beta$, and $\beta'$\!, if  $\alpha\!=\!\alpha'$ modulo ground constraints, $\beta\!=\!\beta'$ modulo ground constraints, and $\alpha$ partially attacks $\beta$, then $\alpha'$ partially attacks $\beta'$.
\end{lemma}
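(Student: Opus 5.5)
The plan is to unfold Definition~\ref{def:mod-groundConstraints} and Definition~\ref{def:nongroundattacks} for ground arguments and transport the attack relation through the equalities modulo ground constraints. Write $\alpha=\argu{C_1\cup A_1}{\sent_1}$, $\alpha'=\argu{C'_1\cup A'_1}{\sent'_1}$, $\beta=\argu{C_2\cup A_2}{\sent_2}$, $\beta'=\argu{C'_2\cup A'_2}{\sent'_2}$, all ground. Since every argument here is ground, the existential prefixes in the definition of partial attack are vacuous. The condition that $\alpha$ partially attacks $\beta$ then reduces to two facts: the contrary relation holds, i.e.\ $\sent_1=\overline{\asm}$ for some $\asm\in A_2$, possibly via the generalised clause of Definition~\ref{def:nongroundattacks} with $\tuple{t}\neq\tuple{u}$; and $C_1\cup C_2$ is consistent, i.e.\ $\ctheory\models\bigwedge C_1\wedge\bigwedge C_2$.

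The first step treats the constraint part. Constrained arguments have consistent constraint sets by Definition~\ref{def:constrained-arg}, and for ground sets consistency means $\ctheory\models\bigwedge C'_1$ and $\ctheory\models\bigwedge C'_2$. The conjunction $\bigwedge C'_1\wedge\bigwedge C'_2$ is ground and each conjunct is valid, so the conjunction is valid too. This half therefore holds trivially for $\alpha'$ and $\beta'$, and it does not even use the hypotheses.

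The second step, which is the real content and the main obstacle, treats the contrary relation. Suppose $\sent_1$ is $\mathit{cp}(\tuple{t})$ and the attacked assumption is $p(\tuple{u})\in A_2$. In the generalised form of Definition~\ref{def:nongroundattacks}, the attack requires, after adding $\tuple{X}=\tuple{t}$ and $\tuple{X}=\tuple{u}$, that $\ctheory\models\tuple{t}=\tuple{u}$. From $\ctheory\models\sent_1\leftrightarrow\sent'_1$ we get $\sent'_1=\mathit{cp}(\tuple{t}')$ with $\ctheory\models\tuple{t}=\tuple{t}'$. Condition~(i) of Definition~\ref{def:mod-groundConstraints} then gives some $p(\tuple{u}')\in A'_2$ with $\ctheory\models\tuple{u}=\tuple{u}'$. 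Here the same predicate symbol is forced, because $\leftrightarrow$ on atoms is only defined between atoms with the same predicate. By Definition~\ref{def:frameworkFc}, $\overline{p(\tuple{u}')}=\mathit{cp}(\tuple{u}')$, so the same contrary predicate $\mathit{cp}$ is used. Transitivity of equality in $\ctheory$ yields $\ctheory\models\tuple{t}'=\tuple{u}'$.

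The final step reassembles the generalised attack between $\alpha'$ and $\beta'$. Take fresh $\tuple{X}$ and the constraint sets $C'_1\cup\{\tuple{X}=\tuple{t}'\}$ and $C'_2\cup\{\tuple{X}=\tuple{u}'\}$. The claim $\overline{p(\tuple{X})}$ now matches the assumption $p(\tuple{X})$ syntactically. The existential condition of partial attack is witnessed by $\tuple{X}$ equal to the common value of $\tuple{t}'$ and $\tuple{u}'$, together with the validity of the ground constraints established in the first step. If $\tuple{t}'$ and $\tuple{u}'$ happen to be syntactically identical, the direct clause of the definition applies instead. The care needed is mainly in checking this case split, and in checking that the generalised clause in Definition~\ref{def:nongroundattacks} is exactly what makes attacks insensitive to syntactically different but $\ctheory$-equal terms.
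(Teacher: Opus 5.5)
Your proof is correct and follows the same route as the paper, which simply declares the lemma immediate because ground arguments that are equal modulo ground constraints differ only by consistent ground constraints. Your version unfolds that remark more carefully than the paper does: you note that the constraint half of a partial attack is automatic for ground arguments, and you handle claims and assumptions whose terms are only $\ctheory$-equal (e.g.\ $p(4-1)$ versus $p(3)$) through the generalised clause of Definition~\ref{def:nongroundattacks}, a case the paper's one-line justification glosses over.
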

\begin{proof} Immediate, because ground arguments that
are equal modulo ground constraints differ by consistent ground constraints only.
\end{proof}

Now we prove the \emph{Only-if part} and the 
\emph{If part} of Proposition~\ref{prop:pa}.

\emph{(Only-if part)} Let $\alpha,\beta$ be constrained arguments such that $\alpha$ partially attacks $\beta$. By Point~(2) of Proposition~\ref{prop:equivproperties}, there exist arguments $\alpha' $ and $\beta_1 $
with
$\alpha\!\equiv\!\alpha' $ and $\beta\!\equiv\!\beta_1 $, where 
$\alpha'=\argu{\{c_1,\ldots,c_m\} \cup A_1}{\sent_1}
$  
and $\beta_1=\argu{\{d'_1,\ldots,d'_n\} \cup A'_2\cup \{\asm\}}
{\sent'_2}$, with $s_1\!=\!\overline{\asm}$
and such that $\ctheory \models \exists\, (\exists_{-\mathit{vars}(\sent_1)}(c_1\wedge\ldots\wedge c_m)\wedge \exists_{-\mathit{vars}(\sent_1)}(d'_1\wedge\ldots\wedge d'_n)))$.
Moreover, by Point~(1) of Proposition~\ref{prop:equivproperties}, there exists a renaming $\rho$ that leaves the variables occurring in $\asm$ unchanged and  replaces all other variables in
$\beta_1$ by fresh new variables not occurring in $\mathit{vars}(\alpha')$ such that
$\beta_1 \equiv \beta'$, where 
$\beta'=\beta_1 \rho=\argu{\{d_1,\ldots,d_n\} \cup A_2\cup \{\asm\}}
{\sent_2}$.
By construction, $\mathit{vars}(c_1,\ldots,c_m) \cap
\mathit{vars}(d_1,\ldots,d_n) \subseteq \mathit{vars}(\sent_1)$. 
Then, since $\ctheory \models \exists\, (\exists_{-\mathit{vars}(\sent_1)}(c_1\wedge\ldots\wedge c_m)\wedge \exists_{-\mathit{vars}(\sent_1)}(d'_1\wedge\ldots\wedge d'_n)))$, it follows that 
$\ctheory \models \exists\, ((c_1\wedge\ldots\wedge c_m)\wedge (d_1\wedge\ldots\wedge d_n)))$. Hence
$\{c_1,\ldots,c_m\}\cup \{d_1,\ldots, d_n\}$
is consistent. This concludes the proof of the 
\emph{Only-if part}.

\emph{(If part)}
Assume that there exist
$\alpha'=\argu{C \cup A_1}{\sent_1}$ and $\beta'=\argu{D \cup A_2}{\sent_2}$
such that $\sent_1\!=\!\overline{\asm}$, for some $\asm\!\in\! A_2$, $\mathit{vars}(C) \cup \mathit{vars}(C)\subseteq\mathit{vars}(s_1)$, and $C\cup D$ is consistent.
By Definition~\ref{def:nongroundattacks},
$\alpha'$ partially attacks $\beta'$.
Therefore, by Lemma~\ref{lem:pnongroundattacks} 
there exist $\gamma' \in \!\GrCInst(\alpha')$
and $\delta' \in\!\GrCInst(\beta')$ such that
$\gamma'$ partially attacks $\delta'$.
Since 
$\alpha\!\equiv\!\alpha' $ and $\beta\!\equiv\!\beta'$, 
by Theorem~\ref{theo:groundequiv} and  Lemma~\ref{lem:smalllemma}, 
we have that there exist
$\gamma \in \!\GrCInst(\alpha)$
and $\delta \in \!\GrCInst(\beta)$ such that:
(i)~$\gamma=\gamma'$ modulo ground constraints, (ii)~$\delta=\delta'$ modulo ground constraints, and 
(iii)~$\gamma$ partially attacks $\delta$. Then,  by Lemma~\ref{lem:pnongroundattacks}, $\alpha$~partially attacks $\beta$ and this 
concludes the proof of the \emph{If part}. 

Thus, the proof of Proposition~\ref{prop:pa} is completed.
\eop

\subsection*{Proof of Proposition~\ref{prop:attacksrelation}}

Assume that $\alpha$ fully attacks $\beta$. 
By Definition~\ref{def:nongroundattacks} and Point~(2) of Proposition~\ref{prop:equivproperties}, without loss of generality, we can assume that $\alpha=\argur{\{c_1,\ldots,c_m\} \cup A_1}{\sent_1}{\ruleset_1}$  
and $\beta=\argur{\{d_1,\ldots,d_n\} \cup A_2}{\sent_2}{\ruleset_2}$ are arguments in $\CArg$ such that 
$\sent_1=\overline{\asm}$ for some $\asm \in A_2$.

By the definition of an argument, the set $\{d_1, \ldots, d_n\}$ is consistent. Therefore, we have:
$\ctheory \models \exists (d_1 \wedge \ldots \wedge d_n).$
Since $\alpha$ fully attacks $\beta$, it follows that:\\
$\ctheory \models \forall\, \big((d_1 \wedge \ldots \wedge d_n) \rightarrow \exists_{-\mathit{vars}(\sent_1)}(c_1 \wedge \ldots \wedge c_m)\big).$

Thus, we get:
\\
$\ctheory \models \exists\, \big((d_1 \wedge \ldots \wedge d_n) \wedge \exists_{-\mathit{vars}(\sent_1)}(c_1 \wedge \ldots \wedge c_m)\big).$

Now, since $FV(\exists_{-\mathit{vars}(\sent_1)}(c_1 \wedge \ldots \wedge c_m)) \subseteq \mathit{vars}(\sent_1)$, we can rewrite the expression as:
\\
$\ctheory \models \exists\, \big(\exists_{-\mathit{vars}(\sent_1)}(d_1 \wedge \ldots \wedge d_n) \wedge \exists_{-\mathit{vars}(\sent_1)}(c_1 \wedge \ldots \wedge c_m)\big).$
Hence, the thesis follows. \eop

\subsection*{Proof of Theorem~\ref{theo:nongroundattacks}}
(i)
   Without loss of generality and by 
   Point~(2) 
   of Proposition~\ref{prop:equivproperties}, we can assume that $\alpha=\argur{\{c_1,\ldots,c_m\} \cup A_1}{\sent_1}{\ruleset_1}$  
and $\beta=\argur{\{d_1,\ldots,d_n\} \cup A_2}{\sent_2}{\ruleset_2}$ are arguments in $\CArg$ such that 
$\sent_1=\overline{\asm}$ for some $\asm \in A_2$.
   
 Now we separately prove the \emph{Only-if part} and the \emph{If part} of this Point~$(i)$.
   
(\emph{Only-if part}) Assume that $\alpha$ fully attacks $\beta$ and let $\beta' \in \Delta$. By Definition~\ref{def:nongroundattacks}, $\ctheory \models \forall \, ((d_1\wedge\ldots\wedge d_n) \rightarrow \exists_{-\mathit{vars}(\sent_1)}(c_1\wedge\ldots\wedge c_m))$.
By Definition~\ref{def:consistentgrounding-final}, there exist
$C' \subseteq \constr$ and a substitution $\vartheta$ such that $\beta'=\argur{\{d_1,\ldots,d_n\}\vartheta  \cup C'\cup A_2\vartheta}{\sent_2\vartheta}{\ruleset_2}$, 
$\{d_1,\ldots,d_n\}\vartheta   \cup C'$ is consistent and 
$\mathit{vars}(\beta')=\emptyset$.
Without loss of generality, we can assume that 
$\mathit{dom}(\vartheta)=\mathit{vars}(\beta)$ and therefore 
$\mathit{vars}(\mathit{rng}(\vartheta))=\emptyset$.
Therefore, by Definition~\ref{def:nongroundattacks},
we get:

$\ctheory \models \exists_{-\mathit{vars}(\sent_1)}
((c_1\wedge\ldots\wedge c_m) \vartheta|_{\sent_1})$. 
\hfill(1)

Let $\tuple{V}$ be a tuple of variables such that $\mathit{vars}(\tuple{V})\!=\!\mathit{vars}(\alpha)\setminus
\mathit{vars}(\sent_1)$.
From (1), we have that
there exists a tuple of ground terms
$\tuple{t_g}$ such that 
$\ctheory \models (c_1\wedge\ldots\wedge c_m) \vartheta'$,
where $\vartheta'\! =\!\vartheta |_{\sent_1}
\{\tuple{V}/\tuple{t_g}\}$ and $\mathit{vars}(\alpha \vartheta')\!=\!\emptyset$. By construction $\alpha'\!=\!\alpha \vartheta'\! \in\! \Gamma$, $\sent_1 \vartheta'=\sent_1 \vartheta$ and 
$\sent_1 \vartheta=\overline{\asm}\vartheta$. 
By definition, $\alpha'$ attacks $\beta'$ in $F_c$. Therefore, by Theorem~\ref{theo:GroundCABAvsABA}, $\alpha'$  also attacks $\beta'$ in $\mathit{Ground}(F_c)$, and this completes the proof.

(\emph{If part}) We assume that for each  
    $\beta' \in \Delta$ there exists 
    $\alpha' \in \Gamma$ such that 
    $\alpha'$ attacks $\beta'$ in $\mathit{Ground}(F_c)$. We prove by contradiction that $\alpha$ fully attacks $\beta$.
    Assume that $\alpha$ does not fully attack $\beta$. Therefore, by Definition~\ref{def:nongroundattacks}, 
    $\ctheory \not \models \forall \, ((d_1\wedge\ldots\wedge d_n) \rightarrow \exists_{-\mathit{vars}(\sent_1)}(c_1\wedge\ldots\wedge c_m))$ and by definition of an argument, we have that $\ctheory \models \exists (d_1\wedge\ldots\wedge d_n)$. Therefore there exists a substitution 
   $\vartheta$, $\ctheory \models (d_1\wedge\ldots\wedge d_n) \vartheta$, 
   $\mathit{vars}((d_1\wedge\ldots\wedge d_n) \vartheta)=\emptyset$ and $\ctheory \not \models \exists_{-\mathit{vars}(\sent_1)}((c_1\wedge\ldots\wedge c_m)\vartheta|_{\sent_1} ) $. 
   Without loss of generality, we can assume that $\mathit{dom}(\vartheta)=\mathit{vars}(\beta)$ and 
   $\mathit{vars}(\beta \vartheta)=\emptyset$. Therefore 
   $\beta \vartheta \in \Delta$.
   
   Now we have a contradiction, since $\ctheory \not \models \exists_{-\mathit{vars}(\sent_1)}((c_1\wedge\ldots\wedge c_m)\vartheta|_{\sent_1} ) $ and therefore for each substitution $\sigma$ such that 
   $\sigma|_{\sent_1}=\vartheta|_{\sent_1}$ and $\mathit{vars}(\alpha \sigma)=\emptyset$, we have that $\alpha \sigma \not \in \Gamma$, since 
   $\ctheory\not \models (c_1\wedge\ldots\wedge c_m)\sigma$. Then, for $\beta'=\beta \vartheta \in \Delta$ there is no $\alpha ' \in \Gamma$ such that 
   $\alpha'$ attacks $\beta'$ in $F_c$.  By Theorem~\ref{theo:GroundCABAvsABA} for 
    $\beta' =\beta \vartheta\in \Delta$ there is no
    $\alpha' \in \Gamma$ such that 
    $\alpha'$ attacks $\beta'$ in $\mathit{Ground}(F_c)$, which contradicts the hypothesis. 
    
$(ii)$ The proof of this point follows from
Lemma~\ref{lem:pnongroundattacks} and 
 Theorem~\ref{theo:GroundCABAvsABA}.

\subsection*{Proof of Theorem~\ref{prop:non-ground-sem} }

Recall that $\mathit{Ground}(F_c)$ denotes the ABA framework derived from $F_c$, as specified in Theorem~\ref{thm:grounding}. Furthermore, we denote by $\mathit{Arg}_c$ the set of all arguments in $\mathit{Ground}(F_c)$, and by
$\Att=\{(\alpha,\beta) \in \mathit{Arg}_c \times \mathit{Arg}_c \mid \alpha$ attacks $\beta\}$
the set of all attacks in $\mathit{Ground}(F_c)$, as introduced in Section~\ref{sec:background}.

Point~(1). We prove the \emph{If part} and the \emph{Only-if part} separately.
        
 (\emph{Only-if part}) The proof is by contradiction.
Assume that $\Sigma$ is conflict-free and $\Sigma$ 
is not NGCF. By definition,
$\exists \alpha_1, \alpha_2 \in \Sigma$, such that $\alpha_1$  partially attacks $\alpha_2$. 
Then by Theorem~\ref{theo:nongroundattacks}, there exist
$\alpha'_1,\alpha'_2 \in \GrCInst(\Sigma)$ such that 
 $\alpha'_1$ attacks $\alpha'_2$ in $\mathit{Ground}(F_c)$
 and therefore $\GrCInst(\Sigma)$ is not a conflict-free extension in $\mathit{Ground}(F_c)$.
Then, by applying Definition~\ref{def:extbyground}, we reach a contradiction with the initial assumption that $\Sigma$ is a conflict-free extension.

(\emph{If part}) The proof is by contradiction.
Assume that $\Sigma$ 
is NGCF and $\Sigma$ is not conflict-free.  By Definition \ref{def:extbyground}, $\GrCInst(\Sigma)$ is not a conflict-free extension in $\mathit{Ground}(F_c)$.   This means that for $i=1,2,$ there exists
$\beta_i \in \GrCInst(\Sigma)$  such that  
 $\beta_1$ attacks $\beta_2$ in $\mathit{Ground}(F_c)$.
 By Theorem \ref{theo:nongroundattacks}, there exist
 $\alpha_1,\alpha_2 \in \Sigma$ such that $\alpha_1$ partially attacks $\alpha_2$ in $F_c$
 and therefore $\Sigma$ 
is not NGCF, contradicting the initial assumption.


Point~(2).  By Definition~\ref{def:extbyground}, $\Sigma$ is stable 
in $F_c$ iff $\GrCInst(\Sigma)$ is stable in $\mathit{Ground}(F_c)$. Then, by definition of stable sets, $\Sigma$ is stable in $F_c$ iff
(i) $\GrCInst(\Sigma)$ is conflict free in $\mathit{Ground}(F_c)$, and (ii) 
$\GrCInst(\Sigma) \cup S' = \mathit{Arg}_c$ modulo ground constraints, where 
$S'=\{\beta \!\in \!\mathit{Arg}_c \mid \exists \alpha \!\in \!\GrCInst(\Sigma)$ such that $(\alpha,\beta) \!\in \!\Att\}$.

By Point(1), (i) holds iff 
$\Sigma$ is NGCF in $F_c$. Now, observe that by Theorems~\ref{theo:nongroundattacks} and \ref{theo:GroundCABAvsABA} and since $S'$ is a set of ground arguments,
$S'=\{\beta \!\in \!\GroundCArg \mid \exists \alpha \!\in \!\Sigma$ such that $\alpha$ fully attacks $\beta $ in $F_c\}$. Therefore, by Point~(i) of Theorem \ref{theo:nongroundattacks}, $S' =  \GrCInst(\FAtt(\Sigma))$.
Then, (ii) holds iff $\GrCInst(\Sigma) \cup \GrCInst(\FAtt(\Sigma)) = \GroundCArg$ modulo ground constraints and then, by Theorem~\ref{theo:groundequiv}, iff $(\Sigma \cup \FAtt(\Sigma)) \equiv CArg$. This completes the proof. \eop

\subsection*{Proof of Proposition~\ref{prop:generalizationcommon}}
In order to prove Proposition~\ref{prop:generalizationcommon}, we need the following lemma.

\begin{lemma}\label{lem:aggiungo}
Let $\alpha
=
\argu{\{c_1,\ldots,c_m,\ \tuple{X}=\tuple{s}\} \cup A \cup \{q(\tuple{X})\}}{p(\tuple{t})}
$ be a constrained argument, where 
 $\mathit{vars}(\tuple{X}) \cap
\bigl(
\mathit{vars}(\{c_1,\ldots,c_m\})
\cup \mathit{vars}(\tuple{s})
\cup \mathit{vars}(A)
\cup \mathit{vars}(p(\tuple{t}))
\bigr)
=
\emptyset$. 
Let $\tuple{Y}$ be a tuple of fresh variables of the same arity as
$\tuple{X}$, such that 
$\mathit{vars}(\tuple{Y}) \cap \mathit{vars}(\alpha )=\emptyset$.
 Then $\{\alpha\} \equiv \{\alpha'\}$, where $
 \alpha'=\argu{\{c_1,\ldots,c_m, \tuple{X}=\tuple{s}, \tuple{Y}=\tuple{s}\} \cup A  \cup \{q(\tuple{X}), q(\tuple{Y})\}}{p(\tuple{t})}$.
\end{lemma}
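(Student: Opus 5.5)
By Theorem~\ref{theo:groundequiv}, it suffices to show that $\GrCInst(\alpha)=\GrCInst(\alpha')$ modulo ground constraints. I would prove the two inclusions separately, following the pattern of the proof of Point~(2) of Proposition~\ref{prop:equivproperties}. The key observation is that, in any ground instance satisfying the constraints, the equalities $\tuple{X}=\tuple{s}$ and $\tuple{Y}=\tuple{s}$ force $\tuple{X}$ and $\tuple{Y}$ to be bound to terms that are equal in $\ctheory$. Hence the extra assumption $q(\tuple{Y})$ collapses onto $q(\tuple{X})$, in the sense of the conjunction equivalence underlying Definition~\ref{def:mod-groundConstraints}.

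\emph{Inclusion $\subseteq$.} Take a ground constrained instance $\gamma$ of $\alpha$, via a grounding substitution $\vartheta$ and a set $D$ of ground constraints, with $\mathit{dom}(\vartheta)=\mathit{vars}(\alpha)$. Since $\tuple{Y}$ is fresh, I would set $\vartheta'=\vartheta\cup\{\tuple{Y}/\tuple{X}\vartheta\}$. Then $\tuple{Y}\vartheta'=\tuple{X}\vartheta$, so $(\tuple{Y}=\tuple{s})\vartheta'$ coincides with $(\tuple{X}=\tuple{s})\vartheta'$, which already holds in $\ctheory$. Consequently the constraint set of $\alpha'\vartheta'$ together with $D$ is consistent, and $\alpha'\vartheta'$ is a ground constrained instance of $\alpha'$ via $\vartheta'$ and $D$. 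Moreover $q(\tuple{Y})\vartheta'=q(\tuple{X})\vartheta'$ syntactically, so its assumption set equals that of $\gamma$, and the two claims coincide. Thus $\gamma=\alpha'\vartheta'$ modulo ground constraints.

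\emph{Inclusion $\supseteq$.} Take a ground instance $\gamma'$ of $\alpha'$, via $\sigma$ and $D$. Restricting $\sigma$ to $\mathit{vars}(\alpha)$ gives a ground instance $\gamma$ of $\alpha$ via $\sigma|_\alpha$ and $D$. Its constraints form a subset of a consistent ground set, and since ground constraints are either valid or not, they are all valid. It remains to show $\gamma=\gamma'$ modulo ground constraints. Both $(\tuple{X}=\tuple{s})\sigma$ and $(\tuple{Y}=\tuple{s})\sigma$ are valid in $\ctheory$, so $\ctheory\models\tuple{X}\sigma=\tuple{Y}\sigma$, and hence $\ctheory\models q(\tuple{X}\sigma)\leftrightarrow q(\tuple{Y}\sigma)$. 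The assumption sets of $\gamma'$ and $\gamma$ then differ only by the element $q(\tuple{Y}\sigma)$, which is matched by $q(\tuple{X}\sigma)$. Therefore Condition~$(i)$ of Definition~\ref{def:mod-groundConstraints} holds in both directions, and the claims are identical.

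The main obstacle is the bookkeeping in the $\supseteq$ direction. Equality modulo ground constraints compares the assumption conjunctions element-wise up to $\ctheory$-equality of arguments, and it must be checked that Clark equality plus validity of the ground equalities yields $\ctheory\models\tuple{X}\sigma=\tuple{Y}\sigma$ by transitivity. The freshness hypotheses on $\tuple{X}$ and $\tuple{Y}$ are used only to ensure that extending or restricting substitutions does not disturb the remaining components $c_i$, $\tuple{s}$, $A$ and $p(\tuple{t})$.
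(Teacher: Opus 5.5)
Your proof is correct and follows the paper's route. Both reduce, via Theorem~\ref{theo:groundequiv}, to equality of ground instances modulo ground constraints, extend the grounding substitution with a binding for $\tuple{Y}$ in the $\subseteq$ direction, and reuse (restrict) the substitution in the $\supseteq$ direction. The differences are cosmetic: you bind $\tuple{Y}$ to $\tuple{X}\vartheta$ rather than to $\tuple{s}\vartheta$, which makes the assumption sets syntactically identical, and you spell out the transitivity step ($\tuple{X}\sigma=\tuple{s}\sigma=\tuple{Y}\sigma$ in $\ctheory$) that the paper leaves as ``by construction''.
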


\begin{proof} 
By Theorem~\ref{theo:groundequiv}, it is sufficient to show that
$\GrCInst(\alpha)=\GrCInst(\alpha')$ modulo ground constraints. 

First, we show that $\GrCInst(\alpha)\subseteq \GrCInst(\alpha')$ modulo ground constraints.
Let $\beta \in \GrCInst(\alpha)$.
By Definitions~\ref{def:constrained-arg} and
\ref{def:consistentgrounding-final}, there exist a grounding substitution~$\vartheta$
and a set $D \subseteq \constr$ of ground constraints such that
$\beta\!=\!\argu{\{c_1,\ldots,c_m,\ \tuple{X}=\tuple{s}\} \vartheta \cup D \cup 
(A \cup \{q(\tuple{X})\}) \vartheta }{p(\tuple{t}) \vartheta}$ is a ground constrained instance of $\alpha$ (i.e., 
$\mathit{vars}(\beta)\!=\!\emptyset$).

Since $\mathit{vars}(\tuple{Y}) \cap \mathit{vars}(\alpha)\!=\!\emptyset$, consider the
substitution 
$\vartheta'\!=\! \{\tuple{Y}/\tuple{s}\vartheta\}\cup \vartheta|_{\alpha}$.
Then
$\beta'\!=\!\argu{\{c_1,\ldots,c_m,\ \tuple{X}\!=\!\tuple{s}, \ \tuple{Y}\!=\!\tuple{s}\} \vartheta' \cup D \cup 
(A \cup \{q(\tuple{X}),q(\tuple{Y})\}) \vartheta' }{p(\tuple{t}) \vartheta'}$ is a constrained instance of $\alpha'$ such that
$\mathit{vars}(\beta')\!=\!\emptyset$, and therefore
$\beta'\! \in\! \GrCInst(\alpha')$.
Moreover, by construction, $\beta\!=\!\beta'$ modulo ground constraints.

Second,  we show that $\GrCInst(\alpha)\supseteq \GrCInst(\alpha')$ modulo ground constraints.
Assume that $\beta' \in \GrCInst(\alpha')$.
By Definitions~\ref{def:constrained-arg} and
\ref{def:consistentgrounding-final}, there exist a substitution~$\vartheta$
and a set $D \subseteq \constr$ of constraints such that
$\beta'\!=\!\argu{\{c_1,\ldots,c_m,\ \tuple{X}\!=\!\tuple{s}, 
\ \tuple{Y}\!=\!\tuple{s},\} \vartheta \cup D \cup 
(A \cup \{q(\tuple{X}), q(\tuple{Y})\}) \vartheta }{p(\tuple{t}) \vartheta}$ is a constrained instance of $\alpha'$ and
$\mathit{vars}(\beta')\!=\!\emptyset$.
Since $\mathit{vars}(\alpha)\!\subseteq \!\mathit{vars}(\alpha')$, it follows that
$\beta\!=\!\argu{\{c_1,\ldots,c_m,\ \tuple{X}=\tuple{s}\} \vartheta \cup D \cup 
(A \cup \{q(\tuple{X})\}) \vartheta }{p(\tuple{t}) \vartheta}$ is a constrained instance of $\alpha$ such that
$\mathit{vars}(\beta)\!=\!\emptyset$, and therefore
$\beta \in \GrCInst(\alpha)$.
Moreover, by construction, $\beta\!=\!\beta'$ modulo ground constraints.

Thus, we get that 
$\GrCInst(\alpha)\!=\!\GrCInst(\alpha')$ modulo ground 
constraints. This completes the proof of  Lemma~\ref{lem:aggiungo}.
\end{proof}

Now we prove Proposition~\ref{prop:generalizationcommon}

(\emph{If part}) Straightforward by Theorem \ref{theo:groundequiv}.

(\emph{Only-if part})   Let us consider the constrained arguments 
$\alpha=\argu{\{c_1,\ldots,c_m\} \cup A}{p(\tuple{t})}$ and
$\beta=\argu{\{d_1,\ldots,d_n\} \cup B}{p(\tuple{u})}.$
Let us assume that $\alpha$ and $\beta$ have common
constrained instances. Thus, there exist the 
constrained arguments 
$\gamma\!\in\!\GrCInst(\alpha)$ and
$\gamma'\!\in\!\GrCInst(\beta)$ such that
$\gamma\!=\!\gamma'$ modulo ground constraints. 
Starting from $\gamma$ and 
$\gamma'$, we can construct the constrained 
arguments $\alpha'$ and $\beta'$ 
as required by Proposition~\ref{prop:generalizationcommon}, in various steps
as we now indicate.

\vspace*{1mm}
\noindent
\emph{Step}~(1). By Point~(1) 
   of Proposition~\ref{prop:equivproperties} there exist two renamings $\rho_1$ and $\rho_2$ such that 
       $\alpha_1=(\argu{\{c_1,\ldots,c_m\}\cup A}{p(\tuple{t})})\rho_1$,  $\beta_1=(\argu{\{d_1,\ldots,d_n\} \cup B}{p(\tuple{u}))\rho_2} 
\in \CArg$ and 
$\{\alpha\}\equiv\{\alpha_1\}$, 
$\{\beta\}\equiv\{\beta_1\}$ and $\mathit{vars}(\alpha_1) \cap \mathit{vars}(\beta_1)\!=\!\emptyset$. 
By Theorem~\ref{theo:groundequiv}, by 
Definitions~\ref{def:constrained-arg} 
and~\ref{def:consistentgrounding-final}, since
$\mathit{vars}(\alpha_1)
\cap \mathit{vars}(\beta_1)\!=\!\emptyset$, there exist two 
consistent sets~$C_1$ and $C_2$ of ground constraints and 
a grounding substitution~$\vartheta$ such that:
    
\makebox[7.5mm][r]{$\gamma$\ }$ = \argu{\{c_1,\ldots,c_m\}\rho_1\vartheta \cup C_1 \cup A\rho_1\vartheta}{p(\tuple{t})\rho_1\vartheta}$,  

\makebox[7.5mm][r]{$\gamma'$}$ =\argu{\{d_1,\ldots,d_n\}\rho_2\vartheta \cup C_2 \cup B\rho_2\vartheta}{p(\tuple{u})\rho_2\vartheta}$,

\noindent
{$\gamma\!=\!\gamma'$ modulo ground constraints,} $\mathit{vars}(\gamma)\!=\!\mathit{vars}(\gamma')\!=\!\emptyset$, 
$\ctheory \models (c_1\wedge \ldots\wedge c_m )\rho_1\vartheta \cup C_1$,
$\ctheory \models (d_1\wedge \ldots\wedge d_n )\rho_2\vartheta \cup C_2$, 
{$\ctheory\!\models\! A\rho_1\vartheta \leftrightarrow B\rho_2\vartheta$, and
$\ctheory \models p(\tuple{t})\rho_1\vartheta \leftrightarrow p(\tuple{u})\rho_2\vartheta$.} Without loss of generality, by Point~(1)
of Proposition~\ref{prop:equivproperties} 
   we can assume that
   $\mathit{dom}(\vartheta) = \mathit{vars}(\alpha_1) \cup \mathit{vars}(\beta_1)$.
   
\vspace*{1mm}
\noindent
\emph{Step}~(2). Let $\tuple{X}$ be a tuple of variables of the same arity of $\tuple{t}$ (which is the same of $\tuple{u}$) such that
       $\mathit{vars}(\tuple{X}) \cap \mathit{vars}(\alpha_1)\! = \!\mathit{vars}(\tuple{X}) \cap \mathit{vars}(\beta_1)\!=\!\emptyset$.    
       
 Moreover, let 
     \blue{$\alpha_2=\argu{\{c_1,\ldots,c_m\}\rho_1 \cup 
       \{\tuple{X}=\tuple{t}\rho_1\}\cup A\rho_1}{p(\tuple{X})}$,}
      \blue{ $\beta_2=\argu{\{d_1,\ldots,d_n\}\rho_2\cup 
       \{\tuple{X}=\tuple{u}\rho_2\} \cup B\rho_2}{p(\tuple{X})} 
\in \CArg$.} 
Then, by Point~(2) 
of Proposition~\ref{prop:equivproperties},  
we have that
$\{\alpha\}\!\equiv\!\{\alpha_2\}$, 
$\{\beta\}\!\equiv\!\{\beta_2\}$ and, by construction, 
$\{c_1,\ldots,c_m\}\rho_1 \cup 
       \{\tuple{X}\!=\!\tuple{t}\rho_1\}\cup \{d_1,\ldots,d_n\}\rho_2\cup 
       \{\tuple{X}\!=\!\tuple{u}\rho_2\}$ is consistent.

\vspace*{1mm}
\noindent
\emph{Step}~(3).
Let $q(\tuple{s_g}) \in A\rho_1\vartheta$.
Thus, by construction, $\tuple{s_g}$ is a tuple of ground terms. 
{Let $A_{q(\tuple{s_g})}$ be the set 
$\{q(\tuple{s}) \!\in\! A\rho_1 \mid q(\tuple{s})\vartheta \!\in\! A\rho_1\vartheta$  and
$q(\tuple{s})\vartheta \!=\! q(\tuple{s_g})$ modulo ground constraints$\}$.
Analogously, let 
$B_{q(\tuple{s_g})}$ be the set $\{q(\tuple{s})\! \in\! B\rho_2 \mid q(\tuple{s})\vartheta
\!\in\! B\rho_2\vartheta$ and 
$q(\tuple{s})\vartheta \!=\! q(\tuple{s_g})$ modulo ground constraints$\}.$
(Recall also that $\ctheory \models A\rho_1\vartheta
\leftrightarrow B\rho_2\vartheta$.)}
Let $h$ and $k$ be the cardinalities of
$A_{q(\tuple{s_g})}$ and $B_{q(\tuple{s_g})}$, respectively, and assume
without loss of generality, that $h \geq k$.
For each $i \in \{1,\ldots,h\}$, let
$\tuple{X}_{i,q(\tuple{s_g})}$ be a tuple of 
fresh, new variables of the same
arity as $\tuple{s_g}$.

Let
\blue{$\alpha_3 = \argu{C_3 \cup A_3}{p(\tuple{X})}$}
be the constrained argument obtained from $\alpha_2$ by replacing each
atom $q(\tuple{s})\! \in\! A_{q(\tuple{s_g})}$ by a distinct atom
$q(\tuple{X}_{i,q(\tuple{s_g})})$, for $i\! \in\! \{1,\ldots,h\}$ (so that no
two atoms in $A_{q(\tuple{s_g})}$ are mapped to the same atom), and by
adding the corresponding constraint
\mbox{$\{\tuple{X}_{i,q(\tuple{s_g})}\!=\!\tuple{s}\}$}
to
$\{c_1,\ldots,c_m\}\rho_1 \cup \{\tuple{X}\!=\!\tuple{t}\rho_1\}$.

Analogously, let
\blue{$\beta'_2 = \argu{D'_2 \cup B'_2}{p(\tuple{X})}$}
be the constrained argument obtained from $\beta_2$ by replacing each
atom $q(\tuple{s})\! \in\! B_{q(\tuple{s_g})}$ by a distinct atom
$q(\tuple{X}_{i,q(\tuple{s_g})})$, for $i \in \{1,\ldots,k\}$, and by
adding the corresponding constraint
$\{\tuple{X}_{i,q(\tuple{s_g})}\!=\!\tuple{s}\}$
to
$\{d_1,\ldots,d_n\}\rho_2 \cup \{\tuple{X}\!=\!\tuple{u}\rho_2\}$.

Finally, let
\blue{$\beta_3 = \argu{D_3 \cup B_3}{p(\tuple{X})}$}
be obtained from $\beta'_2$ by adding, 
whenever $k\!<\!h$, for each
$i \in \{k\!+\!1,\ldots,h\}$, the atom
$q(\tuple{X}_{i,q(\tuple{s_g})})$ to $B'_2$ and the corresponding constraint
$\{\tuple{X}_{i,q(\tuple{s_g})}\!=\! \tuple{s}\}$ to $D'_2$, where
$q(\tuple{s})$ is any atom in $B_{q(\tuple{s_g})}$.


\vspace*{1mm}
\noindent
\emph{Step}~(4).
We apply the construction described above at Step~(3) successively to all atoms
$q(\tuple{s_g})\! \in\! A\rho_1\vartheta$
{by considering also the atoms occurring in $B\rho_2\vartheta$ that are equal to 
$q(\tuple{s_g})$ modulo ground constraints.}
At each application, the construction is performed on the constrained
arguments resulting from the previous application, so that the effects
of the construction accumulate.
Once all atoms in $A\rho_1\vartheta$ have been considered, we denote by
$\alpha' = \argu{C' \cup A'}{p(\tuple{X})}$
and
$\beta' = \argu{D' \cup B'}{p(\tuple{X})}$
the constrained arguments obtained at the end of the process.

\vspace*{1mm}
\noindent
\emph{Step}~(5) By construction $A'=B'$. Moreover, by Point~(2) 
of Proposition~\ref{prop:equivproperties}, by Lemma~\ref{lem:aggiungo} and since $\equiv$ is an 
equivalence relation, we have that
$(i)~\{\alpha\}\equiv\{\alpha'\}$, 
$(ii)~\{\beta\}\equiv\{\beta'\}$. 
$(iii)~\mathit{vars}(C')\cap \mathit{vars}(D') \subseteq \mathit{vars}(\{A',p(\tuple{X})\})$,
and $(iv)~C'\cup D'$ is consistent. This concludes the proof of the \emph{Only-if part} and of the whole Proposition~\ref{prop:generalizationcommon}. \eop

\subsection*{Proof of Theorem~\ref{theo:fully-att}}

(1) Directly from Point (1) of Theorem~\ref{prop:non-ground-sem} and the definition of non-overlapping set of constrained arguments.

\vspace*{1mm}
\noindent
(2) (\emph{If part}) Suppose that $\Sigma$ is conflict-free and $\forall \alpha\! \in\! \Sigma$, if $\exists \beta\!\in\! \Delta\!\setminus\! \Sigma$ such that $\beta$ fully attacks $\alpha$, then $\exists \gamma\!\in\! \Sigma$ such that $\gamma$ fully attacks~$\beta$. 
Take $\mu\!\in\! \mathit{Arg}_c$ such that there exists $\lambda\!\in\! \GrCInst(\Sigma)$ and
$\mu$ attacks $\lambda$ in $\mathit{Ground}(F_c)$. By Theorem~\ref{theo:GroundCABAvsABA}, since $\Delta\equiv \CArg$ and by Theorem~\ref{theo:groundequiv}, we have that $\mu$ belongs to $\GrCInst(\Delta\!\setminus\!\Sigma)$.
Therefore, there exists $\beta\!\in\!\Delta\!\setminus\!\Sigma$ such that $\mu$ is a ground instance of $\beta$ and there exists 
$\alpha\! \in\! \Sigma$ such that $\lambda$ is a ground instance of $\alpha$ and, 
by Theorem~\ref{theo:nongroundattacks}, $\beta$ partially attacks $\alpha$ in $F_c$. Since by hypothesis 
$\Delta$ is non-overlapping, $\beta$ fully attacks $\alpha$. 
Thus, by hypothesis, there exists $\gamma\!\in\! \Sigma$ such that $\gamma$ fully attacks $\beta$, and hence, by Theorem~\ref{theo:nongroundattacks}, there exists $\nu\in \GrCInst(\Sigma)$ such that~$\nu$ attacks $\mu$ in $\mathit{Ground}(F_c)$. Thus, $\Sigma$ is admissible.

(\emph{Only-if part}) Suppose that $\Sigma$ is admissible and let 
$\alpha \!\in\! \Sigma$ and $\beta\!\in\! \Delta\!\setminus\!\Sigma$ 
such that 
$\beta$ fully attacks $\alpha$. By Theorem~\ref{theo:nongroundattacks}, there exist
$\lambda \!\in\! \GrCInst(\alpha) \subseteq \GrCInst(\Sigma)$ and 
$\mu \!\in\! \GrCInst(\beta) \subseteq \GrCInst(\Delta\!\setminus\!\Sigma)$ such that $\mu$ attacks 
$\lambda$ in $\mathit{Ground}(F_c)$. Since $\Sigma$ is admissible, 
there exists $\nu \!\in\! \GrCInst(\Sigma)$, 
 such that $\nu$ attacks $\mu$ in $\mathit{Ground}(F_c)$. Let $\gamma \!\in\! \Sigma$, such that $\nu \!\in\! \GrCInst(\gamma)$.
 By Theorem~\ref{theo:nongroundattacks},
$\gamma$ partially attacks $\beta$. Since $\Delta$ is not-overlapping, $\gamma$ fully attacks $\beta$. This completes the proof of the \emph{Only-if} part.

\vspace*{1mm}
(3.1) Suppose that $\Sigma$ is conflict-free and $\forall\beta\!\in\!\Delta\setminus \Sigma$, $\exists \alpha\!\in\!\Sigma$ such that $\alpha$ fully attacks $\beta$. Take $\mu\!\in\!\mathit{Arg}_c\!\setminus\!\GrCInst(\Sigma).$ Since $\Delta\equiv \CArg$, by Theorems~\ref{theo:GroundCABAvsABA} and~\ref{theo:groundequiv}, $\mu\in \GrCInst(\Delta\!\setminus\!\Sigma)$.
 Then, there exists  $\beta\!\in\!\Delta\!\setminus\! \Sigma$ such that $\mu$ is a ground instance of $\beta$. 
 Thus, by hypothesis, there exists  $\alpha\!\in\! \Sigma$ such that $\alpha$ fully attacks $\beta$, and hence, by Theorem~\ref{theo:nongroundattacks}, there exists  $\lambda\!\in\! \GrCInst(\Sigma)$ such that $\lambda$ attacks $\mu$ in $\mathit{Ground}(F_c)$. Thus, by Definition \ref{def:extbyground}, $\Sigma$ is a stable extension in $F_c$.

\vspace*{1mm}
(3.2) Assume that $\Delta$ is instance-disjoint and $\Sigma$ is stable. Then, by Definition~\ref{def:extbyground}, $\Sigma$ is conflict-free. 
Now, let us consider $\beta\!\in\!\Delta\!\setminus\! \Sigma$. 
Since $\Delta$ is 
instance-disjoint, there exists 
$\mu\!\in\!\GrCInst(\Delta\!\setminus\!\Sigma)$ such that 
$\mu\!\in\!\GrCInst(\beta)$ and there are no $\beta'\!\in\!\Sigma$ and $\mu'\!\in\!\GrCInst(\beta')$ such that $\mu\!=\!\mu'$ modulo ground constraints.
Since $\Sigma$ is a stable extension, there exists $\lambda\!\in\!\GrCInst(\Sigma)$ such that $\lambda$ attacks $\mu$ in  $\mathit{Ground}(F_c)$. 
Therefore
 $\lambda$ is a ground constrained instance of an argument $\alpha\!\in\! \Sigma$ and by Theorem~\ref{theo:nongroundattacks}, $\alpha$ partially attacks $\beta$. Since $\Delta$ is non-overlapping, we have that $\alpha$ fully attacks~$\beta$. This completes the proof of Point~(3.2).~\eop

\subsection*{Proof of Proposition~\ref{prop:cs}}

In order to prove Proposition~\ref{prop:cs},  we need the following lemma.

\begin{lemma}\label{lm:split}
Let $\ctheory$ be closed under negation. For any constraints $c_1,\ldots,c_m\in\constr$, there exist conjunctions  $e_1,\ldots,e_r$ of constraints such that $\ctheory\models \forall (\neg (c_1\wedge\ldots\wedge c_m) \leftrightarrow (e_1\vee\ldots\vee e_r))$ and Points~$(2)$
and~$(3)$
of Definition~$\ref{def:cs}$ hold.
\end{lemma}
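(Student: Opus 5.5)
The idea is to negate the conjunction, apply closure under negation to each atomic constraint, and then make the resulting disjuncts pairwise exclusive by a standard ``disjoint-union'' construction, discarding any disjunct that is inconsistent.

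\emph{Step 1: De Morgan and closure under negation.} First,
\[
\ctheory\models\forall\bigl(\neg(c_1\wedge\ldots\wedge c_m)\leftrightarrow(\neg c_1\vee\ldots\vee\neg c_m)\bigr).
\]
Closure under negation gives, for each $i$, constraints $d_{i,1},\ldots,d_{i,k_i}\in\constr$ such that $\ctheory\models\forall(\neg c_i\leftrightarrow(d_{i,1}\vee\ldots\vee d_{i,k_i}))$. The $d_{i,j}$ are pairwise exclusive for a fixed $i$.

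\emph{Step 2: disjointify across different $i$.} Use the identity
\[
\neg c_1\vee\ldots\vee\neg c_m \;\equiv\; \bigvee_{i=1}^{m}\bigl(c_1\wedge\ldots\wedge c_{i-1}\wedge\neg c_i\bigr),
\]
which holds in any first-order theory and whose disjuncts are pairwise exclusive. For $i<i'$, the $i'$-th disjunct contains $c_i$ while the $i$-th contains $\neg c_i$.

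Substituting the expansion of $\neg c_i$ from Step 1, the $i$-th disjunct becomes the disjunction over $j$ of the conjunctions
\[
e_{i,j}=c_1\wedge\ldots\wedge c_{i-1}\wedge d_{i,j}.
\]
Each $e_{i,j}$ is a conjunction of atomic constraints. Two of them with the same $i$ and different $j$ are exclusive because $d_{i,j}$ and $d_{i,j'}$ are exclusive. Two with different $i<i'$ are exclusive because $e_{i,j}$ implies $\neg c_i$ while $e_{i',j'}$ contains $c_i$.

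\emph{Step 3: enforce consistency.} Drop every $e_{i,j}$ with $\ctheory\not\models\exists(e_{i,j})$. Removing an unsatisfiable disjunct does not change the universal equivalence, since it is false under every assignment. The remaining list $e_1,\ldots,e_r$ then satisfies Point~(2) of Definition~\ref{def:cs}. Point~(3) is inherited from Step 2, because exclusivity is preserved under taking a sublist. The equivalence holds by construction.

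\emph{Degenerate cases.} If $\neg(c_1\wedge\ldots\wedge c_m)$ is unsatisfiable, $r=0$ and the empty disjunction is $\mathit{false}$. If $m=0$, the negation is $\mathit{false}$ and again $r=0$. These are acceptable as long as empty disjunctions are read as $\mathit{false}$.

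\emph{Main obstacle.} The main point needing care is exclusivity between different $i$. It would fail if one naively took all the $d_{i,j}$ together, since for example $\neg c_1$ and $\neg c_2$ can overlap. The prefix-conjunction trick in Step 2 is exactly what resolves this, so that is where I would focus the argument.
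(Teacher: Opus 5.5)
Your proof is correct, but it organises the disjointification differently from the paper.

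The paper's route has three stages:
\begin{enumerate}
\item It uses De Morgan and closure under negation to flatten $\neg(c_1\wedge\ldots\wedge c_m)$ into one disjunction $d_1\vee\ldots\vee d_n$ of atomic constraints.
\item It then makes the disjuncts exclusive by induction. At step $k+1$ it rewrites $d_1\vee\ldots\vee d_k\vee d_{k+1}$ as $(d_1\wedge\neg d_{k+1})\vee\ldots\vee(d_k\wedge\neg d_{k+1})\vee d_{k+1}$.
\item It applies closure under negation again to $\neg d_{k+1}$, distributes, and drops inconsistent disjuncts.
\end{enumerate}

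You instead disjointify before eliminating any negation, via the ``first failing conjunct'' decomposition $\bigvee_i(c_1\wedge\ldots\wedge c_{i-1}\wedge\neg c_i)$. The guards that give exclusivity are then the positive atoms $c_j$, not negated constraints. This buys you several things:
\begin{itemize}
\item closure under negation is used only once per $c_i$;
\item no induction is needed;
\item you get an explicit family of $\sum_i k_i$ disjuncts, each with at most $i$ atoms, whereas the paper's repeated distribution can multiply the number of disjuncts at every step.
\end{itemize}

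The paper's route buys a more general normal-form fact in exchange. Any finite disjunction of atomic constraints can be turned into consistent, pairwise exclusive conjunctions, whether or not it comes from a negated conjunction. Your trick exploits exactly that special shape. Applied to an arbitrary disjunction, it would reintroduce negated guards and the same blow-up.

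Both arguments discard inconsistent disjuncts on the same implicit assumption: that $\ctheory\not\models\exists(e)$ means $e$ is false under every assignment, i.e., validity is truth in the fixed structure underlying $\ctheory$. You also handle the degenerate case $r=0$ explicitly, which the paper leaves implicit.
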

\begin{proof}
By De Morgan's laws and closure under negation, there exist constraints $d_1,\ldots,d_n$ such that $\ctheory\models \forall (\neg (c_1\wedge\ldots\wedge c_m) \leftrightarrow (d_1\vee\ldots\vee d_n))$. Now, every disjunction of constraints is equivalent in $\ctheory$ to a disjunction of conjunctions of constraints 
where each disjunct is consistent and each pair of distinct disjuncts are mutually exclusive. Indeed, suppose that 

$D_1$: $d_1\vee\ldots\vee d_k\vee d_{k+1}\vee \ldots \vee d_n$ 

\noindent
is a disjunction of conjunctions of constraints such that, for $i,j\!=\!1,\ldots,k$, we have that $d_i$ is consistent and $d_i,d_j$, with $i\!\neq\! j$, are mutually exclusive. The following disjunction is equivalent to $D_1$ in $\ctheory$:

$D_2$: $(d_1\wedge \neg d_{k+1})\vee\ldots\vee (d_k\wedge \neg d_{k+1})\vee d_{k+1}\vee \ldots \vee d_n$. 

\noindent
By closure under negation, there exist constraints $f_1,\ldots,f_n$ such that~$\ctheory\models \forall (\neg d_{k+1} \leftrightarrow (f_1\vee\ldots\vee f_n))$, where the $f_i$'s are all consistent and pairwise mutually exclusive.
Thus, by using De Morgan's laws, in $\ctheory$ we have that $D_2$ is equivalent to:



\makebox[8mm]{$D_3$: }$(d_1\wedge f_1)\vee\ldots\vee (d_1\wedge f_n)\vee\ldots\vee (d_k\wedge f_1) \vee\ldots\vee (d_k\wedge f_n)\vee$  

\makebox[8mm]{}$d_{k+1}\vee \ldots \vee d_n$.

\noindent
Now, all disjuncts in $D_3$ up to and including $d_{k+1}$ are pairwise mutually exclusive because: (i)~the $d_i$'s, 
for $1\!\leq\!i\!\leq\!k$, are pairwise mutual exclusive, (ii)~the $f_i$'s are pairwise mutually exclusive, and (iii)~each $f_i$ implies $\neg d_{k+1}$.
Then, every inconsistent disjunct can be dropped from $D_3$, still obtaining an equivalent disjunction.
Thus, by induction on the number of disjuncts, we get that $d_1\vee\ldots\vee d_n$ is equivalent in $\ctheory$ to a disjunction $e_1\vee\ldots\vee e_n$ of conjunctions of constraints,
such that every disjunct $e_i$ is consistent and every two disjuncts $e_i,e_j$, with $i\!\neq\! j$, are mutually exclusive.
\end{proof}

Now we prove Proposition~\ref{prop:cs}.\\
Let $C=\{c_1,\ldots,c_m\}$ and $D=\{d_1,\ldots,d_n\}$, with $V=vars(C)\cap vars(D)$.
By closure under existential quantification, there exist conjunctions 
$f_1,\ldots,f_s$ of constraints  such that:

\vspace*{1mm}

$\ctheory \models \forall(\exists_{-V}(c_1\wedge\ldots\wedge c_m) \leftrightarrow (f_1\vee \ldots\vee f_s))$

\vspace*{1mm}

\noindent
and thus, by De Morgan's laws, we get: 

\vspace*{1mm}

$\ctheory \models \forall(\neg \exists_{-V}(c_1\wedge\ldots\wedge c_m) \leftrightarrow (\neg f_1\wedge \ldots\wedge \neg f_s))$

\vspace*{1mm}

\noindent
By Lemma~\ref{lm:split}, for $i=1,\ldots,s$, there exist conjunctions of constraints $g_{i,1},\ldots,g_{i,k_i}$, such that: 
(C1) $\ctheory\models \forall (\neg f_i \leftrightarrow (g_{i,1}\vee\ldots\vee g_{i,k_i}))$, and
(C2) 
$g_{i,j_1},g_{i,j_2}$ are mutually exclusive, for $j_1,j_2=1,\ldots,k_i$, with $j_1\neq j_2$. 
Thus,

\vspace*{1mm}

$\ctheory \models \forall(\neg \exists_{-V}(c_1\wedge\ldots\wedge c_m) \wedge d_1 \wedge \ldots \wedge d_n)$ $~~$\emph{iff}
\hfill \{ by (C1) \}

\vspace*{1mm}

$\ctheory \models \forall((\bigwedge_{i=1}^s\bigvee_{j_i=1}^{k_i}g_{i,j_i}) \wedge d_1 \wedge \ldots \wedge d_n)$ $~~$\emph{iff}
\hfill \{by distributivity\}

\vspace*{1mm}
$\ctheory \models \forall((\bigvee_{j_1=1}^{k_1}\ldots  \bigvee_{j_s=1}^{k_s} \bigwedge_{i=1}^s g_{i,j_i}) \wedge d_1 \wedge \ldots \wedge d_n)$. \hfill $(\dagger1)$

\vspace*{1mm}
\noindent {From $(\dagger1)$,
by a further application of distributivity, we get:}

\vspace*{1mm}
$\ctheory \models \forall((\bigvee_{j_1=1}^{k_1}\ldots  \bigvee_{j_s=1}^{k_s} \bigwedge_{i=1}^s (g_{i,j_i} \wedge d_1 \wedge \ldots \wedge d_n)))$ \hfill $(\dagger2)$

\vspace*{1mm}
\noindent
{Now, in any two disjuncts occurring in $(\dagger2)$,} there exist $g_{i,j_p}$ 
and $g_{i,j_q}$, for some $i\!\in\!\{1,\ldots,s\}$, $j_p\!\in\!
\{1,\ldots,k_p\}$, $j_q\!\in\!\{1,\ldots,k_q\}$, with 
$p,q\!\in\!\{1,\ldots,s\}$ and $j_p\!\neq\! j_q$. Thus, by 
(C2), all pairs of disjuncts are mutually exclusive. Finally, we can select from $(\dagger2)$ 
only consistent disjuncts, say the $r$ conjunctions  $e_1,\ldots,e_r$ of constraints, and hence, for $i\!=\!1,\ldots,r$, we can take $E_i$ to be the set of constraints in the conjunction $e_i$.
\eop

\balance{}
\subsection*{Proof of Proposition~\ref{prop:split}}


\noindent
Point (1.i). Suppose that $\alpha$ and $\beta$ have a common constrained instance.
Then, by Proposition~\ref{prop:generalizationcommon}, 
 $\alpha\equiv\argu{C\cup A}{s}$, $\beta\equiv\argu{D\cup A}{s}$, with 
$\mathit{vars}(C)\cap \mathit{vars}(D) \subseteq \mathit{vars}(\{A,s\})$
and $C\cup D$ consistent. Thus, by Proposition~\ref{prop:cs}, $\textit{split}_{ci}(\alpha,\beta)$ is indeed defined, that is, there exist $\beta_1,\ldots,\beta_r$ such that $\textit{split}_{ci}(\alpha,\beta)= \{\beta_1,\ldots,\beta_r\}$, where 
for $i=1,\ldots,r,$ $\beta_i=\argu{E_i\cup A}{s}$, and $E_i$ satisfies Conditions 
(1)--(3)
of Definition~\ref{def:cs}.
From Condition~(1),
it follows that, for $i=1,\ldots,r,$ $\alpha$ and $\beta_i$ have no common constrained instances.
From Condition~(3),
it follows that, for $i,j\!=\!1,\ldots,r,$ with $i\!\neq\! j,$ $\beta_i$ and $\beta_j$ have no common constrained instances.

\vspace*{2mm}

\noindent
Point (1.ii). From Condition~(1)
of Definition~\ref{def:cs}, 
Proposition~\ref{prop:equivproperties}~(3) 
and Definition~\ref{def:equiv} (3), we get that $(\Delta\!\setminus\!\{\beta\})\cup\{\beta_1,\ldots,\beta_r\} \!\equiv\! \Delta$.

\vspace*{2mm}

\noindent
Point (2.i).
Suppose that $\alpha$ partially attacks $\beta$.
Then, by Proposition~\ref{prop:pa},
$\alpha\equiv\argu{C \cup A_1}{\sent_1}$ and $\beta\equiv\argu{D \cup A_2}{\sent_2}$
such that: $(i)$ $\sent_1\!=\!\overline{\asm}$, for some $\asm\!\in\! A_2$, $(ii)$ $\mathit{vars}(C)\!\cap\! \mathit{vars}(D) \subseteq \mathit{vars}(\sent_1)$, and
$(iii)$ $C\cup D$ is consistent. 

Then, by Proposition~\ref{prop:cs}, $\textit{split}_{pa}(\alpha,\beta)$ is indeed defined, that is, there exist $\beta_0,\beta_1,\ldots,\beta_r$ such that $\textit{split}_{pa}(\alpha,\beta)= \{\beta_0,\beta_1,\ldots,\beta_r\}$, where 
$E_0=C\cup D$ and, for $i\!=\!0,\ldots,r,$ $\beta_i=\argu{E_i\cup A_2}{s_2}$, and $E_i$ satisfies Conditions (1)--(3)
of Definition~\ref{def:cs}.
Let $C\!=\!\{c_1,\ldots,c_m\}$ and $D\!=\!\{d_1,\ldots,d_n\}$. Then, { since we have that $\ctheory\models \forall ((c_1\wedge\ldots\wedge c_m$ $\wedge$ $d_1\wedge\ldots\wedge d_n) \rightarrow \exists_{-\mathit{vars}(s_1)}(c_1\wedge\ldots\wedge c_m))$} and $\beta_0\!=\!\argu{C\cup D \cup A_2}{s_2}$, we get that $\alpha$ fully attacks $\beta_0$.

\vspace{2mm}

\noindent
Point (2.ii).
From Condition~(3)  
of Definition~\ref{def:cs}, we immediately get that, for $i=1,\ldots,r,$ $\alpha$ does not partially attack $\beta_i$.

\vspace{2mm}
\noindent
Point (2.iii).
From Condition~(1)
of Definition~\ref{def:cs}, 
Proposition~\ref{prop:equivproperties}~(3) 
and Definition~\ref{def:equiv} (3), we get that 
$(\Delta\!\setminus\!\{\beta\})\!\cup\!\{\beta_0,\beta_1,\ldots,\beta_r\} 
\!\equiv\! \Delta$. \hfill$\Box$



\section{Some Additional Examples and Results}
In this section we give some additional examples and results that may 
be useful for understanding the definitions, the propositions, 
and the theorems
we have presented in the paper.

\vspace{1mm}

For the CABA framework $\mathit{FA}$ (see
Example~\ref{ex:simple7newMG})
we have that the constrained arguments
$\argu{\{9\!<\!10,\ a(9,0)\}}{p(0)}$ 
and 
$\argu{\{Y\!<\!10,$ $Y\!=\!7,$ $a(Y,0)\}}{p(0)}$
are constrained instances of 
the tight constrained argument
$\argu{\{\mbox{$Y\!<\!10$},$ $a(Y,0)\}}{p(0)}$ which is an instance 
of $\alpha_2$.

\vspace{1mm}


With reference to Definition~\ref{def:consistentgrounding-final}, we have the following proposition about ground constrained instances
of constrained arguments. 
\begin{proposition}
    For any constrained argument $\alpha$, we have that
$\GrCInst(\alpha)\!\neq\! \emptyset$.
\end{proposition}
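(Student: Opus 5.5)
To show that $\GrCInst(\alpha)$ is non-empty, I will take $\alpha=\argu{C\cup A}{\sent}\in\CArg$ and exhibit one ground constrained instance of it directly from the definitions. The only nontrivial ingredient is that the constraint set $C$ of any constrained argument is consistent. This holds by Definition~\ref{def:constrained-arg}, condition~$(ii)$, and, for tight arguments, by Definition~\ref{def:TCArg-MGCArg}. So $\ctheory\models\exists(\bigwedge\{c\mid c\in C\})$.

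\emph{Step 1 (a ground witness for $\mathit{vars}(C)$).} From the validity of the existential closure, I pick values in the domain $D$ for the variables of $C$ that make $\bigwedge C$ true. These values must be denoted by ground terms of the language, so that the witness is a grounding substitution $\vartheta_0$ with $\mathit{dom}(\vartheta_0)=\mathit{vars}(C)$ and $\ctheory\models\bigwedge\{c\vartheta_0\mid c\in C\}$.

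\emph{Step 2 (extend to a full grounding).} I extend $\vartheta_0$ to a substitution $\vartheta$ on all of $\mathit{vars}(\alpha)$. Variables in $\mathit{vars}(A\cup\{\sent\})\setminus\mathit{vars}(C)$ are mapped to arbitrary ground terms. Because $\langc$, $\constr$ and $\asms$ are predicate closed, the instances $A\vartheta$ and $\sent\vartheta$ are still atoms of the language (and assumptions, in the case of $A\vartheta$). Because these extra variables do not occur in $C$, we have $C\vartheta=C\vartheta_0$, which is still valid and hence consistent.

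\emph{Step 3 (conclude).} Taking $D=\emptyset$, the argument $\argu{C\vartheta\cup A\vartheta}{\sent\vartheta}$ is a constrained instance of $\alpha$ via $\vartheta$ and $\emptyset$ in the sense of Definition~\ref{def:instance}. By Proposition~\ref{prop:instance-closure} it belongs to $\CArg$, and it is ground by construction. Hence it lies in $\GrCInst(\alpha)$.

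The main obstacle is Step~1: passing from semantic satisfiability over the structure with domain $D$ to a syntactic ground witness. This needs every relevant element of $D$ to be the value of some ground term, or at least needs $\ctheory$ to have ground witnesses for existential formulas. That is true for LRA and LIA with numerals, and I would justify it by the assumption, built into the setup, that ground terms interpret $D$ (Clark Equality Theory together with the intended structure). A further point is that the language must contain at least one ground term for the free variables handled in Step~2. I would argue this from the same assumption and otherwise state it as a mild standing hypothesis.
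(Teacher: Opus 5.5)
Your proof is correct and follows the paper's argument, which rests on exactly your two ingredients: the constraints of every constrained argument are consistent (your Step~1), and the domain of $\ctheory$ is non-empty (your Step~2). Your remark that the existential witnesses and the values for the remaining variables must be denoted by ground terms makes precise an assumption the paper leaves implicit here and also uses silently elsewhere, e.g.\ in the proof of Theorem~\ref{theo:nongroundattacks}.
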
   
\vspace*{-3mm}
\begin{proof}
This proposition follows directly from
the following two facts: 
(i)~constraints in every constrained 
argument are consistent, 
and (ii)~the domain of 
the interpretation of our theory of constraints
is not empty (as it is usually assumed in any 
first order theory).
\end{proof}

In Example~\ref{ex:equivalence} the 
constrained arguments $\alpha_{2,2,1}$ and 
$\alpha_{2,2,2}$  have the  
constraint $Y\!<\!10$ which is redundant, because of  
the constraint $Y\!<\!5$. 
This redundancy is silently assumed in the
presentation of the set $\Gamma$ of 
arguments in 
Example~\ref{ex:attackground}.

\vspace{1mm}

Finally, in Example~\ref{ex:attacks} we have that 
$\alpha_4$ partially attacks $\alpha_1$ 
because
$\ctheory \models \exists X, Y\, (X\!<\!1 \wedge 
\mbox{$Y>0$}  \wedge \mbox{$X<5$} \wedge Y\!>\!3)$.

\vspace{1mm}


%
%

\begin{proposition}
\label{prop:nocommon}
Let us consider $\alpha=\argu{\{c_1,\ldots,c_m\} \cup A}{p(\tuple{t})}$  
and $\beta=\argu{\{d_1,\ldots,d_n\} \cup B}{p(\tuple{u})} 
\!\in\! \CArg$. We have that $\alpha$ and $\beta$  have common 
constrained instances
iff there exist substitutions $\vartheta_1$ and $\vartheta_2$ such that
$\ctheory \models p(\tuple{t})\vartheta_1\!\leftrightarrow\!
p(\tuple{u})\vartheta_2$, $\ctheory \models A\vartheta_1\!\leftrightarrow \!B\vartheta_2$, and 
$\ctheory \models \exists
((c_1\wedge\ldots\wedge c_m)\vartheta_1 \wedge (d_1\wedge\ldots\wedge d_n )\vartheta_2)$.
\end{proposition}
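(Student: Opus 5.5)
The plan is to unfold Definition~\ref{def:nongrouncommon} directly. By that definition, $\alpha$ and $\beta$ have common constrained instances iff there are $\alpha'\in\GrCInst(\alpha)$ and $\beta'\in\GrCInst(\beta)$ with $\alpha'=\beta'$ modulo ground constraints. By Definitions~\ref{def:instance} and~\ref{def:consistentgrounding-final}, such ground instances have the form
\[
\alpha'=\argu{\{c_1,\ldots,c_m\}\vartheta_1\cup C'\cup A\vartheta_1}{p(\tuple{t})\vartheta_1},\qquad
\beta'=\argu{\{d_1,\ldots,d_n\}\vartheta_2\cup D'\cup B\vartheta_2}{p(\tuple{u})\vartheta_2},
\]
for grounding substitutions $\vartheta_1,\vartheta_2$ and sets $C',D'$ of ground constraints with each constraint set consistent. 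By Definition~\ref{def:mod-groundConstraints}, equality modulo ground constraints amounts exactly to $\ctheory\models p(\tuple{t})\vartheta_1\leftrightarrow p(\tuple{u})\vartheta_2$ and $\ctheory\models A\vartheta_1\leftrightarrow B\vartheta_2$. So the two conditions on claims and assumptions come for free in both directions, and the only real work is matching the consistency conditions.

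\emph{Only-if direction.} Take $\vartheta_1,\vartheta_2$ from the ground instances. Since these substitutions are grounding and the constraint sets are consistent, we have $\ctheory\models(c_1\wedge\ldots\wedge c_m)\vartheta_1$ and $\ctheory\models(d_1\wedge\ldots\wedge d_n)\vartheta_2$. Hence their conjunction is valid, and so is its existential closure.

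\emph{If direction.} Here the given $\vartheta_1,\vartheta_2$ need not be grounding, so I must ground them. The hypothesis $\ctheory\models\exists((c_1\wedge\ldots\wedge c_m)\vartheta_1\wedge(d_1\wedge\ldots\wedge d_n)\vartheta_2)$ provides witnesses for the remaining free variables. I would extend this to a grounding substitution $\sigma$ on all remaining variables of $\alpha\vartheta_1$ and $\beta\vartheta_2$, and set $\vartheta_1'=\vartheta_1\sigma$ and $\vartheta_2'=\vartheta_2\sigma$. Applying the single $\sigma$ to both sides preserves the two equivalences $\ctheory\models p(\tuple{t})\vartheta_1\leftrightarrow p(\tuple{u})\vartheta_2$ and $\ctheory\models A\vartheta_1\leftrightarrow B\vartheta_2$, since these are validities of (universally closed) equalities. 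The resulting ground instances $\alpha\vartheta_1'$ and $\beta\vartheta_2'$, taken with $D=\emptyset$, have consistent constraints. They are therefore in $\GrCInst(\alpha)$ and $\GrCInst(\beta)$, and they are equal modulo ground constraints.

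\emph{Main obstacle.} The delicate step is that existential witnesses are domain elements of $D$, whereas substitutions map to terms. To handle this I would appeal to the paper's standing identification of witnesses with ground terms, the same move used in the proof of Theorem~\ref{theo:nongroundattacks} where a tuple $\tuple{t_g}$ of ground terms is chosen. A further point to check is that $\sigma$ may be applied on shared variables: if $\vartheta_1$ and $\vartheta_2$ leave the same variable free on both sides, one common $\sigma$ handles this consistently.
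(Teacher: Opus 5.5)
Your proposal is correct and follows the paper's proof essentially step for step. The only-if direction reads $\vartheta_1,\vartheta_2$ off the two ground instances and uses validity of the ground constraint sets. The if direction grounds both sides with a single substitution $\sigma$ that witnesses the existential closure. Your remarks on reading the claim and assumption equivalences as universally closed, and on existential witnesses being denoted by ground terms, make explicit two points the paper leaves implicit in its ``by construction''.
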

\vspace*{-3mm}
\begin{proof}
Let $\Gamma=\GrCInst(\alpha)$  and $\Delta=\GrCInst(\beta)$.

 (\emph{If part})  
 Assume that there exist two  substitutions $\vartheta_1, \vartheta_2$ such that
 $\ctheory \models p(\tuple{t})\vartheta_1\!\leftrightarrow\! 
p(\tuple{u})\vartheta_2$, $\ctheory \models  A\vartheta_1\!\leftrightarrow\!B\vartheta_2$, and 
$\ctheory \models \exists(
(c_1\wedge\ldots\wedge c_m)\vartheta_1 \wedge (d_1\wedge\ldots\wedge d_n )\vartheta_2)$. Then there exists a 
grounding substitution $\sigma$ such that  $\mathit{vars}(\alpha\vartheta_1\sigma) = \mathit{vars}(\beta\vartheta_2\sigma)=\emptyset$ 
and
$\ctheory \models 
((c_1\wedge\ldots\wedge c_m)\vartheta_1 \wedge (d_1\wedge\ldots\wedge d_n )\vartheta_2)) \sigma$.  
By construction,
$\alpha\vartheta_1\sigma \in \Delta$, $\beta\vartheta_2\sigma \in \Gamma$, 
$\alpha\vartheta_1\sigma =\beta\vartheta_2\sigma$ {modulo ground constraints} and we get the thesis. 
 
 (\emph{Only-if part}) Assume that $\alpha$ and $\beta$ have common constrained instances. 
By Definition
\ref{def:nongrouncommon}
{there exist   
   $\gamma \!\in\! \Gamma$ and 
   $\gamma'\!\in\! \Delta$ such that $\gamma\!=\!\gamma'$ modulo ground constraints.} By Definitions~\ref{def:constrained-arg} and~\ref{def:consistentgrounding-final}, there exist two consistent sets 
    $C_1$ and  $C_2$ of ground constraints and two {grounding} substitutions
   $\vartheta_1$ and $\vartheta_2$ such that:\\ 
\makebox[8mm][r]{$(i)$}~$\gamma\, = \argu{\{c_1,\ldots,c_m\}\vartheta_1 \cup C_1 \cup A\vartheta_1}{p(\tuple{t})\vartheta_1}$, \\ 
\makebox[8mm][r]{$(ii)$}~$\gamma'= \argu{\{d_1,\ldots,d_n\}\vartheta_2 \cup C_2 \cup B\vartheta_2}{p(\tuple{u})\vartheta_2}$,\\
\makebox[8mm][r]{$(iii)$}~$\mathit{vars}(\gamma)\!=\mathit{vars}(\gamma')\!=\!\emptyset$, \\
\makebox[42mm][l]{\makebox[8mm][r]{$(iv)$}~$\ctheory \models (c_1\wedge \ldots\wedge c_m )\vartheta_1\cup C_1$,}\\
\makebox[8mm][r]{$(v)$}~$\ctheory \models (d_1\wedge \ldots\wedge d_n )\vartheta_2\cup C_2$, \\
\makebox[42mm][l]{\makebox[8mm][r]{$(vi)$}~$\ctheory \models A\vartheta_1  \leftrightarrow
B\vartheta_2$,\ \ and}\\
\makebox[8mm][r]{$(vii)$}~$ \ctheory \models p(\tuple{t})\vartheta_1 \leftrightarrow p(\tuple{u})\vartheta_2$.\\
   Now the thesis follows by observing that from $(iii)$--$(v)$, we get 
   $\ctheory \models ((c_1\wedge \ldots\wedge c_m )\vartheta_1 \wedge (d_1\wedge \ldots\wedge d_n )\vartheta_2)$.  
\end{proof}

}
{
}

\end{document}